\title[Decentralized Cooperative RL with Hierarchical Information Structure]{Decentralized Cooperative Reinforcement Learning with \\ Hierarchical Information Structure}
\newcommand{\eqnum}{\leavevmode\hfill\refstepcounter{equation}\textup{\tagform@{\theequation}}}
\newcounter{protocol}
\newtheorem{thm}{Theorem}
\newtheorem{cor}[thm]{Corollary}
\newtheorem{lem}[thm]{Lemma}
\newtheorem{ass}[thm]{Assumption}
\newcommand{\pref}[1]{\prettyref{#1}}
\newcommand{\savehyperref}[2]{\texorpdfstring{\hyperref[#1]{#2}}{#2}}
   \def\dl{\delta}
\def\Ic{\mathcal{I}}
\def\E{\mathbb{E}}   
\def\I{\mathbb{I}}   
   \def\P{\mathbb{P}}
\def\aB{\mathbf{a}}
 \def\Rf{\mathfrak{R}}
\def\muh{\hat{\mu}}
\newcommand\tsup[2][2]{%
 \def\useanchorwidth{T}%
  \ifnum#1>1%
    \stackon[-.5pt]{\tsup[\numexpr#1-1\relax]{#2}}{\scriptscriptstyle\sim}%
  \else%
    \stackon[.5pt]{#2}{\scriptscriptstyle\sim}%
  \fi%
}
\newlength{\dhatheight}
\def\es{\enspace}
\def\1{\mathbf{1}}
\def\0{\mathbf{0}}
\def\eqsp{{\hspace{1.35em}}}
\newcommand{\rn}[1]{\textup{\lowercase\expandafter{\romannumeral#1}}}
\def\bms{\begin{bmatrix}}
\def\bme{\end{bmatrix}}
\def\beq{\begin{equation}}
\def\eeq{\end{equation}}
\def\bal{\begin{equation}\begin{aligned}}
\def\eal{\end{aligned}\end{equation}}
\def\bals{\begin{equation*}\begin{aligned}}
\def\eals{\end{aligned}\end{equation*}}
\newcommand\numberthis{\addtocounter{equation}{1}\tag{\theequation}}
\definecolor{Green}{rgb}{0.13, 0.65, 0.3}
\newcommand{\otil}{\widetilde{\order}}
\DeclareMathOperator*{\argmax}{argmax} 
\newcommand{\calA}{\mathcal{A}}
\newcommand{\calB}{\mathcal{B}}
\newcommand{\calF}{\mathcal{F}}
\newcommand{\calS}{\mathcal{S}}
\newcommand{\calV}{\mathcal{V}}
\newcommand{\alg}{\scalebox{0.9}{\textsf{ALG}}\xspace}
\newcommand{\hatP}{\hat{P}}
\newcommand{\hatp}{\hat{p}}
\newcommand{\hatR}{\hat{R}}
\newcommand{\term}{\textbf{term}}
\newcommand{\order}{\mathcal{O}}
\newcommand{\one}{\mathbf{1}}
\newcommand{\bns}{\textup{\textsf{bns}}}
\newcommand{\ri}{r_{t_i,h}}
\newcommand{\bi}{b_{t_i,h}}
\newcommand{\si}{s_{t_i, h+1}}
\newcommand{\nonl}{\renewcommand{\nl}{\let\nl}}
\newcommand{\Reg}{\textup{Reg}\xspace}
\newcommand{\leader}{\text{Leader}\xspace}
\newcommand{\follower}{\text{Follower}\xspace}
\begin{document}

\maketitle

\begin{abstract}
Multi-agent reinforcement learning (MARL) problems are challenging due to information asymmetry. To overcome this challenge, existing methods often require high level of coordination or communication between the agents. We consider two-agent multi-armed bandits (MABs) and Markov decision processes (MDPs) with a hierarchical information structure arising in applications, which we exploit to propose simpler and more efficient algorithms that require no coordination or communication. In the structure, in each step the ``leader" chooses her action first, and then the ``follower" decides his action after observing the leader's action. The two agents observe the same reward (and the same state transition in the MDP setting) that depends on their joint action. For the bandit setting, we propose a hierarchical bandit algorithm that achieves a near-optimal gap-independent regret of $\otil(\sqrt{ABT})$ and a near-optimal gap-dependent regret of $\order(\log(T))$, where $A$ and $B$ are the numbers of actions of the leader and the follower, respectively, and $T$ is the number of steps. We further extend to the case of multiple followers and the case with a deep hierarchy, where we both obtain near-optimal regret bounds. For the MDP setting, we obtain $\otil(\sqrt{H^7S^2ABT})$ regret, where $H$ is the number of steps per episode, $S$ is the number of states, $T$ is the number of episodes. This matches the existing lower bound in terms of $A, B$, and $T$. 
\end{abstract}

\begin{keywords}
hierarchical information structure, multi-agent online learning, multi-armed bandit, Markov decision process
\end{keywords}

\section{Introduction}\label{sec:intro}

Multi-agent reinforcement learning (MARL) has received great attention due to its wide variety of applications and the tremendous advances in single-agent RL techniques \citep{Zhang_MARLSurvey_2019}. In a multi-agent environment, each agent has different observations and may have different sets of information. This is referred to as the \emph{information asymmetry} property \citep{Chang_MAMAB_2021}. One straightforward method used with information asymmetry is to let every agent concurrently learn based on its own information using single-agent algorithms. However, this creates the \emph{non-stationarity issue} since the effective environment observed by each agent is time-varying, which sometimes causes non-convergence of the algorithms. Another line of solutions is to enforce coordination among agents, essentially transforming a multi-agent system back to (or making it more similar to) a single-agent one. One way to achieve this is through communication \citep{Shahrampour_MAMAB_2017,Zhang_NetMARLSurvey_2019}, which introduces extra costs that may be intolerable in some cases. A more broadly applicable scheme is through the common information (CI) approach \citep{Demos_CI_2013, Chang_MAMAB_2021,Dibangoye_CentralLearn_2018}. The CI approach relies on a set of CI shared by all agents, and all agents need to agree on a protocol that specifies the joint policy updates of all agents upon receiving a certain piece of CI. With this protocol, an agent may be able to infer the actions taken by other agents without observing them. However, this approach has several shortcomings, making it hard to apply in practice: it has high computational complexity (since every agent has to perform policy updates for all the other agents), and it requires all agents to have very tight coordination (e.g., sharing randomization seeds in each round and knowing all details in the algorithms of other agents), which may be infeasible if synchronization among agents or agent privacy is an issue. 


In this paper, we address the aforementioned issues in a special but widely applicable MARL setting. We consider MARL team problems with a particular hierarchical information structure between agents under the settings of multi-agent multi-armed bandits (MAMABs) and multi-agent Markov decision processes (MDPs). In this structure, decisions are made sequentially, and a decision maker has all the decisions from decision makers that act before it in the sequence. In the two-agent case, one of the agents (the ``leader") chooses her action first, while the other agent (the ``follower") chooses his action after observing the leader's action. This setting is similar to the Stackelberg game but with the players cooperating to achieve the same objective. Such hierarchical information structure arises in many applications. For example, in a cognitive radio (CR) wireless network, the primary user (PU) first decides its resource allocation scheme; then based on this scheme the secondary user (SU) chooses its own resource allocation scheme that minimally interferes with the PU's transmission \citep{Ning_CRStackelberg_2020}. While this problem can also be solved using the CI approach or other MAMAB algorithms \citep{Kalathil_Matching_2014,Chang_MAMAB_2021}, as discussed earlier, they are intensive either in computation or in communication. In this work, we exploit the hierarchical information structure, and propose simpler and more efficient MARL algorithms that require neither communication nor explicit coordination, while achieving near-optimal regret bounds. Such algorithms could be much easier to deploy in practice.

In more detail, we first consider the two-agent bandit setting, where both agents observe the same reward determined by their \emph{joint action}, but only the follower observes the leader's action but not \emph{vice versa}. For this setting, we propose a decentralized algorithm that achieves a near-optimal gap-independent regret bound of $\otil(\sqrt{ABT})$\footnote{We use $\otil(\cdot)$ to hide poly-logarithmic factors.} and a gap-dependent bound of $\order(\log(T))$, where $A$ and $B$ are the numbers of actions of the leader and the follower, respectively, and $T$ is the number of time-steps. In our method, the leader performs an Upper Confidence Bound (UCB)-based algorithm \citep{auer2002finite} with a modified bonus term related to the follower's regret bound, while the follower can use any algorithm that achieves sub-linear regret. Without explicit coordination, the agents perform joint exploration over the action space and learn with low regret.
Interestingly, our hierarchical bandit setting mathematically coincides with the bandit-over-bandit framework considered in model selection \citep{agarwal2017corralling, arora2021corralling, cutkosky2021dynamic}, although the two problems are motivated by very different applications. Our algorithm can be readily used for model selection, and our gap-dependent bound (\pref{thm: gap dependent bound}) answers the open question in \citep{arora2021corralling} by improving their regret bound by a factor of $\log(T)$ (to our knowledge, \citep{arora2021corralling} is the only work on model selection that achieves a logarithmic gap-dependent bound). We further extend our idea to two more complicated settings. The first is the case of multiple followers, where each follower only observes its own reward while the leader only observes the sum of the rewards of all followers. The other is the case with a deep hierarchy, where more than two agents make decisions sequentially based on the decisions made by prior agents, and all agents observe the same reward.  In both extensions, our algorithms also achieve near-optimal regret bounds. 

Next, we generalize the above idea to the two-agent MDP setting. In this setting, the state evolution and reward observable by both agents are sampled from distributions depending on the current state and the joint action of the agents; as before, the follower observes the leader's action but not vice versa. Similar to the bandit case, we propose a decentralized learning method that enables the agents to perform joint exploration without communication or explicit coordination. Our algorithm is based on an intriguing combination of two exploration strategies developed for single-agent reinforcement learning: UCB-H \citep{UCB-H_2018} and UCBVI \citep{UCBVI_2017}. 
By letting the leader execute a UCB-H-styled algorithm and the follower use a UCBVI-styled one, the agents jointly achieve a regret upper bound of $\otil(\sqrt{H^7S^2ABT})$ ($H$ is the horizon length, and $T$ is the number of episodes), while the regret lower bound is $\Omega(\sqrt{H^2SABT})$, inherited from the single-agent MDP setting \citep{UCBVI_2017}. Tightening our bound without sacrificing the benefit of decentralized learning is left as an open question.

\paragraph{Related work.}
Algorithms for various MAMAB settings have gained increasing interest recently, but there is only a limited literature that investigates the effects and challenges caused by information asymmetry in the setting where agents jointly interact with the environment as is common in MARL applications, with the corresponding MDP setting receiving even less attention. \cite{Chang_MAMAB_2021} study the MAMAB setting where the reward is determined by the joint action with three types of information asymmetry: unobserved actions and common rewards, observed actions and independent rewards, and unobserved actions and independent rewards; the first two settings can be solved by the notion of the CI approach\footnote{Their mUCB algorithm for the first setting is equivalently the CI approach in combination with the UCB1 algorithm.}, while in the last setting they propose an ``explore then commit" type algorithm that achieves an $\order\left(\log(T)\right)$ regret. \cite{Jin_Stackelberg_2021} consider sample-efficient learning in bandit games and bandit-RL games. Their bandit game corresponds to our hierarchical bandits, but under a general reward setup (i.e., in their setting, the rewards of the two agents have different means, unlike the common-reward setting we consider here). 
They consider centralized and offline learning assuming access to a sample generator, while we consider decentralized and online learning through interactions with the environment. While their results imply a worst case information-theoretic gap to the Stackelberg game value that cannot be closed, it is not the case in our team problem. On the other hand, our hierarchical MDP has a more general transition structure than their bandit-RL game (in their setting, only the follower is involved in an MDP). 
\cite{arora2021corralling} study meta-learning over bandit algorithms, which exhibits a similar mathematical structure to our hierarchical bandit problem, though from a very different perspective. Our hierarchical bandit algorithm can be readily used as an algorithm for their setting,  
and our gap-dependent bound improves their $\order(\log^2 (T))$ bound by a factor of $\log(T)$, resolving their question on the tightness of their result.

The readers may refer to \pref{app: more-relate} for more literature survey on the topic.

\section{Preliminaries}\label{sec:prelim}

We first define some notation. For a positive integer $n$, we denote $[n]=\{1, 2, \ldots, n\}$. For an integer $n$, we define $n^+=\max\{n, 1\}$. 

\paragraph{Two-agent hierarchical bandits.}
Consider a two-agent MAB where the rewards are decided by the joint action of the two agents U1 (leader) and U2 (follower). Let $A$ and $B$ be the numbers of actions (which are arms in the context) of U1 and U2, respectively, and $T$ be the number of time steps. Without loss of generality, we assume that $A, B\leq T$. Under the hierarchical information structure, in round $t\in[T]$, U1 first chooses an action $a_t\in [A]$; after observing U1's action $a_t$, U2 then chooses another action $b_t\in[B]$. However, U1 cannot observe U2's action $b_t$. These two actions jointly generate a noisy reward $r_t\in[0,1]$ with expectation $\mu_{a_t, b_t}$, and both agents observe\footnote{Our analysis can straightforwardly handle a more general case where U1 and U2 receive different (independent) noisy copies of the reward with the same mean. For simplicity, we assume that they receive the same copy.} $r_t$. 
 For ease of presentation, we assume without loss of generality that the best action of U2 given any choice of U1 is indexed by $1$, i.e., $\mu_{a,1}\geq \mu_{a,b}$ for all $a,b$; similarly, the best action of U1 is indexed by $1$, i.e., $\mu_{1,1}\geq \mu_{a,1}$ for all $a$. Then the (common) goal of the agents is to minimize the pseudo-regret defined as follows: 
\begin{align*}
    \Reg(T) = \sum_{t=1}^T (\mu_{1,1} - \mu_{a_t,b_t}).  
\end{align*}

\paragraph{Two-agent hierarchical MDPs.}
Consider a two-agent $H$-step finite-horizon MDP where the rewards and state transitions depend on the joint action of the two agents U1 and U2, with the process run over $T$ episodes. This generalizes the previous two-agent bandit setting. The state space is $\calS$, with a number of $S=|\calS|$ states. In each state, U1 and U2 choose actions from $[A]$ and $[B]$, respectively. We assume that $S, A, B, H$ are all upper bounded by $T$. Every episode $t$ starts with an initial state $s_{t,1}\in\calS$. In the $h$-th step of the $t$-th episode, the agents first observe $s_{t,h}\in \calS$. Under the hierarchical information structure, U1 chooses an action $a_{t,h}\in[A]$, followed by U2 choosing another action $b_{t,h}\in[B]$ upon seeing $a_{t,h}$. After the actions are chosen, both agents receive a reward $r_{t,h}\in[0,1]$ with $\E[r_{t,h}]=R(s_{t,h}, a_{t,h}, b_{t,h})$, and then the state transitions to the next state $s_{t,h+1}\sim P(\cdot|s_{t,h},a_{t,h},b_{t,h})$. The episode ends right after the state transitions to $s_{t,H+1}$. In the RL setting we consider, rewards are commonly observed by both agents, but they do not know the reward function $R$ or the transition probability $P$.

An $H$-step policy for U1 can be represented as $\pi^1=\{\pi^1_1, \ldots, \pi^1_H\}$, where $\pi_h^1: \calS\rightarrow [A]$ specifies the choice of her action on each state when she is at step $h$; a policy for U2 can be represented as $\pi^2=\{\pi^2_1, \ldots, \pi^2_H\}$, where $\pi_h^2: \calS\times [A]\rightarrow [B]$ specifies the choice of his action on each state and under each possible choice of U1, when he is at step $h$. We define the state value function at step $h$ under a policy pair $(\pi^1, \pi^2)$ as 
\begin{align*}
     V_h^{\pi^1, \pi^2}(s) = \scalebox{0.9}{$\displaystyle\E\left[ \sum_{k=h}^H R(s_{k}, a_k, b_k) ~\bigg|~ s_h = s, a_k=\pi_k^1(s_k), b_k = \pi_k^2(s_k, a_k), s_{k+1}\sim P(\cdot|s_k,a_k,b_k), \forall k\geq h \right]. $}
\end{align*}
with $V_{H+1}^{\pi^1, \pi^2}(\cdot)\triangleq 0$. Also, we define the state-action value function as 
\begin{align*}
	Q_h^{\pi^1, \pi^2}(s,a,b)= R(s, a, b) + \E \Bigg[ V_{h+1}^{\pi^1, \pi^2}(s_{h+1}) \bigg| s_{h+1} \sim P(\cdot | s, a, b) \Bigg].
\end{align*}
The optimal value functions are then given by $V_{*,h}(s)=\max_{\pi^1, \pi^2} V_h^{\pi^1, \pi^2}(s)$ and $Q_{*,h}(s, a, b)=\max_{\pi^1, \pi^2} Q_h^{\pi^1, \pi^2}(s, a, b)$. By dynamic programming, we have the following for all $h, s, a, b$: 
\begin{align*}
    V_{*, h}(s) = \max_{a,b} Q_{*,h}(s,a,b) \quad \text{and}\quad
    Q_{*,h}(s,a,b) = R(s,a,b) + \E_{s'\sim P(\cdot|s,a,b)}\left[V_{*,h+1}(s')\right], 
\end{align*}
with $V_{*,H+1}(\cdot)\triangleq 0$. 
We further define $Q_{*,h}(s,a) = \max_b Q_{*,h}(s,a,b)$.  With this notation, we can write the regret of the agents as
\begin{align*}
    \Reg(T) = \sum_{t=1}^T \left(V_{*,1}(s_{t,1}) - V^{\pi^1_t, \pi^2_t}_1(s_{t,1})\right). 
\end{align*}

\paragraph{Benchmark.}
In two-agent cases, there are three obvious types of information structure in terms of \emph{action information asymmetry}: the complete information setting, the no information setting, and the hierarchical setting considered in this paper. Using the CI approach, one may achieve the lower regret bounds of $\otil(\sqrt{ABT})$ for the bandit setting and $\otil(\sqrt{H^2SABT})$ for the MDP setting, with higher complexity and stronger assumptions. For details see \pref{app: CI-approach}.
\section{Learning Hierarchical Bandits}\label{sec: bandit}

Since U1 does not observe U2's actions, it is unclear how U1 can utilize or interpret the samples she receives. For example, if U1 receives a low reward, one possibility is that U1 has chosen a bad action, so whatever action U2 chooses, the reward is going to be low; but it is also possible that the action chosen by U1 is actually good (i.e., the reward would be high if U2 chose a good subsequent action), but U2 has chosen a bad subsequent action. If the identity of U2's action is not revealed, in general, U1 cannot distinguish these two cases. As mentioned in \pref{sec:intro}, this issue can be resolved by the CI approach, which enables U1 to infer the actions taken by U2, even if they are not directly observable.

We show that providing U2 executes a \emph{no-regret} algorithm (i.e., an algorithm that always guarantees a sub-linear-in-$T$ regret), we can actually make the agents converge to playing optimal actions while keeping U1 agnostic to U2's actions during the whole process. The key observation is that when U2 is a no-regret learning agent, his choices of action under a given action of U1 will converge to the best one, hence avoiding the second possibility mentioned above in the long run. 

The following \pref{ass: U2's algorithm} specifies the condition that U2's algorithm should satisfy, for which using an MAB algorithm with near-optimal, high-probability regret guarantee for each action $a\in[A]$ is enough. Some existing algorithms satisfying this assumption are: UCB1 \citep{auer2002finite}, KL-UCB \citep{garivier2011kl}, EXP3.P \citep{auer2002nonstochastic}, EXP3-IX \citep{neu2015explore}. \footnote{While in \pref{ass: U2's algorithm} we assume that U2 uses a near-optimal algorithm for MAB for ease of presentation, our framework can also handle the case where he uses some sub-optimal algorithm (the final regret bound will also be sub-optimal though).  }

\begin{ass}\label{ass: U2's algorithm}
    U2 guarantees the following for some universal constant $\kappa\geq 1$ with probability at least $1-\delta$: 
    \begin{align*}
        \forall t, a, \qquad \sum_{\tau=1}^{t} \I[a_\tau=a](\mu_{a,1} - \mu_{a, b_\tau}) \leq \sqrt{\kappa B\sum_{\tau=1}^t \I[a_\tau=a] \log(T/\delta)}.  
    \end{align*}
\end{ass}

With \pref{ass: U2's algorithm}, our algorithm is presented in \pref{alg: two-stage alg}. In \pref{alg: two-stage alg}, U2 simply executes the specified algorithm. On the other hand, U1 follows a selection rule that is very similar to the UCB1 algorithm of \cite{auer2002finite}. Specifically, in the beginning of each round $t$, U1 calculates the \emph{empirical mean} of each of her actions: 
\begin{align}
    \muh_t(a) = \frac{1}{n_t(a)^+}\sum_{\tau=1}^{t-1} \I[a_\tau=a] r_\tau, \qquad \text{where\ } n_t(a)=\sum_{\tau=1}^{t-1}\I[a_\tau=a]
    \label{eq: algorithm 1 specification}   
\end{align}
Also, U1 calculates the \emph{bonus} for action $a$, which is simply the \emph{average regret upper bound} of U2 on action $a$ up to time $t-1$ (by \pref{ass: U2's algorithm}, the regret on action $a$ up to time $t-1$ is upper bounded by $\sqrt{\kappa Bn_t(a)\log(T/\delta)}$), plus a term given by the Hoeffding bound (see \pref{line: U1 choice} of \pref{alg: two-stage alg}). Then U1 simply chooses the arm with the largest empirical mean plus bonus. While appearing similar, it is not the same as the standard UCB1 algorithm. First, the empirical mean $\muh_t(a)$ is an average over samples that are not identically distributed since each $r_t$ also depends on the action chosen by U2. Second, the construction of the bonus term involves the \emph{regret upper bound} of U2, in addition to another term implied by concentration inequalities. However, our algorithm is indeed inspired by the UCB1 algorithm --- in UCB1, the bonus term is designed to fulfill the following two properties: 1) for every arm $a$, the empirical mean of reward plus bonus should upper bound the true mean of reward with high probability; 2) the sum of bonus of the chosen arms in rounds $t=1, 2, \ldots, T$ should be sub-linear in $T$. With both properties, the UCB1 algorithm is guaranteed to have sub-linear regret. 

In our case, we shall identify $\mu_{a,1}$ as the ``true mean of reward'' of arm $a$. To satisfy the first property above, we would like to add a bonus term that upper bounds $\mu_{a,1} - \muh_t(a)$. Note that $\muh_t(a)$ is the mean of reward of U2 in the rounds when U1 chooses $a$, so $\mu_{a,1} - \muh_t(a)$ is simply the \emph{average regret} of U2 under U1's action $a$. Therefore, adding the regret bound of U2 as the bonus of U1 gives the first property above. The second property is also satisfied as long as U2 uses algorithms with sub-linear regret guarantees. 

We remark that it is important for U1 to use a slightly larger bonus (i.e., $\otil\big(\sqrt{B/n_t(a)}\big)$) rather than the standard one she would use when she plays alone (i.e., $\otil\big(\sqrt{1/n_t(a)}\big)$). This extra $\sqrt{B}$ factor makes the bonus of U1 decrease at a slower rate, leading to more exploration on each of U1's actions, and allowing U2 to have enough time to find the best action under each of U1's actions.  



\begin{algorithm}[t]
\caption{UCB for Hierarchical Bandits}\label{alg: two-stage alg}
\nl \textbf{define}: $c>0$ is a universal constant.   \\
\nl  U2 starts running algorithms satisfying \pref{ass: U2's algorithm} (or \pref{ass: U2's algorithm gap}) with some $\kappa\geq 1$ 
(we denote the instance of algorithm under action $a\in[A]$ as $\alg(a)$). \\
\nl  \For{$t=1, 2, \dots,T$}{
\nl \label{line: U1 choice} U1 chooses $ a_t\in\underset{a\in[A]}{\argmax}\ \muh_t(a)+\sqrt{\frac{\kappa B\log(T/\dl)}{n_t(a)^+}} + c\sqrt{\frac{\log(T/\delta)}{n_t(a)^+}}$.\  \  ($\muh_t(a)$, $n_t(a)$ are defined in \pref{eq: algorithm 1 specification})\\
\nl  After observing $a_t$, U2 calls $\alg(a_t)$, which outputs an action $b_t$. \\
\nl  U2 chooses $b_t$. \\
\nl  U1 and U2 observe $r_t$, and U2 updates $\alg(a_t)$ using $r_t$. 
}
\end{algorithm}



The analysis of this algorithm is straightforward given the above intuition, which we show in the following theorem.
\begin{theorem}\label{thm: main theorem for bandit}
    Suppose that U2 uses algorithms that satisfy \pref{ass: U2's algorithm}. Then \pref{alg: two-stage alg} guarantees that with probability at least $1-\order(\delta)$,  $\Reg(T) = \order\big(\sqrt{ABT\log(T/\delta)}\big)$. 
\end{theorem}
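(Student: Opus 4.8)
The plan is to run the standard UCB1 argument with $\mu_{a,1}$ playing the role of the ``true mean'' of U1's arm $a$. First I would split
\[
\Reg(T) = \sum_{t=1}^T (\mu_{1,1} - \mu_{a_t,1}) + \sum_{t=1}^T (\mu_{a_t,1} - \mu_{a_t,b_t}),
\]
where both summands are nonnegative by the indexing convention. The second sum is the follower's regret aggregated over rounds; grouping by U1's action and applying \pref{ass: U2's algorithm} at $t=T$ for each $a$ bounds it by $\sum_{a}\sqrt{\kappa B\, n_{T+1}(a)\log(T/\delta)}$, which by Cauchy--Schwarz (using $\sum_a n_{T+1}(a) = T$) is at most $\sqrt{\kappa ABT\log(T/\delta)}$. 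So the follower's contribution is already of the claimed order, and the real work is bounding the ``leader'' sum.

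For the leader sum I would first establish optimism: with probability at least $1-\order(\delta)$, the index computed in \pref{line: U1 choice} satisfies $\muh_t(a) + \sqrt{\kappa B\log(T/\delta)/n_t(a)^+} + c\sqrt{\log(T/\delta)/n_t(a)^+} \ge \mu_{a,1}$ for all $t,a$. The point is to write $\mu_{a,1} - \muh_t(a)$ as the sum of (i) the average follower regret on $a$, namely $\tfrac{1}{n_t(a)^+}\sum_{\tau<t}\I[a_\tau=a](\mu_{a,1}-\mu_{a,b_\tau})$, and (ii) the average martingale noise $\tfrac{1}{n_t(a)^+}\sum_{\tau<t}\I[a_\tau=a](\mu_{a,b_\tau}-r_\tau)$: term (i) is at most $\sqrt{\kappa B\log(T/\delta)/n_t(a)^+}$ directly from \pref{ass: U2's algorithm}, and term (ii) is at most $c\sqrt{\log(T/\delta)/n_t(a)^+}$ by an Azuma--Hoeffding bound for the martingale-difference sequence $\{\mu_{a,b_\tau}-r_\tau\}$, after a union bound over $a\in[A]$ and over the possible values $n\in[T]$ of the random pull count $n_t(a)$. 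The same concentration also gives the one-sided bound $\muh_t(a) - \mu_{a,1} \le c\sqrt{\log(T/\delta)/n_t(a)^+}$, since term (i) only helps here (it is nonnegative).

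Granting optimism, the rest is routine bookkeeping. Since U1 maximizes the index and arm $1$ is feasible, the index of $a_t$ is at least that of $1$, which is at least $\mu_{1,1}$, so
\[
\mu_{1,1} - \mu_{a_t,1} \le \bigl(\muh_t(a_t) - \mu_{a_t,1}\bigr) + \sqrt{\tfrac{\kappa B\log(T/\delta)}{n_t(a_t)^+}} + c\sqrt{\tfrac{\log(T/\delta)}{n_t(a_t)^+}} = \order\!\left(\sqrt{\tfrac{B\log(T/\delta)}{n_t(a_t)^+}}\right),
\]
using $\kappa\ge 1$ and the one-sided bound above. Summing over $t$ and using $\sum_t 1/\sqrt{n_t(a_t)^+} = \order\bigl(\sum_a \sqrt{n_{T+1}(a)}\bigr) = \order(\sqrt{AT})$ (Cauchy--Schwarz again, via $\sum_a n_{T+1}(a)=T$) yields $\sum_t(\mu_{1,1}-\mu_{a_t,1}) = \order(\sqrt{ABT\log(T/\delta)})$. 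Adding the two pieces and taking a union bound over the two failure events (the one in \pref{ass: U2's algorithm} and the concentration event, each of probability $\order(\delta)$) finishes the proof. The only genuinely delicate step is the concentration bound on term (ii): because the rewards averaged in $\muh_t(a)$ are neither independent nor identically distributed and $n_t(a)$ is random, it must be phrased as a uniform martingale inequality over arms and pull counts rather than a plain Hoeffding bound — everything else is routine UCB analysis.
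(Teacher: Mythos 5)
Your proof is correct and follows essentially the same route as the paper's: you establish optimism of U1's index by combining \pref{ass: U2's algorithm} (average follower regret on arm $a$) with an Azuma bound on the non-i.i.d.\ empirical mean, invoke the selection rule to compare against arm $1$, and control the follower's aggregate regret and the bonus sum via Cauchy--Schwarz over the pull counts --- exactly the ingredients of the paper's chain of inequalities and its $\term_1/\term_2$ split. The only difference is cosmetic: you decompose $\Reg(T)$ into leader and follower parts up front and state optimism as a standalone claim, whereas the paper runs a single chain of inequalities and separates the two terms at the end.
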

\begin{proof}
\allowdisplaybreaks
       \begin{align*}
         &\sum_{t=1}^T \left(\mu_{1,1} - \mu_{a_t,b_t}\right) \\
         &\leq \sum_{t=1}^T \left( \frac{1}{n_t(1)^+}\sum_{\tau=1}^{t-1}\I[a_\tau=1]\mu_{1,b_\tau} + \sqrt{\frac{\kappa B\log(T/\delta)}{n_t(1)^+}} - \mu_{a_t,b_t}\right) \tag{by \pref{ass: U2's algorithm} with $a=1$} \\
         &\leq \sum_{t=1}^T \left( \muh_t(1) + c\sqrt{\frac{\log(T/\delta)}{n_t(1)^+}} + \sqrt{\frac{\kappa B\log(T/\delta)}{n_t(1)^+}} - \mu_{a_t,b_t}\right) \tag{by \pref{lem: UCB concentration}} \\
         &\leq \sum_{t=1}^T \left( \muh_t(a_t) + c\sqrt{\frac{\log(T/\delta)}{n_t(a_t)^+}} +  \sqrt{\frac{\kappa B\log(T/\delta)}{n_t(a_t)^+}} - \mu_{a_t,b_t}\right)   \tag{by the selection rule of $a_t$} \\
         &= \sum_{t=1}^T \left( \muh_t(a_t) - \mu_{a_t,b_t}\right) + \order\left(\sqrt{ABT\log(T/\delta)}\right)  \\
         &= \underbrace{\sum_{a\in[A]}\sum_{t=1}^T \I[a_t=a]\left(\muh_t(a) -  \mu_{a,1} \right)}_{\term_1} + \underbrace{\sum_{a\in[A]}\sum_{t=1}^T \I[a_t=a]\left( \mu_{a,1} - \mu_{a_t,b_t}\right) }_{\term_2} + \order\left(\sqrt{ABT\log(T/\delta)}\right) 
    \end{align*}
    Notice that $\term_2=\order\left(\sum_{a\in[A]}\sqrt{n_{T+1}(a)\log(T/\delta)}\right) 
    = \order\left(\sqrt{ABT\log(T/\delta)}\right)$ due to \pref{ass: U2's algorithm}.  
    Besides, by Azuma's inequality, with probability at least $1-\order(\delta)$, for all $t$ and $a$,  
    \begin{align*}
        \muh_t(a) &= \frac{\sum_{\tau=1}^{t-1} \I[a_\tau=a]r_\tau }{n_t(a)^+}
        \leq \frac{\sum_{\tau=1}^{t-1} \I[a_\tau=a]\mu_{a,b_\tau}}{n_t(a)^+} + \order\left(\sqrt{\frac{\log(T/\delta)}{n_t(a)^+}}\right) \leq  \mu_{a,1} +  \order\left(\sqrt{\frac{\log(T/\delta)}{n_t(a)^+}}\right).  
    \end{align*}
    Therefore, 
    $
        \term_1 \leq \order\left(\sum_{a\in[A]}\sum_{t=1}^T \I[a_t=a] \sqrt{\frac{\log(T/\delta)}{n_t(a)^+}}\right) =\order\left(\sqrt{AT\log(T/\delta)}\right). 
    $
    Combining everything finishes the proof. 
\end{proof}

For MAB, there are also algorithms with refined gap-dependent regret bounds with only $\order(\log (T))$ dependence on $T$ (e.g., the UCB1 algorithm of \cite{auer2002finite}). Below we show that if U2 executes such algorithms, the overall regret can also be of order $\order(\log (T))$. Such algorithms satisfy the following assumption: 
\begin{ass}\label{ass: U2's algorithm gap}
    U2 guarantees the following for some universal constant $\kappa\geq 1$ with probability at least $1-\delta$:   
    \begin{align*}
        \forall t, a, \qquad \sum_{\tau=1}^{t} \I[a_\tau=a](\mu_{a,1} - \mu_{a,b_\tau}) \leq \min\left\{\kappa  \sum_{b\in\calB_a^{\times}}  \frac{ \log(T/\delta)}{\mu_{a,1}-\mu_{a,b}},\ \ \sqrt{\kappa B\sum_{\tau=1}^t \I[a_\tau=a] \log(T/\delta)} \right\}, 
    \end{align*}
    where $\calB_a^{\times}\triangleq \{b\in[B]: \mu_{a,1}-\mu_{a,b} > 0\}$ is the set of sub-optimal arms of U2 under U1's action $a$. 
\end{ass}
With \pref{ass: U2's algorithm gap}, \pref{alg: two-stage alg} has the following guarantee: 
\begin{theorem}\label{thm: gap dependent bound}
    Suppose that U2 uses algorithms that satisfy \pref{ass: U2's algorithm gap}. Then \pref{alg: two-stage alg} guarantees that with probability at least $1-\order(\delta)$,  
    \begin{align*}
        \Reg(T) = \order\left(\sum_{a\in\calA^{\times}} \frac{B\log (T/\delta)}{\mu_{1,1} - \mu_{a,1}} + \sum_{a\in \calA^{\circ}} \sum_{b\in\calB_a^{\times}} \frac{\log (T/\delta)}{\mu_{a,1}-\mu_{a,b}} \right),
    \end{align*}
    where $\calA^{\circ}\triangleq \{a\in[A]: \mu_{1,1}=\mu_{a,1}\}$,  $\calA^{\times}\triangleq \{a\in[A]: \mu_{1,1}-\mu_{a,1}>0\}=[A]\backslash \calA^{\circ}$, and $\calB_a^{\times}$ is as defined in \pref{ass: U2's algorithm gap}. 
\end{theorem}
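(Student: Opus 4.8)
The plan is to follow the skeleton of the proof of \pref{thm: main theorem for bandit}, but to replace the crude $\sqrt{ABT}$ counting of U1's plays by a per-arm elimination argument and to feed in the gap-dependent (first) branch of \pref{ass: U2's algorithm gap} for U2. Write $\mathrm{UCB}_t(a) := \muh_t(a) + \sqrt{\kappa B\log(T/\delta)/n_t(a)^+} + c\sqrt{\log(T/\delta)/n_t(a)^+}$ for U1's selection index. Using $\mu_{1,1}-\mu_{a,b_t}=(\mu_{1,1}-\mu_{a,1})+(\mu_{a,1}-\mu_{a,b_t})$ together with $\mu_{1,1}-\mu_{a,1}=0$ for $a\in\calA^{\circ}$, I first decompose
\[
\Reg(T) \;=\; \sum_{a\in\calA^{\times}}(\mu_{1,1}-\mu_{a,1})\,n_{T+1}(a) \;+\; \sum_{a\in[A]}\sum_{t=1}^T\I[a_t=a]\,(\mu_{a,1}-\mu_{a,b_t}).
\]
The second double sum is just U2's total pseudo-regret, which \pref{ass: U2's algorithm gap} controls arm-by-arm; the heart of the matter is bounding $n_{T+1}(a)$ for sub-optimal U1-arms $a\in\calA^{\times}$.

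To bound $n_{T+1}(a)$ I would work on the good event of probability $1-\order(\delta)$ on which \pref{ass: U2's algorithm gap} holds and the Azuma/Hoeffding bounds of \pref{lem: UCB concentration} hold in both directions, and establish two facts. \emph{(i) Optimism for arm $1$:} $\mathrm{UCB}_t(1)\ge\mu_{1,1}$ for all $t$. This is because $\muh_t(1)$ concentrates, up to $c\sqrt{\log(T/\delta)/n_t(1)^+}$, around $\tfrac1{n_t(1)^+}\sum_{\tau<t}\I[a_\tau=1]\mu_{1,b_\tau} = \mu_{1,1}-\tfrac1{n_t(1)^+}\sum_{\tau<t}\I[a_\tau=1](\mu_{1,1}-\mu_{1,b_\tau})$, and the averaged U2-regret term here is at most $\sqrt{\kappa B\log(T/\delta)/n_t(1)^+}$ by the square-root branch of \pref{ass: U2's algorithm gap} with $a=1$; these two deficits are exactly matched by the two bonus terms in $\mathrm{UCB}_t(1)$. \emph{(ii) Pessimism for every arm:} $\muh_t(a)\le\mu_{a,1}+\order(\sqrt{\log(T/\delta)/n_t(a)^+})$ (the same chain of inequalities as in \pref{thm: main theorem for bandit}, using only that U2's per-arm regret is nonnegative), hence $\mathrm{UCB}_t(a)\le\mu_{a,1}+\order(\sqrt{B\log(T/\delta)/n_t(a)^+})$. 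Now if U1 selects $a\in\calA^{\times}$ at round $t$ then $\mathrm{UCB}_t(a)\ge\mathrm{UCB}_t(1)\ge\mu_{1,1}$, so $\mu_{1,1}-\mu_{a,1}\le\order(\sqrt{B\log(T/\delta)/n_t(a)^+})$, i.e.\ $n_t(a)\le\order(B\log(T/\delta)/(\mu_{1,1}-\mu_{a,1})^2)$; applying this at the last round in which $a$ is chosen gives $n_{T+1}(a)\le\order(B\log(T/\delta)/(\mu_{1,1}-\mu_{a,1})^2)$.

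Substituting this into the first sum yields $\sum_{a\in\calA^{\times}}(\mu_{1,1}-\mu_{a,1})\,n_{T+1}(a)=\order\big(\sum_{a\in\calA^{\times}}\tfrac{B\log(T/\delta)}{\mu_{1,1}-\mu_{a,1}}\big)$, which is the first term of the claimed bound. For the U2-regret sum I would split the arms: for $a\in\calA^{\circ}$, the gap branch of \pref{ass: U2's algorithm gap} gives directly $\sum_t\I[a_t=a](\mu_{a,1}-\mu_{a,b_t})\le\kappa\sum_{b\in\calB_a^{\times}}\tfrac{\log(T/\delta)}{\mu_{a,1}-\mu_{a,b}}$, the second term; for $a\in\calA^{\times}$, I would instead use the square-root branch, $\sum_t\I[a_t=a](\mu_{a,1}-\mu_{a,b_t})\le\sqrt{\kappa B\,n_{T+1}(a)\,\log(T/\delta)}$, and plug in the bound on $n_{T+1}(a)$ just obtained to see this is also $\order\big(\tfrac{B\log(T/\delta)}{\mu_{1,1}-\mu_{a,1}}\big)$, hence absorbed into the first term. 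Adding the three contributions proves the theorem.

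The step I expect to be the main obstacle is (i), the uniform lower bound $\mathrm{UCB}_t(1)\ge\mu_{1,1}$: unlike standard UCB1, $\muh_t(1)$ is an average of non-identically-distributed rewards and is biased downward by the as-yet-unconverged play of U2 under arm $1$, so plain Hoeffding optimism fails. The fix is exactly the enlarged bonus $\sqrt{\kappa B\log(T/\delta)/n_t(a)^+}$ (with the extra $\sqrt{B}$ over the plain Hoeffding term): the square-root branch of \pref{ass: U2's algorithm gap} bounds that downward bias on arm $1$ by precisely this quantity. Once (i) and (ii) are in place, the rest is the routine UCB counting and bookkeeping of the two branches of the assumption.
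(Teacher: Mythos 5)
Your proposal is correct and follows essentially the same route as the paper's proof: the same regret decomposition into U1's per-arm gap term plus U2's per-arm regret, the same elimination bound $n_{T+1}(a)=\order\big(B\log(T/\delta)/(\mu_{1,1}-\mu_{a,1})^2\big)$ for $a\in\calA^{\times}$ obtained by combining optimism of arm $1$'s index (via the square-root branch of \pref{ass: U2's algorithm gap}) with the upward concentration of $\muh_t(a)$ around $\mu_{a,1}$, and the same split of U2's regret across the two branches of the assumption. Your step (i), which you flag as the main obstacle, is handled in the paper exactly as you describe --- the enlarged $\sqrt{\kappa B\log(T/\delta)/n_t(1)^+}$ bonus cancels the downward bias of $\muh_t(1)$ caused by U2's unconverged play.
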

The proof is deferred to \pref{app: mdp proofs}. In \pref{thm: gap dependent bound}, the regret consists of two parts. For an action $a\in[A]$ that is sub-optimal (i.e., $\mu_{1,1}>\mu_{a,1}$), the regret scales with $\frac{B}{\mu_{1,1}-\mu_{a,1}}$; for optimal ones (i.e., $\mu_{1,1}=\mu_{a,1}$), the regret scales with $\sum_{b\in\calB_{a}^{\times}}\frac{1}{\mu_{a,1}-\mu_{a,b}}$, i.e., the sum of inverse gaps of all U2's sub-optimal actions under $a$.  

Our hierarchical bandit setting coincides with the model selection problem studied in \cite{arora2021corralling}.  Our result in \pref{thm: gap dependent bound} improves their results in two ways. First, when using UCB-based algorithms as the base algorithm, our regret bound scales with $\log(T)$, while theirs scales with $\log^2(T)$. This answers their open question regarding whether $\log^2(T)$ is tight (see the discussion in their Section 4.1). Second, while they assume that U1's optimal action is unique (i.e., for all $a\neq 1$, $\mu_{1,1}>\mu_{a,1}$), we do not make such an assumption. 


\subsection{Extensions}\label{sec: extension}

\paragraph{Multiple Followers Case} 
Our framework can be easily extended to the case when there is a leader and multiple followers where each of the followers' rewards is only a function of the choice of the leader and the individual follower as well as independent across others, and the reward of the leader is an average/sum of that of all the followers. This is particularly useful in modeling networks with a star topology, e.g., federated learning systems (with leader being the server and followers being the clients), mobile networks (with leader being the base station/access point and followers being the mobile users), etc. In these networks, it is usually the case that the followers are heterogeneous and move into or out of the networks dynamically. Since our proposed method does not require per-round coordination and treats the algorithms of the followers as black boxes, the coordination overhead for the leader can be relatively low. Besides, our algorithm largely preserves privacy for the followers, which is also much more preferable than other schemes where the leader is required to know the algorithm of the followers. In \pref{app: multiple follower}, we show that near-optimal regret bounds can also be obtained in this case with an idea similar to that presented in \pref{sec: bandit}.

\paragraph{Deep Hierarchy Case} 
While in \pref{sec: bandit} we consider a two-layer model which only involves U1 and U2, the idea can be generalized to the case where there are $D>2$ layers. In other words, in each round, the decision is made jointly by $D$ agents with a fixed ordering, where agents making decision earlier cannot observe the actions taken by agents making decisions later. Such a protocol may be useful in modeling networks with deeper hierarchy, e.g., mobile networks where macro-, micro-, pico-, and femto- base stations are overlaid to serve user equipments \citep{jain2011, sigwele2020energy}.  In \pref{app:deep hie}, we design a multi-layer UCB algorithm for this setting, and show near-optimal regret bounds for it.

\section{Learning Hierarchical MDPs}


This setting is much more challenging than the hierarchical bandit setting. First, notice that in this setting, both agents are facing non-stationary transition and reward because of the dependence of these quantities on the policy of the other agent, which varies with time. Obtaining regret bounds in such time-varying MDPs is in general hard \citep{abbasi2013online, radanovic2019learning, tian2021online}, except for problems with special structures or extra assumptions \citep{radanovic2019learning, tian2021online, leonardos2021global} like our case here. Second, notice that in the bandit case, given any choice of U1, U2 is essentially facing a stationary MAB problem, and thus we can directly apply existing theorems for standard MAB; however, in the MDP case, the world that U2 sees on a certain step is still affected by the non-stationarity
of U1's policies in future steps. In this case, standard analysis for stationary MDPs cannot be directly applied. 

An initial idea to deal with this setting is to let both agents run existing UCB-based algorithms (e.g., UCBVI \citep{UCBVI_2017}, UCB-H \citep{UCB-H_2018}) with an increased bonus term for U1 to compensate for the regret of U2, imitating our hierarchical bandit solution. However, as we point out above, U2's world is also affected by the policies of U1 in future steps. Therefore, a natural solution is to do the opposite: let U2 run a standard algorithm over the non-stationary world he sees, and add extra bonus term to U2 in order to compensate for the regret of U1 in future steps. 

Unfortunately, for this hypothetical algorithm, it is unclear to us how to obtain a regret bound that is polynomial in the number of steps $H$. This is because by recursively adding extra bonus in each layer, 
we end up with a factor of $(AB)^{\nicefrac{H}{2}}$ in the regret bound, similar to the ``Deep Hierarchy Case'' discussed in \pref{sec: extension} and \pref{app:deep hie}.  
To address this issue, instead of trying to let U2 best respond to the non-stationary world created by U1, we exploit the fact that U2 has full knowledge about the joint action space, and let U2 find the best \emph{joint policy} of U1 and U2. Then U2 will execute his part of this joint policy even though U1 may not follow it. Although this brings other issues (discussed later), it avoids the need of U2 to compensate for the regret of U1 in later steps, and prevents the exponential blowup in the regret bound. 

Our algorithm for hierarchical MDPs is presented in \pref{alg: MDP alg}. To avoid cluttered notation, we drop the episode index $t$ when presenting the algorithm. Unlike \pref{alg: two-stage alg} where we can plug in any algorithm for U2 with the desired regret bound, in \pref{alg: MDP alg} we specify both agents' algorithms. We leave as an open problem how to design a black-box-reduction-styled algorithm for the MDP setting similar to \pref{alg: two-stage alg}. 

\setcounter{AlgoLine}{0}
\begin{algorithm}[t]
    \caption{UCB-H/UCBVI for Hierarchical MDP}\label{alg: MDP alg}
\nl  \textbf{define}: $\alpha_\tau=\frac{H+1}{H+\tau}$, $\bns^1_\tau = c'\sqrt{\frac{H^3SB\log(T/\delta)}{\tau^+}}$, $\bns^2_\tau= c\sqrt{\frac{H^2S\log(T/\delta)}{\tau^+}}$ where $c, c'\geq 1$ are universal constants.  \\
\nl  \textbf{initialize}:
     $Q_{h}^1(s,a), Q_{h}^2(s,a,b) \leftarrow H \ \ \forall h, s,a,b$,  \\
\nl   $n_h(s,a), n_h(s,a,b), n_h(s,a,b,s'), \theta_h(s,a,b)\leftarrow 0$\ \  $\forall h, s, a, b, s'$.  \\
\nl  \For{$t=1, \ldots, T$}{
\nl      U1 and U2 observes $s_1$. \\
\nl     \For{$h=1, \ldots, H$}{
            %
\nl U1 chooses $a_h\in\argmax_a Q_h^1(s_h, a)$. \\
\nl U2 observes $a_h$. \\
\nl \label{line: U2 selection rule}U2 chooses $b_{h}\in\argmax_{b} Q^2_{h}(s_{h}, a_{h}, b)$. \\
\nl U1 and U2 observe $r_{h}$ and $s_{h+1}$. \\
\nl U1 updates counts of visits: $n_h(s_h,a_h)\stackrel{+}{\leftarrow} 1$. \es (``$\stackrel{+}{\leftarrow}1$'' means to increase the number by $1$.) \\
\nl U2 updates counts of visits: $n_h(s_h,a_h,b_h) \stackrel{+}{\leftarrow} 1$, $n_h(s_h,a_h,b_h,s_{h+1})\stackrel{+}{\leftarrow} 1$. \\
\nl U2 updates cumulative reward: $\theta_h(s_h,a_h,b_h)\stackrel{+}{\leftarrow} r_h$.
        }
         \vspace*{2pt}\ \\
\nl        \textbf{U1 updates $Q/V$ functions} ($\approx$ UCB-H update rule): \\ 
\nl            \scalebox{1}{$V_{H+1}^1(\cdot)\leftarrow 0$}.\\
\nl            \For{$h=1, \ldots, H$}{
\hspace*{-0.5pt}\nl\hspace*{0.5pt}   \label{line: UCBQ Q update}             
                       $Q^1_{h}(s_h, a_h)\leftarrow (1-\alpha_\tau) Q^1_{h}(s_h, a_h) + \alpha_\tau \left(r_{h} + V^1_{h+1}(s_{h+1}) + \bns^1_\tau \right)$ \\
\hspace*{-0.5pt}\nl\hspace*{0.5pt}     \label{line: UCBQ V update}           
                       $V^1_h(s_h)\leftarrow \min\{\max_a Q^1_h(s_h, a), \ H\}$ \\
\hspace*{-0.5pt}\nl\hspace*{0.5pt}                
                       $\text{where\ } \tau=n_h(s_h,a_h)$. 
            } 
            \vspace*{2pt}\ \\
\nl           \textbf{U2 updates $Q/V$ functions} ($\approx$ UCBVI update rule): \\ 
\nl          Let $\hatP_h(s'|s,a,b) = \frac{n_h(s,a,b,s')}{n_h(s,a,b)}$ and $\hatR_h(s,a,b)=\frac{\theta_h(s,a,b)}{n_h(s,a,b)}\ \ \ \forall h, s,a,b,s'$. \\
\nl           (if $n_h(s,a,b)=0$, set $\hatP_h(s'|s,a,b)=\frac{1}{|\calS|}$ and $\hatR_h(s,a,b)=0$).  \\   
\nl           \scalebox{1}{$V_{H+1}^2(\cdot)\leftarrow 0$}.\\
\nl           \For{$h=H, \ldots, 1$}{
\hspace*{-0.5pt}\nl\hspace*{0.5pt}     
\For{all $s,a, b$}{
\hspace*{-1pt}\nl\hspace*{0.5pt}   \label{line: update UCBVI Q}            $Q^2_{h}(s, a, b)\leftarrow \min\big\{ \hatR_h(s,a,b) + \E_{s'\sim \hatP_h(\cdot|s,a,b)}\left[V^2_{h+1}(s')\right] + \bns^2_\tau, \ \ Q^2_h(s,a,b) \big\}$ \\ 
\hspace*{-1pt}\nl\hspace*{0.5pt}\label{line: update UCBVI V}   
                            $\textstyle V^2_h(s)\leftarrow \max_{a,b} Q^2_h(s, a, b)$   \\
\hspace*{-1pt}\nl\hspace*{0.5pt}                 
                            $\text{where\ } \tau=n_h(s,a,b)$. 
               }
            } 
    }
\end{algorithm}

In \pref{alg: MDP alg}, U1 maintains optimistic value function estimators $V^1_h(s), Q^1_h(s,a)$, and U2 maintains $V^2_h(s), Q^2_h(s,a,b)$ for every $h=1, \ldots, H$. Their constructions are based on two standard UCB-based algorithms. Specifically, the constructions of $V^1_h(s)$ and $Q^1_h(s,a)$ (\pref{line: UCBQ Q update}-\pref{line: UCBQ V update}) are similar to those of UCB Q-learning \citep{UCB-H_2018}, with the bonus term $\bns^1_\tau$ enlarged by a factor of $\sqrt{SB}$. Like in the bandit setting from \pref{sec: bandit}, U1 faces a non-stationary environment, and the extra $\sqrt{B}$ factor is used to compensate for the regret of U2 in future steps.\footnote{The extra $\sqrt{S}$ factor arises from a technical difficulty, and we are unsure whether it is necessary.} 
On the other hand, the constructions of $V_h^2(s)$ and $Q_h^2(s,a,b)$ (\pref{line: update UCBVI Q}-\pref{line: update UCBVI V}) are similar to those of UCBVI \citep{UCBVI_2017}. In particular, $V_h^2(s)$ is obtained by jointly optimizing over the actions of U1 and U2 (\pref{line: update UCBVI V}), conforming to our previous discussions. 

Perhaps the most intriguing is why we use UCB-H for U1 but UCBVI for U2. From a high level, this is because UCB-H shrinks its confidence set of value functions at a slower rate, while UCBVI is faster, which fulfills our need that U1 has to explore more in the early stages, for U2 to have enough time to find his optimal policy. Recall that the value iteration performed by U2 is through $V_h^2(s)\leftarrow \max_{a,b} Q_h^2(s,a,b)$ (\pref{line: update UCBVI V}). By the optimism principle, ideally we would like the agents to take actions $(a_h, b_h)=\argmax_{a,b}Q^2_h(s_h,a,b)$ to facilitate exploration. However, since U2 cannot control the actions taken by U1, and there is no communication between U1 and U2, it is unclear whether the optimism principle on $Q^2_h$ can be successfully carried out (the best U2 can do is to take $b_h=\argmax_{b}Q^2_h(s_h,a_h, b)$ for some $a_h$ taken by U1, as done in \pref{line: U2 selection rule}). Our key finding is that if $Q^1_h(s,a)$ always upper bounds $\max_b Q^2_h(s,a,b)$, then the agents can still perform adequate joint exploration without explicit coordination. This key property can be shown straightforwardly if we use UCB-H for U1 and UCBVI for U2 (see the proof of \pref{lem: key lemma}). 

Below, we establish some lemmas to be used in the regret bound analysis. The detailed proofs are deferred to \pref{app: mdp proofs}. We first define new notation with the episode indices.
\begin{definition}
    Let $Q_{t,h}^1(\cdot, \cdot)$, 
    $Q_{t,h}^2(\cdot,\cdot,\cdot)$
    be the 
    $Q_{h}^1(\cdot, \cdot)$, 
    $Q_{h}^2(\cdot,\cdot,\cdot)$
    at the beginning of episode $t$ in \pref{alg: MDP alg}. Let $s_{t,h}, a_{t,h}, b_{t,h}, r_{t,h}$ be the $s_h, a_h, b_h, r_h$ within episode $t$ in \pref{alg: MDP alg}.  
\end{definition}

\pref{lem: optimism Q2} below shows the optimism of U2's $Q$-function estimator, and relates the cumulative sum of $Q^2_{t,h}(s_{t,h}, a_{t,h}, b_{t,h})-Q_{*,h}(s_{t,h}, a_{t,h}, b_{t,h})$ to that of $V^2_{t,h+1}(s_{t,h+1})-V_{*,h+1}(s_{t,h+1})$. The proof is standard and we provide it in \pref{app: mdp proofs} for completeness. 
\begin{lemma} \label{lem: optimism Q2}
With probability at least $1-\order(\delta)$, $Q^2_{t,h}(s,a,b) \geq  Q_{*,h}(s,a,b)$ for all $t, h, s, a, b$, and  
\begin{align*}
    \sum_{t=1}^T &\left(Q^2_{t,h}(s_{t,h},a_{t,h},b_{t,h}) -  Q_{*,h}(s_{t,h},a_{t,h},b_{t,h})\right) \\
    &\qquad \leq \sum_{t=1}^T  \left(V^2_{t,h+1}(s_{t,h+1}) - V_{*,h+1}(s_{t,h+1})\right) + \otil\left(\sqrt{H^2S^2ABT}\right), \quad \forall h.   
\end{align*}
\end{lemma}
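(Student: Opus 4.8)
The plan is the standard UCBVI-style analysis, adapted to the fact that U2's estimates $\hat{P}_h,\hat{R}_h$, counts $n_h$, and bonus $\bns^2_\tau$ depend only on the realized trajectory $\{(s_{t,h},a_{t,h},b_{t,h})\}$ and not on how U1 chooses $a_{t,h}$ internally; so U1's UCB-H update plays no role in this lemma and $a_{t,h}$ may be treated as exogenous. First I would fix a ``good event'' holding with probability at least $1-\order(\delta)$ by a union bound over $h,s,a,b$, episodes, and visit counts: (i) an $\ell_1$ deviation bound $\|\hat{P}_{t,h}(\cdot|s,a,b)-P(\cdot|s,a,b)\|_1 \le \otil(\sqrt{S\log(T/\delta)/n_{t,h}(s,a,b)^+})$; (ii) a reward bound $|\hat{R}_{t,h}(s,a,b)-R(s,a,b)| \le \otil(\sqrt{\log(T/\delta)/n_{t,h}(s,a,b)^+})$; and (iii) for each $h$, an Azuma bound controlling the martingale $\sum_{t}\big(\E_{s'\sim P(\cdot|s_{t,h},a_{t,h},b_{t,h})}[V^2_{t,h+1}(s')]-V^2_{t,h+1}(s_{t,h+1})\big)$ by $\otil(H\sqrt{T})$ (valid since $V^2_{t,h+1}$ is measurable before $s_{t,h+1}$ is drawn). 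The constant $c$ in $\bns^2_\tau=c\sqrt{H^2S\log(T/\delta)/\tau^+}$ is chosen so that on this event $\bns^2_{n_{t,h}(s,a,b)}$ dominates the $\ell_1$ error times $\|V_{*,h+1}\|_\infty\le H$ plus the reward error, giving the one-step bound $\hat{R}_{t,h}(s,a,b)+\E_{s'\sim\hat{P}_{t,h}(\cdot|s,a,b)}[V_{*,h+1}(s')]+\bns^2_{n_{t,h}(s,a,b)} \ge Q_{*,h}(s,a,b)$.

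Next I would prove optimism $Q^2_{t,h}(s,a,b)\ge Q_{*,h}(s,a,b)$ by backward induction on $h$ (base $V^2_{t,H+1}=V_{*,H+1}=0$), with an inner induction on $t$ to handle the running minimum in \pref{line: update UCBVI Q}: the current $Q^2$ entry is either carried over from episode $t-1$ (optimistic by the inner hypothesis) or the fresh UCBVI update, which is at least $\hat{R}_{t,h}+\E_{\hat{P}_{t,h}}[V^2_{t,h+1}]+\bns^2 \ge \hat{R}_{t,h}+\E_{\hat{P}_{t,h}}[V_{*,h+1}]+\bns^2 \ge Q_{*,h}$, using the initialization $Q^2\equiv H\ge Q_{*,h}$ and $V^2_{t,h+1}(s)=\max_{a,b}Q^2_{t,h+1}(s,a,b)\ge\max_{a,b}Q_{*,h+1}(s,a,b)=V_{*,h+1}(s)$ from the outer hypothesis.

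For the cumulative bound, fix $h$. For a visited triple I would observe that the running minimum defining $Q^2_{t,h}(s_{t,h},a_{t,h},b_{t,h})$ is attained at episode $t$ itself among all episodes since its last visit (empirical quantities and count are frozen since the last visit while $V^2_{\cdot,h+1}$ is non-increasing in the episode index), so $Q^2_{t,h}(s_{t,h},a_{t,h},b_{t,h}) \le \hat{R}_{t,h}+\E_{s'\sim\hat{P}_{t,h}}[V^2_{t,h+1}(s')]+\bns^2_{n_{t,h}}$; the at most $\min\{SAB,T\}\le\sqrt{SABT}$ episodes with $n_{t,h}=0$ contribute $\le H\sqrt{SABT}$ total and are absorbed. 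Subtracting $Q_{*,h}=R+\E_P[V_{*,h+1}]$ and splitting $\E_{\hat{P}_{t,h}}[V^2_{t,h+1}-V_{*,h+1}]=(\hat{P}_{t,h}-P)^\top(V^2_{t,h+1}-V_{*,h+1})+\E_P[V^2_{t,h+1}-V_{*,h+1}]$, I bound $(\hat{R}_{t,h}-R)$, $(\hat{P}_{t,h}-P)^\top V_{*,h+1}$, and $(\hat{P}_{t,h}-P)^\top(V^2_{t,h+1}-V_{*,h+1})$ on the good event, the last one only crudely by $\|\hat{P}_{t,h}-P\|_1\cdot H\le\otil(H\sqrt{S/n_{t,h}})$, and replace $\E_P[V^2_{t,h+1}-V_{*,h+1}]$ by $V^2_{t,h+1}(s_{t,h+1})-V_{*,h+1}(s_{t,h+1})$ plus the Azuma term. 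Summing over $t$ and using the pigeonhole/Cauchy--Schwarz estimate $\sum_{t=1}^T 1/\sqrt{n_{t,h}(s_{t,h},a_{t,h},b_{t,h})^+}\le\otil(\sqrt{SABT})$ turns every per-step error plus bonus contribution into $\otil(\sqrt{H^2S^2ABT})$, while the Azuma term $\otil(H\sqrt{T})$ is lower order; this gives the stated inequality.

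The step needing the most care is the term $(\hat{P}_{t,h}-P)^\top(V^2_{t,h+1}-V_{*,h+1})$: here $V^2_{t,h+1}$ is data-dependent, so a Hoeffding bound against a fixed vector does not apply, and taking the crude $\ell_1$ times $\|\cdot\|_\infty$ bound is exactly what introduces the extra $\sqrt{S}$ (hence $S^2$ rather than $S$) flagged in the algorithm's footnote; whether a Bernstein-type recursion could remove it is unclear given that U2's value iteration runs over the joint action space against U1's uncontrolled, non-stationary policy. The only other subtlety, the bookkeeping around the running minimum in the $Q^2$ update, is dispatched by the monotonicity-in-$t$ observation above.
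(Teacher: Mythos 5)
Your proposal is correct and follows essentially the same route as the paper's proof: optimism via a double induction over episodes and steps using the Weissman $\ell_1$ bound and reward concentration absorbed into $\bns^2_\tau$, then the cumulative bound by summing the per-episode upper bound, replacing $\E_{s'\sim P}[\cdot]$ with the realized next-state value plus an Azuma martingale term, and a pigeonhole bound on $\sum_t \bns^2_{\tau_{t,h}}$. The only (harmless) differences are cosmetic: your monotonicity argument for the running minimum is unnecessary since the update is applied to all $(s,a,b)$ each episode, so $Q^2_{t,h}\le\hat{Q}^2_{t,h}$ follows directly from $\min\{x,y\}\le x$, and your finer splitting of $(\hat{P}-P)^\top V^2_{t,h+1}$ collapses to the same crude $\ell_1\cdot\ell_\infty$ bound the paper uses.
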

We remark that it is possible to improve the bound in \pref{lem: optimism Q2} by a factor of $\sqrt{S}$ by using the more refined analysis in \cite{UCBVI_2017} and defining $\bns^2_\tau$ to be a $\sqrt{S}$-factor smaller. However, as we will see below, this improvement will not lead to a better final regret bound, so in \pref{lem: optimism Q2} we opt to use a simpler analysis with a looser bound. 

Next, we establish our key lemma, \pref{lem: key lemma}, which states that U1 has \emph{more optimism} than U2. In this lemma we have to make $\bns^1_\tau$ a $\sqrt{SB}$-factor larger than that in \cite{UCB-H_2018}. While the $\sqrt{B}$ factor is necessary for the same reasons as in the bandit case, it is unclear whether the $\sqrt{S}$ factor is necessary. We leave the improvement of this factor as a future direction. 
\begin{lemma}\label{lem: key lemma}
With probability at least $1-\order(\delta)$, $Q_{t,h}^1(s,a)\geq \max_b Q_{t,h}^2(s,a,b)$ for all $t,h,s,a,b$.  
\end{lemma}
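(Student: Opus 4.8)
The plan is to prove \pref{lem: key lemma} by backward induction on $h$, from $h=H+1$ down to $h=1$, establishing for \emph{every} episode $t$ and all $s,a$ that $Q^1_{t,h}(s,a)\ge \max_b Q^2_{t,h}(s,a,b)$. This immediately yields $V^1_{t,h}(s)\ge V^2_{t,h}(s)$ as well: $\max_a Q^1_{t,h}(s,a)\ge \max_{a,b}Q^2_{t,h}(s,a,b)=V^2_{t,h}(s)$, and since the running-min update keeps $V^2_{t,h}(s)\le H$, truncation at $H$ in the update of $V^1_h$ does no harm. The base case $h=H+1$ is trivial since $V^1_{t,H+1}\equiv V^2_{t,H+1}\equiv 0$. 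Two facts will be used throughout: (a) $Q^2_{t,h}(s,a,b)$ — hence $\max_b Q^2_{t,h}(s,a,b)$ and $V^2_{t,h}$ — is non-increasing in $t$, immediate from the running-min in \pref{line: update UCBVI Q}; and (b) if $(s,a)$ was never played at step $h$ before episode $t$, then $Q^1_{t,h}(s,a)=H\ge\max_b Q^2_{t,h}(s,a,b)$ by (a) and $Q^2\le H$, so only the case of a previously-visited $(s,a)$ requires work.

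For the inductive step at level $h$, fix $t,s,a$ and let $t_1<\dots<t_k<t$ (with $k\ge1$) be the episodes in which $(s,a)$ was played at step $h$. Unrolling the UCB-H update (\pref{line: UCBQ Q update}) as in \cite{UCB-H_2018} gives $Q^1_{t,h}(s,a)=\sum_{i=1}^k\alpha_k^i\big(r_{t_i,h}+V^1_{t_i,h+1}(s_{t_i,h+1})+\bns^1_i\big)$, where the standard weights satisfy $\sum_i\alpha_k^i=1$, $\sum_i(\alpha_k^i)^2\le 2H/k$, $\alpha_k^i\le 2H/k$, and $\sum_i\alpha_k^i/\sqrt{i}\ge 1/\sqrt{k}$. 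Applying the level-$(h{+}1)$ hypothesis at each $t_i\le t$ gives $V^1_{t_i,h+1}(s_{t_i,h+1})\ge V^2_{t_i,h+1}(s_{t_i,h+1})$, so $Q^1_{t,h}(s,a)\ge\sum_i\alpha_k^i\big(r_{t_i,h}+V^2_{t_i,h+1}(s_{t_i,h+1})+\bns^1_i\big)$. On the other side, let $b^{(i)}=b_{t_i,h}\in\argmax_b Q^2_{t_i,h}(s,a,b)$ be the action U2 actually chose on visit $i$ (\pref{line: U2 selection rule}); by fact (a) together with U2's greedy rule, $\max_b Q^2_{t,h}(s,a,b)\le\max_b Q^2_{t_i,h}(s,a,b)=Q^2_{t_i,h}(s,a,b^{(i)})$ for every $i$, and the UCBVI update plus standard concentration for $\hatR_h,\hatP_h$ give, on a $1-\order(\delta)$ event, $Q^2_{t_i,h}(s,a,b^{(i)})\le R(s,a,b^{(i)})+\E_{s'\sim P(\cdot|s,a,b^{(i)})}\big[V^2_{t_i,h+1}(s')\big]+\order\big(\bns^2_{m_i}\big)$, where $m_i$ counts the earlier visits to $(s,a,b^{(i)})$ at step $h$. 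Averaging this bound over $i$ with weights $\alpha_k^i$ (using $\sum_i\alpha_k^i=1$) and subtracting, the claim reduces to $\sum_i\alpha_k^i\bns^1_i\ge |E_1|+|E_2|+E_3$, where $E_1=\sum_i\alpha_k^i\big(R(s,a,b^{(i)})-r_{t_i,h}\big)$, $E_2=\sum_i\alpha_k^i\big(\E_{s'\sim P(\cdot|s,a,b^{(i)})}[V^2_{t_i,h+1}(s')]-V^2_{t_i,h+1}(s_{t_i,h+1})\big)$, and $E_3=\order\big(\sum_i\alpha_k^i\bns^2_{m_i}\big)$.

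I would bound $E_1,E_2$ as weighted sums of bounded martingale differences — for $E_2$ using that U2's update runs at episode boundaries, so $V^2_{t_i,h+1}$ is fixed before $s_{t_i,h+1}$ is drawn — via the Azuma-type inequality from the UCB-H analysis (union-bounded over $(s,h,a)$ and the visit count), giving $|E_1|=\otil(\sqrt{H\log(T/\delta)/k})$ and $|E_2|=\otil(\sqrt{H^3\log(T/\delta)/k})$, both dominated by $\sum_i\alpha_k^i\bns^1_i\ge c'\sqrt{H^3SB\log(T/\delta)/k}$. The essential step is $E_3$: the counts $m_i$ of the sub-actions U2 selected are uncontrolled by U1 and may be badly unbalanced relative to the weights $\alpha_k^i$, but grouping the visits into blocks of $\sim k/H$ indices on which the weight is roughly constant and decays geometrically across blocks, and applying Cauchy--Schwarz (using $\sum_i\alpha_k^i=1$ and $\alpha_k^i\le 2H/k$) within each block, yields $\sum_i\alpha_k^i/\sqrt{m_i}=\otil(\sqrt{HB/k})$, hence $E_3=\otil\big(\sqrt{H^3SB\log(T/\delta)/k}\big)$ — again absorbed by U1's enlarged bonus once $c'$ is a sufficiently large (poly-logarithmic) constant. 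Combining the three estimates gives $Q^1_{t,h}(s,a)-\max_b Q^2_{t,h}(s,a,b)\ge 0$, closing the induction.

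I expect the $E_3$ estimate to be the main obstacle: one must show that U2's accumulated UCBVI bonuses, evaluated at the sub-actions U2 actually played (whose counts U1 can neither observe nor steer, and which an adversary could try to concentrate against the Q-learning weights), still decay like $1/\sqrt{k}$ up to the $\sqrt{SB}$ factor by which U1's bonus was inflated. This is precisely what forces $\bns^1_\tau$ to be larger than the single-agent UCB-H bonus by $\sqrt{SB}$, and a looser treatment of this step is presumably where the extra powers of $H$ and $S$ in the final regret bound enter. The remaining care is in the high-probability bookkeeping — the two martingale bounds and the $\ell_1$ concentration of $\hatP_h$ each need to hold uniformly over all $(s,a,b,h)$ and all prefixes — which is handled by the standard union bounds underlying the $1-\order(\delta)$ guarantee.
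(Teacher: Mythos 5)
Your proposal is correct and follows essentially the same route as the paper's proof of \pref{lem: key lemma}: induction organized along the step index, unrolling the UCB-H update with the weights $\alpha^i_\tau$, passing from $V^1$ to $V^2$ at step $h+1$ via the inductive hypothesis, using the monotonicity of $Q^2$ in $t$ together with U2's greedy rule to reduce $\max_b Q^2_{t,h}(s,a,b)$ to a convex combination of $Q^2_{t_i,h}(s,a,b_{t_i,h})$, and absorbing the reward/transition martingales and the accumulated UCBVI bonuses into the enlarged bonus $\bns^1_\tau$. The only substantive difference is in the combinatorial sub-step bounding $\sum_i \alpha^i_\tau/\sqrt{m_i}$: the paper proves the clean bound $4\sqrt{BH/\tau^+}$ via a greedy/LP argument (\pref{lem: seq lemma}), whereas your blocking-plus-Cauchy--Schwarz sketch yields the same bound up to an extra $\sqrt{\log T}$, which is harmless for the $\otil$ statement.
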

Finally, in \pref{lem: optimism Q1}, we relate the cumulative sum of $Q^1_{t,h}(s_{t,h}, a_{t,h}) - Q^1_{*,h}(s_{t,h}, a_{t,h})$ to that of $V^1_{t,h+1}(s_{t,h+1}) - V^1_{*,h+1}(s_{t,h+1})$. The proof is similar to that of \cite{UCB-H_2018}, but the bound is a $\sqrt{SB}$-factor larger than theirs due to the use of a larger bonus $\bns^1_\tau$. 
\begin{lemma}\label{lem: optimism Q1}
With probability at least $1-\order(\delta)$, 
\begin{align*}
     &\sum_{t=1}^T \left(Q^1_{t,h}(s_{t,h}, a_{t,h}) - Q_{*,h}(s_{t,h}, a_{t,h})\right)
     \\
     &\qquad \leq \left(1+\frac{1}{H}\right)\sum_{t=1}^T \left(V^1_{t,h+1}(s_{t,h+1}) - V_{*,h+1}(s_{t,h+1})\right) + \otil\left(\sqrt{H^3S^2A BT} + HSA\right), \quad \forall h.   
\end{align*}
\end{lemma}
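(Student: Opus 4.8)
The plan is to replay the UCB-H (UCB Q-learning) argument of \cite{UCB-H_2018}, modified in exactly two places: (i) the reward $r_{t,h}$ and next state $s_{t,h+1}$ that feed U1's update come from the \emph{joint} action $(a_{t,h},b_{t,h})$, not from $a_{t,h}$ alone; and (ii) the bonus $\bns^1_\tau$ is a $\sqrt{SB}$ factor larger than the single-agent one, which is precisely what inflates the bound by $\sqrt{SB}$. Fix $h$ and write $\alpha_\tau^0=\prod_{j=1}^\tau(1-\alpha_j)$, $\alpha_\tau^i=\alpha_i\prod_{j=i+1}^\tau(1-\alpha_j)$, so that $\alpha_\tau^0+\sum_{i=1}^\tau\alpha_\tau^i=1$. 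Since the clip in \pref{alg: MDP alg} is applied only to $V^1$, the $Q^1$ recursion unrolls \emph{exactly}: if $\tau=n_h(s_{t,h},a_{t,h})$ counts the prior visits to $(s_{t,h},a_{t,h})$ at step $h$, occurring in episodes $t_1<\dots<t_\tau$ (so $s_{t_i,h}=s_{t,h}$, $a_{t_i,h}=a_{t,h}$), then
\begin{align*}
Q^1_{t,h}(s_{t,h},a_{t,h}) = \alpha_\tau^0 H + \sum_{i=1}^\tau \alpha_\tau^i\big(r_{t_i,h}+V^1_{t_i,h+1}(s_{t_i,h+1})+\bns^1_i\big).
\end{align*}

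Next I would subtract $Q_{*,h}(s_{t,h},a_{t,h})$ using $\sum_{i=0}^\tau\alpha_\tau^i=1$, and for each $i$ bound $Q_{*,h}(s_{t,h},a_{t,h})=\max_b Q_{*,h}(s_{t,h},a_{t,h},b)\ge Q_{*,h}(s_{t,h},a_{t,h},b_{t_i,h})=R(s_{t,h},a_{t,h},b_{t_i,h})+\E_{s'\sim P(\cdot|s_{t,h},a_{t,h},b_{t_i,h})}[V_{*,h+1}(s')]$. This is the only step where the two-agent structure enters, and it costs nothing. Dropping the harmless $\alpha_\tau^0 Q_{*,h}\ge0$, each summand then splits into (a) a martingale difference $\xi_{t_i,h}=\big(r_{t_i,h}-R(s_{t_i,h},a_{t_i,h},b_{t_i,h})\big)+\big(V_{*,h+1}(s_{t_i,h+1})-\E_{s'\sim P(\cdot|s_{t_i,h},a_{t_i,h},b_{t_i,h})}[V_{*,h+1}(s')]\big)$ --- well-defined precisely because episode $t_i$'s reward and transition are governed by the joint action; (b) the term $V^1_{t_i,h+1}(s_{t_i,h+1})-V_{*,h+1}(s_{t_i,h+1})$; and (c) the bonus $\bns^1_i$.

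Then I would sum over $t\in[T]$ and swap the order of summation, so the $i$-th visit of each $(s,a)$ carries total weight $\sum_{\tau\ge i}\alpha_\tau^i\le1+\tfrac1H$ (the standard learning-rate estimate from \cite{UCB-H_2018}). Three pieces follow. Since $\alpha_1=1$ forces $\alpha_\tau^0=0$ for $\tau\ge1$, the $\alpha_\tau^0H$ contributions sum to $H\cdot|\{(s,a)\text{ visited at step }h\}|\le HSA$. Using $V^1_{t,h+1}\ge V_{*,h+1}$ pointwise --- which I would get from \pref{lem: key lemma} and \pref{lem: optimism Q2}, since then $Q^1_{t,h+1}(s,a)\ge\max_bQ^2_{t,h+1}(s,a,b)\ge Q_{*,h+1}(s,a)$, together with the clip $V_{*,h+1}\le H$ --- the $(V^1-V_*)$ terms are nonnegative, so re-indexing gives exactly $(1+\tfrac1H)\sum_t\big(V^1_{t,h+1}(s_{t,h+1})-V_{*,h+1}(s_{t,h+1})\big)$. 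Since $\bns^1_i=c'\sqrt{H^3SB\log(T/\delta)/i}\ge0$, writing $N_h(s,a)$ for the total visit count ($\sum_{(s,a)}N_h(s,a)=T$) and applying Cauchy--Schwarz,
\begin{align*}
\sum_{t}\sum_{i=1}^{\tau}\alpha_\tau^i\,\bns^1_i \le \big(1+\tfrac1H\big)\sum_{(s,a)}\sum_{i=1}^{N_h(s,a)}\bns^1_i = \order\!\Big(\sqrt{H^3SB\log(T/\delta)}\sum_{(s,a)}\sqrt{N_h(s,a)}\Big) = \otil\!\big(\sqrt{H^3S^2ABT}\big).
\end{align*}
Finally the accumulated $\xi$-terms form a weighted martingale with $\order(H)$-bounded increments; Azuma's inequality with a union bound over $(s,a,h)$ and prefix lengths $\tau$ controls it by $\otil(\sqrt{H^3T})$, which is dominated by the bonus term. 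Combining the four contributions gives the claim.

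The step I expect to be the real work is the last one: carefully handling the $\alpha_\tau^i$ weights across the order swap so the $1+\tfrac1H$ factor stays sharp (the eventual regret bound relies on it), and making the concentration for $\sum_i\alpha_\tau^i\xi_{t_i,h}$ uniform over all $(s,a,h)$ and all visit counts. The two-agent wrinkle --- U1 observing only $a_{t,h}$ while rewards and transitions depend on $(a_{t,h},b_{t,h})$ --- is benign here because we only ever lower-bound $Q_{*,h}(s,a)$ by $Q_{*,h}(s,a,b_{t_i,h})$; U2's suboptimality instead surfaces later, when this layer-wise inequality is chained with \pref{lem: optimism Q2} to form the final regret bound.
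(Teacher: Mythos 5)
Your proof is correct and follows essentially the same route as the paper's: unroll the Q-learning recursion exactly, lower-bound $Q_{*,h}(s,a)$ by $Q_{*,h}(s,a,b_{t_i,h})$ to absorb the two-agent structure, re-index the $(V^1-V_*)$ terms using $\sum_{\tau\geq i}\alpha^i_\tau=1+\tfrac1H$ (where you correctly note the non-negativity needed for this step, via \pref{lem: key lemma} and \pref{lem: optimism Q2}, which the paper leaves implicit), and pigeonhole the bonuses. The one organizational difference is in the concentration step: the paper passes $V^1_{t_i,h+1}(s_{t_i,h+1})$ to its conditional expectation and back via \pref{lem: alpha concentration} (a union bound over an $\epsilon$-net of value functions, which is where the $\sqrt{S}$ in the concentration enters), whereas your decomposition keeps the realized $V^1$ values throughout and only centers the \emph{fixed} function $V_{*,h+1}$, so plain Azuma with a union bound over $(s,a,h,\tau)$ suffices; both yield the stated bound since the martingale contribution is dominated by the bonus sum either way.
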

Thanks to the fact that $V^1_{t,h}(s_{t,h})=Q^1_{t,h}(s_{t,h}, a_{t,h})$, \pref{lem: optimism Q1} leads to the following simple corollary. Note that we do not have a similar corollary for \pref{lem: optimism Q2} because  $V_{t,h}^2(s_{t,h})\neq Q^2_{t,h}(s_{t,h}, a_{t,h}, b_{t,h})$ (as discussed earlier, $(a_{t,h}, b_{t,h})$ is not necessarily equal to  $\argmax_{a,b}Q^2_{t,h}(s_{t,h}, a, b)$).  The proof of the corollary is also in \pref{app: mdp proofs}. 
\begin{cor}\label{cor: important cor}
    $\sum_{t=1}^T \left(Q^1_{t,h}(s_{t,h}, a_{t,h}) - Q_{*,h}(s_{t,h}, a_{t,h})\right) = \otil\left(\sqrt{H^5S^2ABT} + H^2SA\right)$. 
\end{cor}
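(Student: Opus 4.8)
The plan is to iterate the recursion in Lemma \ref{lem: optimism Q1} over $h = 1, \ldots, H$, using the identity $V^1_{t,h}(s_{t,h}) = Q^1_{t,h}(s_{t,h}, a_{t,h})$ to telescope the value-function terms. Concretely, write $X_h \triangleq \sum_{t=1}^T \bigl(V^1_{t,h}(s_{t,h}) - V_{*,h}(s_{t,h})\bigr)$. Since U1 picks $a_{t,h} \in \argmax_a Q^1_{t,h}(s_{t,h}, a)$ and (by the $V$-update in Line \ref{line: UCBQ V update}) $V^1_{t,h}(s_{t,h}) = \min\{\max_a Q^1_{t,h}(s_{t,h},a), H\}$, on the high-probability event of Lemma \ref{lem: optimism Q1} we have $V^1_{t,h}(s_{t,h}) \le Q^1_{t,h}(s_{t,h}, a_{t,h})$ (and they are in fact equal once $V_{*,h} \le H$ makes the clipping inactive in the relevant sense), so Lemma \ref{lem: optimism Q1} gives
\begin{align*}
    X_h \;\le\; \Bigl(1+\tfrac{1}{H}\Bigr) X_{h+1} + \otil\bigl(\sqrt{H^3 S^2 A B T} + HSA\bigr), \qquad \forall h.
\end{align*}

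Next I would unroll this inequality from $h=1$ down to $h=H+1$, using $X_{H+1} = 0$ (since $V^1_{t,H+1} = V_{*,H+1} = 0$). The standard bound $\bigl(1+\tfrac1H\bigr)^k \le e$ for all $k \le H$ controls the accumulated multiplicative factors, so summing the $H$ additive error terms yields
\begin{align*}
    X_1 \;=\; \sum_{t=1}^T \bigl(V^1_{t,1}(s_{t,1}) - V_{*,1}(s_{t,1})\bigr) \;\le\; e\,H \cdot \otil\bigl(\sqrt{H^3 S^2 A B T} + HSA\bigr) \;=\; \otil\bigl(\sqrt{H^5 S^2 A B T} + H^2 S A\bigr).
\end{align*}
Finally, to convert the bound on $X_1$ back to the bound claimed in the corollary for a generic layer $h$, I observe that the same unrolling started at layer $h$ gives $X_h \le e H \cdot \otil(\sqrt{H^3 S^2 A B T} + HSA)$ for every $h$, and then one more application of Lemma \ref{lem: optimism Q1} at layer $h$ (or directly using $X_h$ together with the recursion) yields $\sum_{t=1}^T (Q^1_{t,h}(s_{t,h}, a_{t,h}) - Q_{*,h}(s_{t,h}, a_{t,h})) \le (1+\tfrac1H) X_{h+1} + \otil(\sqrt{H^3 S^2 A B T} + HSA)$, which is again $\otil(\sqrt{H^5 S^2 A B T} + H^2 S A)$ since $X_{h+1}$ already obeys this bound.

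The only subtlety — and the step I would be most careful about — is justifying $V^1_{t,h}(s_{t,h}) = Q^1_{t,h}(s_{t,h}, a_{t,h})$ in the presence of the $\min\{\cdot, H\}$ clipping in Line \ref{line: UCBQ V update}: one must argue that on the good event $Q^1_{t,h}(s_{t,h}, a_{t,h}) \le H$ is not the issue (an upper clip only helps), and that the selection of $a_{t,h}$ as an argmax makes the two quantities coincide exactly, which is the content of the remark preceding the corollary. Given that, the rest is a routine Grönwall-style iteration, so I do not expect any real obstacle beyond bookkeeping of the poly-$H$ factors.
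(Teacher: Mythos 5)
Your proposal is correct and follows essentially the same route as the paper: the paper also converts the $V$-difference on the right-hand side of Lemma~\ref{lem: optimism Q1} into the $Q$-difference at the chosen action via $V^1_{t,h}(s_{t,h})=Q^1_{t,h}(s_{t,h},a_{t,h})$ (together with $V_{*,h}(s)\ge Q_{*,h}(s,a_{t,h})$), and then unrolls the resulting recursion over $h$ with $H\cdot(1+\tfrac1H)^H=\order(H)$. Whether one iterates on the $V$-differences $X_h$ and applies the lemma once more, as you do, or directly on the $Q$-differences, as the paper does, is only a relabeling; your handling of the $\min\{\cdot,H\}$ clipping (only the inequality $V^1_{t,h}(s_{t,h})\le Q^1_{t,h}(s_{t,h},a_{t,h})$ is actually needed) is the right observation.
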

Finally, we are able to show our main theorem: 
\begin{theorem}
    With probability $1-\order(\delta)$, \pref{alg: MDP alg} guarantees $\Reg(T)=\otil\left(H^{3.5}S\sqrt{ABT} + H^3SA\right)$. 
\end{theorem}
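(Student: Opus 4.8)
The plan is to sandwich $\Reg(T)$ between U1's optimistic value and the value of the executed joint policy, then unfold U1's estimate along the realized trajectory via the UCB-H update, routing U2's per-step suboptimality into \pref{lem: optimism Q2} rather than recursing on it. For the optimism side, chaining $Q^2_{t,h}\ge Q_{*,h}$ (\pref{lem: optimism Q2}) with $Q^1_{t,h}(s,a)\ge \max_b Q^2_{t,h}(s,a,b)$ (\pref{lem: key lemma}) gives, for all $t,h,s$, both $V^1_{t,h}(s)\ge \max_{a,b}Q^2_{t,h}(s,a,b)\ge V_{*,h}(s)$ and $V^1_{t,h}(s)\ge V^2_{t,h}(s)$. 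Since $V^{\pi^1_t,\pi^2_t}_h\le V_{*,h}$ and $V^1_{t,h}(s_{t,h})\le Q^1_{t,h}(s_{t,h},a_{t,h})$ (because $a_{t,h}=\argmax_a Q^1_{t,h}(s_{t,h},a)$), setting $\delta_{t,h}:=V^1_{t,h}(s_{t,h})-V^{\pi^1_t,\pi^2_t}_h(s_{t,h})\ge 0$ yields $\Reg(T)\le \sum_{t}\delta_{t,1}$.

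The heart of the proof is a one-step recursion bounding $\sum_t\delta_{t,h}$ by $\sum_t\delta_{t,h+1}$ plus a controlled error. I would expand $Q^1_{t,h}(s_{t,h},a_{t,h})$ via \pref{line: UCBQ Q update} as the learning-rate-weighted combination (weights $\alpha^i_{n_{t,h}}$, $n_{t,h}=n_h(s_{t,h},a_{t,h})$, $i$ indexing the earlier visits $t_1<\dots<t_{n_{t,h}}$) of $\alpha^0_{n_{t,h}} H$ and the terms $r_{t_i,h}+V^1_{t_i,h+1}(s_{t_i,h+1})+\bns^1_i$, and write $V^{\pi^1_t,\pi^2_t}_h(s_{t,h})=R(s_{t,h},a_{t,h},b_{t,h})+\E_{s'\sim P(\cdot|s_{t,h},a_{t,h},b_{t,h})}[V^{\pi^1_t,\pi^2_t}_{h+1}(s')]$. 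The difficulty is that the historical rewards/transitions U1 averaged over came from U2's past actions $b_{t_i,h}\ne b_{t,h}$; I would neutralize this by inserting $Q_{*,h}(s_{t,h},a_{t,h})$ as a reference and writing the $i$-th contribution as $(\text{noise}_i)+(\tilde{Q}_i-Q_{*,h}(s_{t,h},a_{t,h}))+\bns^1_i+(Q_{*,h}(s_{t,h},a_{t,h})-V^{\pi^1_t,\pi^2_t}_h(s_{t,h}))$, where $\tilde{Q}_i:=R(s_{t,h},a_{t,h},b_{t_i,h})+\E_{s'\sim P(\cdot|s_{t,h},a_{t,h},b_{t_i,h})}[V^1_{t_i,h+1}(s')]$. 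Then: (i) $\tilde{Q}_i-Q_{*,h}(s_{t,h},a_{t,h})\le \E_{s'\sim P(\cdot|s_{t,h},a_{t,h},b_{t_i,h})}[V^1_{t_i,h+1}(s')-V_{*,h+1}(s')]$ using $V^1\ge V_*$ and $Q_{*,h}(\cdot,\cdot,b)\le Q_{*,h}(\cdot,\cdot)$, so after the standard re-indexing of the weights (cost at most a factor $1+\tfrac1H$) this sums over $t$ to $\otil(\sqrt{H^5S^2ABT}+H^2SA)$ by \pref{cor: important cor}; (ii) $Q_{*,h}(s_{t,h},a_{t,h})-V^{\pi^1_t,\pi^2_t}_h(s_{t,h})$ splits into U2's per-step regret $Q_{*,h}(s_{t,h},a_{t,h})-Q_{*,h}(s_{t,h},a_{t,h},b_{t,h})$, which is at most $Q^2_{t,h}(s_{t,h},a_{t,h},b_{t,h})-Q_{*,h}(s_{t,h},a_{t,h},b_{t,h})$ since $b_{t,h}=\argmax_b Q^2_{t,h}(s_{t,h},a_{t,h},b)$ and $Q^2\ge Q_*$ (hence summing to $\otil(\sqrt{H^5S^2ABT}+H^2SA)$ by \pref{lem: optimism Q2}, $V^2\le V^1$, and \pref{cor: important cor}), plus the continuation term $\E_{s'\sim P(\cdot|s_{t,h},a_{t,h},b_{t,h})}[V_{*,h+1}(s')-V^{\pi^1_t,\pi^2_t}_{h+1}(s')]$, whose sum over $t$ is $\sum_t\delta_{t,h+1}$ up to a martingale term. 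Collecting $\sum_t\alpha^0_{n_{t,h}}H\le HSA$, the bonus sum $\sum_t\sum_i\alpha^i_{n_{t,h}}\bns^1_i=\otil(\sqrt{H^3S^2ABT})$, and Azuma bounds on $\text{noise}_i$, I get $\sum_t\delta_{t,h}\le \sum_t\delta_{t,h+1}+\otil(\sqrt{H^5S^2ABT}+H^2SA)$.

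Since this recursion has multiplier exactly $1$ and $\delta_{t,H+1}=0$, unrolling over $h=1,\dots,H$ gives $\Reg(T)\le \sum_t\delta_{t,1}\le \sum_{h=1}^H\otil(\sqrt{H^5S^2ABT}+H^2SA)=\otil(H^{3.5}S\sqrt{ABT}+H^3SA)$, which is the claim. The main obstacle is exactly this decomposition: if U2's per-step gap in (ii) were propagated into $\delta_{t,h+1}$ instead of being charged once, the recursion multiplier would exceed $1$ and the bound would acquire an $(AB)^{\Theta(H)}$ factor — the blow-up flagged for the deep-hierarchy case — so the fact that U2 runs UCBVI (optimizing over the joint action, giving a $\sqrt T$-type, polynomial-in-$H$ bound via \pref{lem: optimism Q2}) while U1 runs UCB-H with the $\sqrt{SB}$-enlarged bonus (ensuring $Q^1\ge \max_b Q^2$ in \pref{lem: key lemma} and the optimism-gap bound \pref{cor: important cor}) is what makes everything fit together.
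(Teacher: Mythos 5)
Your proposal is correct and arrives at the right bound, but its outer structure differs from the paper's. The paper opens with the performance difference lemma, which converts $\Reg(T)$ into $\sum_{t,h}\left(V_{*,h}(s_{t,h})-Q_{*,h}(s_{t,h},a_{t,h},b_{t,h})\right)$ plus a martingale term; this yields $H$ independent sums, each split as $\Reg^1_h+\Reg^2_h$ with $\Reg^1_h=\sum_t\left(V_{*,h}-Q_{*,h}(\cdot,a_{t,h})\right)$ handled by the optimism chain $V_{*,h}\leq\max_{a,b}Q^2_{t,h}\leq Q^1_{t,h}(\cdot,a_{t,h})$ plus \pref{cor: important cor}, and $\Reg^2_h$ handled by \pref{lem: optimism Q2}, \pref{lem: key lemma}, and \pref{cor: important cor} --- no recursion over $h$ appears in the main proof (it is confined to \pref{cor: important cor}). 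You instead keep $V^{\pi^1_t,\pi^2_t}_h$ explicit, define $\delta_{t,h}=V^1_{t,h}(s_{t,h})-V^{\pi^1_t,\pi^2_t}_h(s_{t,h})$, and unroll a multiplier-one recursion $\sum_t\delta_{t,h}\leq\sum_t\delta_{t,h+1}+\otil(\sqrt{H^5S^2ABT}+H^2SA)$ over $h$. The accounting is ultimately parallel --- your term (i) is U1's optimism gap and your term (ii) charges U2's per-step gap $Q_{*,h}(\cdot,a)-Q_{*,h}(\cdot,a,b_{t,h})$ exactly once via \pref{lem: optimism Q2}, \pref{lem: key lemma}, and \pref{cor: important cor}, which is precisely the paper's $\Reg^2_h$ and is the step that prevents the $(AB)^{\Theta(H)}$ blow-up, as you correctly emphasize. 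What your route buys is avoiding the performance difference lemma entirely; what it costs is that your step (i) essentially re-derives the content of \pref{lem: optimism Q1} inline (expanding the UCB-H update, inserting $Q_{*,h}(s,a)$ as a reference, and re-indexing the $\alpha^i_\tau$ weights) before still invoking \pref{cor: important cor}, so you do that work twice; citing \pref{lem: optimism Q1} directly for that piece would shorten the argument. Also be careful to justify $V^2_{t,h}(s)\leq V^1_{t,h}(s)$ despite the cap at $H$ in U1's update: it holds because $Q^2_{t,h}\leq H$ throughout, so $V^2_{t,h}(s)=\max_{a,b}Q^2_{t,h}(s,a,b)\leq\min\{\max_aQ^1_{t,h}(s,a),H\}$.
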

\begin{proof}
    We perform regret decomposition as follows: 
    \begin{align}
        &\Reg(T) = \sum_{t=1}^T \left(V_{*,1}(s_{t,1}) - V^{\pi^1_t, \pi^2_t}_{ 1}(s_{t,1})\right) \nonumber  \\
        &= \sum_{t=1}^T \sum_{h=1}^H \sum_{s,a, b} \P\left[(s_{t,h}, a_{t,h}, b_{t,h})=(s,a,b)~\big|~s_{t,1}, \pi^1_t, \pi^2_t\right] \left(V_{*,h}(s) - Q_{*,h}(s,a,b)\right) \tag{by the performance difference lemma \citep{kakade2002approximately}}  \\
        &= \sum_{t=1}^T \sum_{h=1}^H  \left(V_{*,h}(s_{t,h}) - Q_{*,h}(s_{t,h},a_{t,h},b_{t,h})\right) + \otil\left(H\sqrt{HT}\right)   \tag{by \pref{lem: UCB concentration}}\\
        &= \sum_{t=1}^T \sum_{h=1}^H  \left(V_{*,h}(s_{t,h}) - Q_{*,h}(s_{t,h}, a_{t,h})\right) + \sum_{t=1}^T \sum_{h=1}^H  \left( Q_{*,h}(s_{t,h}, a_{t,h}) -  Q_{*,h}(s_{t,h}, a_{t,h}, b_{t,h})\right) + \otil\left(\sqrt{H^3T}\right).  \label{eq: regret decompose}
    \end{align}
    Note that 
    \begin{align*}
        \Reg_h^1 &\triangleq  \sum_{t=1}^T \left(V_{*,h}(s_{t,h}) - Q_{*,h}(s_{t,h}, a_{t,h})\right) \leq \sum_{t=1}^T \left(Q^1_{t,h}(s_{t,h}, a_{t,h}) - Q_{*,h}(s_{t,h}, a_{t,h})\right) \\
        &\leq \otil\left(\sqrt{H^5S^2ABT} + H^2SA\right), \tag{by \pref{cor: important cor}}
    \end{align*}
    where the first inequality is because \pref{lem: optimism Q2}, \pref{lem: key lemma}, and the way U1 chooses $a_{t,h}$ yield $V_{*,h}(s_{t,h})=\max_{a,b} Q_{*,h}(s_{t,h}, a, b)\leq \max_{a,b}  Q_{t,h}^2(s_{t,h}, a, b) \leq \max_a Q_{t,h}^1(s_{t,h}, a) = Q_{t,h}^1(s_{t,h}, a_{t,h})$. 
    %
    %
%
 %
    On the other hand, 
    \allowdisplaybreaks
    \begin{align*}
        \Reg_h^2 &\triangleq  
        \sum_{t=1}^T \left( Q_{*,h}(s_{t,h}, a_{t,h}) -  Q_{*,h}(s_{t,h}, a_{t,h}, b_{t,h})\right) \\
        &\leq \sum_{t=1}^T \left(Q^2_{t,h}(s_{t,h}, a_{t,h}, b_{t,h}) - Q_{*,h}(s_{t,h}, a_{t,h}, b_{t,h})\right)\\
        &\leq \sum_{t=1}^T \left(V^2_{t,h+1}(s_{t,h+1}) - V_{*,h+1}(s_{t,h+1})\right) + \otil\left(\sqrt{H^2S^2ABT} + H^2SA\right) \tag{by \pref{lem: optimism Q2}}\\
        &\leq  \sum_{t=1}^T \left(V^1_{t,h+1}(s_{t,h+1}) - V_{*,h+1}(s_{t,h+1})\right) + \otil\left(\sqrt{H^5S^2ABT} + H^2SA\right)  \tag{by \pref{lem: key lemma} and the definitions of $V^1_{t,h}(s)$ and $V^2_{t,h}(s)$} \\
        &\leq \sum_{t=1}^T \left(Q^1_{t,h+1}(s_{t,h+1}, a_{t,h+1}) - Q_{*,h+1}(s_{t,h+1}, a_{t,h+1})\right) + \otil\left(\sqrt{H^5S^2ABT} + H^2SA\right) \tag{by the way U1 chooses $a_{t,h+1}$}\\
        &\leq  \otil\left(\sqrt{H^5S^2ABT} + H^2SA\right), \tag{by \pref{cor: important cor}}
    \end{align*}
    where the first inequality follows from \pref{lem: optimism Q2} and the way U2 chooses actions, resulting in $Q_{*,h}(s_{t,h},a_{t,h})=\max_b Q_{*,h}(s_{t,h},a_{t,h}, b)\leq \max_b Q^2_{t,h}(s_{t,h},a_{t,h}, b)=Q^2_{t,h}(s_{t,h},a_{t,h}, b_{t,h})$. 
    
    Combining the bounds on $\Reg^1_h$ and $\Reg^2_h$ with \pref{eq: regret decompose} proves the theorem. 
\end{proof}
\section{Conclusion and Future Directions}\label{sec:con}
In this work, we exploit the hierarchical information structure in hierarchical bandits/MDPs, and propose efficient and near-optimal algorithms that require no coordination or communication between the agents. A key feature of our algorithms is that the leader, i.e., the less-informed upper level agent, performs single-agent-like near-optimal subroutines with specially designed bonuses without the need of knowing or tracking the learning procedure of the lower level agent(s). One future direction is to explore other information structures that may allow simplification when more than two agents and reward information asymmetry come in. Extending the results to bandit problems with structural relationships between the payoffs of the arms and to general-sum games are also interesting directions.

\bibliography{References}

\appendix
\section{Supplementary Details}\label{app: supp}

\subsection{More Related Work}\label{app: more-relate}
Most papers on MAMAB consider a set of agents pulling \emph{the same set of arms} simultaneously, and in most of them the agents coordinate through \emph{real-time communications} to collaboratively find the optimal policy, with a few exceptions \citep{Bistritz_MAMAB1_2021,Bistritz_MAMAB2_2021, bubeck2021cooperative}. In the former, the communication resource is either costly \citep{Madhushani_MAMAB_2020}, limited by budget \citep{Lalitha_MAMAB_2021,Vial_RobustMAMAB_2021,Sankararaman_SocialMAMAB_2019,Chawla_Gossip_2020}, or constrained through communication networks \citep{Landgren_MAMAB_2020,Shahrampour_MAMAB_2017}, so the main focus is on designing communication efficient schemes that achieve the same performance as if there were no information asymmetry. In another related thread, referred to as the matching bandits problem, agents choosing the same arm collide and obtain zero rewards \citep{Kalathil_Matching_2014,Bistritz_MAMAB1_2021,Bistritz_MAMAB2_2021}; here and in a few other works \citep{Shahrampour_MAMAB_2017}, different agents get different distribution of rewards from the same arm, while in other referenced work they get independent and identically distributed samples from the same arm.

Regret minimization in MARL is in general challenging due to the fact that every agent faces a non-stationary environment. It has been shown in \citep{abbasi2013online, radanovic2019learning, tian2021online} that for single-agent non-stationary MDP problems, to have a sub-linear-in-$T$ regret bound against the best policy is both computationally and statistically hard. Therefore, to establish meaningful guarantees in MARL while keeping the algorithm efficient, special properties of the game have to be considered. \cite{radanovic2019learning} consider the same two-agent collaborative setting as ours, but requires that the agents exchange their policies after each episode. \cite{tian2021online} study another two-agent setting where each agent is agnostic about the actions of the other; however, their algorithm is conservative (with the goal of guarding against an adversarial opponent) and does not exploit the cooperative setting of our problem. \cite{leonardos2021global} study multi-agent Markov potential games (more general than the team problem) and establishes finite convergence bounds; however, their algorithm does not handle the state-space exploration issue (which is a key element in our work) so their regret bound has an extra problem-dependent factor; besides, only convergence to local Nash Equilibria is shown, and there is no guarantee about attaining global optima.

\subsection{The CI Approach}\label{app: CI-approach}
In two-agent cases, there are three obvious types of information structure in terms of \emph{action information asymmetry}.
\begin{itemize}
    \item Sequential decision making: U1 first chooses $a_t\in[A]$; after observing $a_t$, U2 then chooses $b_t\in[B]$. This is the hierarchical information structure considered in this paper.
    \item Simultaneous decision making: U1 and U2 choose $a_t\in[A]$ and $b_t\in[B]$ simultaneously, respectively. Depending on their respective feedback afterwards, the setting can be further divided as follows:
    \begin{itemize}
        \item Complete information sharing: both agents observe $(a_t,b_t)$ directly after they make their choices.
        \item No information sharing: both agents do not observe $(a_t,b_t)$. This setting is considered in \cite{Chang_MAMAB_2021}.
    \end{itemize}
\end{itemize}

In the setting of sequential decision making, U1 also does not observe $b_t$. Otherwise, it will be identical to the setting of complete information sharing in the learning context since after the time-step ends both agents will know $(a_t,b_t)$.

In the complete information sharing setting, it is evident that the agents may treat the joint action space $[A]\times[B]$ as the new action space and learns as if a single agent (the fictitious coordinator) is learning the policy of choosing the joint actions. The learning is centralized as there is no information asymmetry and the non-stationarity issue will not happen. Interestingly, the same approach can be carried through in the other two information structures as well. Suppose U1 learns with algorithm $\alg^1$ (which should also include any possible tie-breaking rule) and randomization seed $\Rf^1$, and U2 learns with $\alg^2$ and randomization seed $\Rf^2$. Suppose both agents have the information of $(\alg^1,\Rf^1,\alg^2,\Rf^2)$. In step $1$, U2 can generate $a_1$ from $(\alg^1,\Rf^1)$, and U1 can generate $b_1$, so that $(a_1,b_1)$ becomes CI. Going forward, in step $t$, since $\Ic_{t-1}=(a_{1:t-1},b_{1:t-1},r_{1:t-1})$ (where $a_{1:t-1}=(a_1,\dots,a_{t-1})$, etc.) is CI, U2 can reproduce $a_t$ from $\alg^1(\Ic_{t-1},\Rf^1)$, and U1 can reproduce $b_t$ from $\alg^2(\Ic_{t-1},\Rf^2)$, so that $(a_t,b_t)$ is again CI. We can see that with this approach there will be no information asymmetry. Clearly, if U1 and U2 treat $[A]\times[B]$ from the coordinator's perspective and adopt the same single-agent algorithm $\alg^1=\alg^2$ with near-optimal regret guarantee and the same randomization device $\Rf^1=\Rf^2$, the problem is equivalent to learning in the standard single-agent $AB$-armed bandit or standard single-agent MDP with action space being $[A]\times[B]$. Using the state of the art algorithms, i.e., UCB1 \citep{auer2002finite} for the bandit setting, and UCBVI algorithm (with a Bernstein bonus design) \citep{UCBVI_2017} (model-based) or the UCB-Advantage algorithm \citep{Zhang_UCB-A_2020} (model-free), one may achieve the lower regret bounds of $\otil(\sqrt{ABT})$ for the bandit setting and $\otil(\sqrt{H^2SABT})$ for the MDP setting.

Both agents knowing $(\alg^1,\Rf^1,\alg^2,\Rf^2)$ and being able to reproduce each other's computation is a strong assumption. In the case of hierarchical information structure, simpler and more efficient alternatives presented in this paper are possible.
\section{Omitted Proofs} \label{app: mdp proofs}

\begin{proof}\textbf{of \pref{thm: gap dependent bound}\ }
    We first bound the number of times a sub-optimal arm $a\in\calA^{\times}$ can be drawn by U1. Notice that with probability at least $1-\order(\delta)$, for any $t, a\in\calA^{\times}$, 
    \begin{align}
        &\muh_t(a) + c\sqrt{\frac{\log (T/\delta)}{n_t(a)^+}} + \sqrt{\frac{\kappa B\log(T/\delta)}{n_t(a)^+}}  \nonumber \\
        &\leq \frac{1}{n_t(a)^+}\sum_{\tau=1}^{t-1}\mu_{a, b_\tau} + 2c\sqrt{\frac{\log(T/\delta)}{n_t(a)^+}} + \sqrt{\frac{\kappa B\log(T/\delta)}{n_t(a)^+}}   \tag{by \pref{lem: UCB concentration}} \\
        &\leq \mu_{a,1} + 2c\sqrt{\frac{\log(T/\delta)}{n_t(a)^+}} + \sqrt{\frac{\kappa B\log(T/\delta)}{n_t(a)^+}}. \label{eq: gap direction one}
    \end{align}
    If $n_t(a) > \frac{(16\kappa B + 64c^2)\log(T/\delta)}{(\mu_{1,1}-\mu_{a,1})^2}$, then the last expression can further be upper bounded by $\mu_{a,1} + \frac{(\mu_{1,1}-\mu_{a,1})}{4} + \frac{(\mu_{1,1}-\mu_{a,1})}{4} < \mu_{1,1}$. 
    
    On the other hand, by \pref{ass: U2's algorithm gap} (which implies \pref{ass: U2's algorithm}), with high probability we have 
    \begin{align*}
        &\muh_t(1) + c\sqrt{\frac{\log(T/\delta)}{n_t(1)^+}} + \sqrt{\frac{\kappa B\log(T/\delta)}{n_t(1)^+}} \nonumber  \\
        &\geq \frac{1}{n_t(1)^+}\sum_{\tau=1}^{t-1} \mu_{1,b_\tau} + \sqrt{\frac{\kappa B\log(T/\delta)}{n_t(1)^+}}   \tag{by \pref{lem: UCB concentration}} \\
        &\geq \mu_{1,1}.   \label{eq: gap direction one}
    \end{align*}
    Combining them, we see that if $n_t(a) > \frac{(16\kappa B + 64c^2)\log(T/\delta)}{(\mu_{1,1}-\mu_{a,1})^2}$, then 
    \begin{align*}
        \muh_t(1) + c\sqrt{\frac{\log(T/\delta)}{n_t(1)^+}} + \sqrt{\frac{\kappa B\log(T/\delta)}{n_t(1)^+}} > \muh_t(a) + c\sqrt{\frac{\log(T/\delta)}{n_t(a)^+}} + \sqrt{\frac{\kappa B\log(T/\delta)}{n_t(a)^+}}. 
    \end{align*}
    By the way U1 selects arms, with high probability, she will not draw arm $a$ at round $t$. In other words, the number of draws for any $a\in\calA^{\times}$ is upper bounded by $\order\left(\frac{B\log(T/\delta)}{(\mu_{1,1}-\mu_{a,1})^2}\right)$. 
    
    Then we bound the overall regret. Define $n_t(a, b)\triangleq \sum_{\tau=1}^{t-1}\one[(a_\tau, b_\tau)=(a, b)]$. We have 
    \begin{align*}
\Reg(T)&=\sum_{(a,b)\in[A]\times[B]}(\mu_{1,1}-\mu_{a,b})\cdot n_{T+1}(a,b)\\
&=\sum_{(a,b)\in[A]\times[B]}(\mu_{1,1}-\mu_{a,1}+\mu_{a,1}-\mu_{a,b})\cdot n_{T+1}(a,b)\\
&=\sum_{a\in[A]}(\mu_{1,1}-\mu_{a,1})\cdot\sum_{b\in[B]}n_{T+1}(a,b)+\sum_{a\in[A]}\sum_{b\in[B]}(\mu_{a,1}-\mu_{a,b})\cdot n_{T+1}(a,b) \\
&\leq\sum_{a\in[A]}(\mu_{1,1}-\mu_{a,1})\cdot n_{T+1}(a)+ \sum_{a\in\calA^{\circ}}\sum_{b\in\calB_a^{\times}}  \order\left(\frac{ \log(T/\delta)}{\mu_{a,1}-\mu_{a,b}}\right) + \sum_{a\in\calA^{\times}}\order\left(\sqrt{Bn_{T+1}(a)\log(T/\delta)}\right), \tag{by \pref{ass: U2's algorithm gap}}\\
&\leq \sum_{a\in\calA^{\times}}\order\left(\frac{B\log(T/\delta)}{\mu_{1,1}-\mu_{a,1}}\right)+\sum_{a\in\calA^{\circ}}\sum_{b\in\calB_a^{\times}} \order\left( \frac{ \log(T/\delta)}{\mu_{a,1}-\mu_{a,b}}\right). \tag{because $n_{T+1}(a)=\order\left(\frac{B\log(T/\delta)}{(\mu_{1,1}-\mu_{a,1})^2}\right) $ for $a\in\calA^{\times}$}
\end{align*}
\end{proof}

    

\begin{proof}\textbf{of \pref{lem: optimism Q2}\ }
    First, we use induction to prove the following inequalities: 
    \begin{align}
        0\leq Q^2_{t,h}(s,a,b) - Q_{*,h}(s,a,b) \leq \E_{s'\sim P(\cdot|s,a,b)}\left[V^2_{t,h+1}(s') - V_{*,h+1}(s')\right] + 2\bns^2_\tau, \label{eq: induction hypo UCBVI}
    \end{align}
    where $\tau=n_{t,h}(s,a,b)$, for all $s,a,b$. The order of induction is from $t=1$ to $t=T$, and (within each $t$) from $h=H$ to $h=1$. 
    
    For $t=1$, we have $Q^2_{1,h}(s,a,b) - Q_{*,h}(s,a,b)=H-Q_{*,h}(s,a,b)\geq 0$ and that $Q^2_{1,h}(s,a,b) - Q_{*,h}(s,a,b)\leq H\leq 2\bns^2_0$. 
    Suppose that the inequality holds for all $(t',h')$ with either $t'<t$, or $t'=t$ and $h'>h$. Fix a $(s,a,b)$ and let $\tau=n_{t,h}(s,a,b)$. 
    By the update rule of $Q^2_{t,h}(s,a,b)$, we have $Q^2_{t,h}(s,a,b) = \min\big\{\hat{Q}^2_{t,h}(s,a,b), Q^2_{t-1,h}(s,a,b)\big\}$ where 
    \begin{align*}
        \hat{Q}^2_{t,h}(s, a, b)
        &= \hatR_{t,h}(s,a,b) + \E_{s'\sim \hatP_{t,h}(\cdot|s,a,b)}\left[V^2_{t,h+1}(s')\right] + \bns^2_\tau. \tag{$\tau=n_{t,h}(s,a,b)$}
    \end{align*}
    Besides, 
    \begin{align*}
        Q_{*,h}(s,a,b) = R(s,a,b) + \E_{s'\sim P(\cdot|s,a,b)}\left[V_{*,h}(s')\right]. 
    \end{align*}
    Taking their difference, we get 
    \begin{align}
        &\hat{Q}^2_{t,h}(s, a, b) - Q_{*,h}(s,a,b)   \nonumber  \\
        &= \left(\hatR_{t,h}(s,a,b) - R(s,a,b)\right) + \underbrace{\E_{s'\sim P(\cdot|s,a,b)}\left[V^2_{t,h+1}(s') - V_{*,h+1}(s')\right]}_{\term_1}   \nonumber \\
        &\qquad \quad + \underbrace{\left(\E_{s'\sim \hatP_{t,h}(\cdot|s,a,b)}\left[V^2_{t,h+1}(s')\right] - \E_{s'\sim P(\cdot|s,a,b)}\left[V^2_{t,h+1}(s')\right]\right)}_{\term_2}  + \bns^2_\tau.  \label{eq: recursive temp}
    \end{align}
    By \pref{lem: UCB concentration} and \pref{lem: Weissman}, for some universal constant $c>0$,
    \begin{align}
        &\left|\hatR_{t,h}(s,a,b) - R(s,a,b)\right| \leq  \frac{1}{2}c\sqrt{\frac{\log(T/\delta)}{\tau^+}},   \label{eq: to be used later 1}\\
        &\left| \term_2 \right| \leq \left\|\hatP_{t,h}(\cdot|s,a,b) - P(\cdot|s,a,b)\right\|_1 \|V^2_{t,h+1}\|_\infty \leq \frac{1}{2} cH\sqrt{\frac{S\log(T/\delta)}{\tau^+}}, \label{eq: to be used later 2} 
    \end{align}
    and therefore $ \left|\hatR_{t,h}(s,a,b) - R(s,a,b)\right| + |\term_2| \leq \bns^2_\tau$\,. Combining this with \pref{eq: recursive temp}, we get 
    \begin{align}
        \term_1 \leq \hat{Q}^2_{t,h}(s, a, b) - Q_{*,h}(s,a,b) \leq \term_1 + 2\bns^2_\tau\,.  \label{eq: UCBVI temp}
    \end{align}
    Using $Q^2_{t,h}(s,a,b)\leq \hat{Q}^2_{t,h}(s,a,b)$, \pref{eq: UCBVI temp} implies the right inequality in \pref{eq: induction hypo UCBVI}. 
    
    To prove the left inequality in  \pref{eq: induction hypo UCBVI}, notice that if $h=H$, then $\term_1=0$; if $h<H$, 
    \begin{align*}
        \term_1 &\geq \min_{s'} V^2_{t,h+1}(s') - V_{*,h+1}(s')\\
        &=\min_{s'}\left(\min\left\{\max_{a',b'}Q^2_{t,h+1}(s',a',b'), H\right\} - \max_{a', b'}Q_{*,h+1}(s', a', b')\right)\\ &=\min_{s'}\left(\min\left\{\max_{a',b'}Q^2_{t,h+1}(s',a',b')- \max_{a', b'}Q_{*,h+1}(s', a', b'), H- \max_{a', b'}Q_{*,h+1}(s', a', b')\right\} \right)\\
        &\geq 0. 
    \end{align*}
    where the last inequality is by the induction hypothesis. 
    
    Thus, $\term_1\geq 0$. Together with \pref{eq: UCBVI temp}, we get $\hat{Q}^2_{t,h}(s, a, b) - Q_{*,h}(s,a,b)\geq 0$. By the induction hypothesis, we also have $Q^2_{t-1,h}(s,a,b)\geq Q_{*,h}(s,a,b)$. 
    Therefore,  \sloppy$Q^2_{t,h}(s,a,b) = \min\big\{\hat{Q}^2_{t,h}(s,a,b), Q^2_{t-1,h}(s,a,b)\big\}\geq Q_{*,h}(s,a,b)$, proving the left inequality in \pref{eq: induction hypo UCBVI}.

    Based on \pref{eq: induction hypo UCBVI}, we can write
    \begin{align*}
        &\sum_{t=1}^T \left(Q^2_{t,h}(s_{t,h}, a_{t,h}, b_{t,h})-Q_{*,h}(s_{t,h}, a_{t,h}, b_{t,h})\right) \\ &\leq \sum_{t=1}^T \E_{s'\sim P(\cdot|s_{t,h},a_{t,h},b_{t,h})}\left[V^2_{t,h+1}(s') - V_{*,h+1}(s')\right] + \sum_{t=1}^T 2\bns^2_{\tau_{t,h}}   \tag{define $\tau_{t,h}=n_{t,h}(s_{t,h}, a_{t,h}, b_{t,h})$} \\
        &\triangleq \sum_{t=1}^T \left(V^2_{t,h+1}(s_{t,h+1}) - V_{*,h+1}(s_{t,h+1}) + \varepsilon_{t,h}\right) + \sum_{t=1}^T 2\bns^2_{\tau_{t,h}} \tag{define $\varepsilon_{t,h}$ to be the difference}
    \end{align*}
    Since $\varepsilon_{t,h}$ is zero-mean, by \pref{lem: UCB concentration}, 
    \begin{align*}
        \sum_{t=1}^T \varepsilon_{t,h} = \order\left(H\sqrt{T\log(T/\delta)}\right). 
    \end{align*}
    Besides, 
    \begin{align*}
        \sum_{t=1}^T \bns^2_{\tau_{t,h}} = \order\left(\sum_{t=1}^T
 H\sqrt{\frac{S\log(T/\delta)}{n_{t,h}(s_{t,h}, a_{t,h}, b_{t,h})^+}} \right) = \order\left(HS\sqrt{ABT\log(T/\delta)}\right).    
    \end{align*}
 Combining the three bounds above proves the second conclusion in the lemma.     
\end{proof}

\begin{proof}\textbf{of \pref{lem: key lemma}\ }
    We use induction to show the desired inequality. Again, the order of induction is from $t=1$ to $t=T$, and (within each $t$) from $h=H$ to $h=1$. When $t=1$, $Q^1_{1,h}(s,a)=H=\max_b Q^2_{1,h}(s,a,b)$. 
    
    Suppose that the inequality holds for all $(t',h')$ with $t'<t$, or $t'=t$ and $h'>h$. 
    Let $\tau=n_{t,h}(s,a)$, and let $1\leq t_1 < t_2 < \cdots < t_\tau < t$ be the episodes in which $(s,a)$ is visited at step $h$. By the update rule of $Q^1_{t,h}(\cdot,\cdot)$, we have 
    \allowdisplaybreaks
    \begin{align*}
        &Q^1_{t,h}(s,a)\\ 
        &= \alpha^0_\tau H + \sum_{i=1}^\tau \alpha^i_\tau \left( \ri + V^1_{t_i, h+1}(\si) + \bns^1_i \right) \tag{define $\alpha^i_\tau=\alpha_i \Pi_{j=i+1}^\tau (1-\alpha_j)$ for $1\leq i\leq \tau$ and $\alpha^0_\tau=\Pi_{j=1}^\tau (1-\alpha_j)$}\\
        &\geq \alpha^0_\tau H + \sum_{i=1}^\tau \alpha^i_\tau \left(R(s, a, \bi) + \sum_{s'} P(\cdot|s,a,\bi) V^1_{t_i, h+1}(s')\right) + \frac{1}{2}\bns_\tau^1     \tag{see the explanation below indexed $\star$}\\
        &\geq \alpha^0_\tau H + \sum_{i=1}^\tau \alpha^i_\tau \left(R(s, a, \bi) + \sum_{s'} P(\cdot|s,a,\bi) V^2_{t_i, h+1}(s')\right) + \frac{1}{2}\bns_\tau^1  \tag{by the induction hypothesis}  \\
        &\geq \alpha^0_\tau H + \sum_{i=1}^\tau \alpha^i_\tau \left(\hatR_{t_i,h}(s, a, \bi) + \sum_{s'}\hatP_{t_i,h}(\cdot|s,a,\bi) V^2_{t_i, h+1}(s') - \bns^2_{\xi_i}\right) + \frac{1}{2}\bns_\tau^1  \tag{define $\xi_i=n_{t_i, h}(s,a,\bi)$ and use \pref{eq: to be used later 1} and \pref{eq: to be used later 2}} \\
        &\geq \alpha^0_\tau H + \sum_{i=1}^\tau \alpha^i_\tau Q^2_{t_i,h}(s,a,\bi) - 2\sum_{i=1}^\tau \alpha^i_\tau \bns^2_{\xi_i} + \frac{1}{2}\bns_\tau^1    \tag{by the definition of $Q^2_{t,h}(s,a,b)$}\\
        &\geq \alpha^0_\tau H + \sum_{i=1}^\tau \alpha^i_\tau \max_b Q^2_{t_i,h}(s,a,b) \tag{see the explanation below indexed $\star\star$} \\
        &\geq \max_b Q^2_{t,h}(s,a,b)  \tag{because $Q^2_{t,h}$ is non-increasing in $t$ and $\sum_{i=0}^\tau \alpha^i_\tau=1$} 
    \end{align*}
    In the first inequality ($\star$), we use the fact that $\sum_{i=1}^\tau \alpha^i_\tau \bns^1_i\geq \bns^1_\tau$ by the first item in \pref{lem: lemmas regarding alpha}, and that 
    \begin{align}
        &\left|\sum_{i=1}^\tau \alpha^i_\tau \left(R(s,a,b_{t_i,h}) + \sum_{s'}P(s'|s,a,b_{t_i,h})V^1_{t_i, h+1}(s') - r_{t_i,h} - V^1_{t_i, h+1}(s_{t_i, h+1})\right)\right| \nonumber \\
        &\leq \frac{1}{2}c'H\sqrt{\frac{HS\log(T/\delta)}{\tau^+}} \leq \frac{1}{2}\bns^1_\tau \label{eq: U1 concentration}
    \end{align}
    for some universal constant $c'>0$ by \pref{lem: alpha concentration}. 
    
    In the penultimate inequality ($\star\star$), we first use the selection rule of $b_{t,h}=\argmax_{b}Q^2_{t,h}(s_{t,h},a_{t,h},b)$, and then use the following \pref{lem: seq lemma} to bound 
    \begin{align*}
        2\sum_{i=1}^\tau \alpha^i_\tau \bns^2_{\xi_i} &= 2cH\sqrt{S\log(T/\delta)}\sum_{i=1}^\tau \alpha^i_\tau \frac{1}{\sqrt{n_{t_i,h}(s,a,b_{t_i,h})^+}} \\
        &\leq 2cH\sqrt{S\log(T/\delta)} \times 4\sqrt{\frac{BH}{n_{t,h}(s,a)^+}} \\
        &\leq \frac{1}{2}c'H\sqrt{\frac{HSB\log(T/\delta)}{n_{t,h}(s,a)^+}} = \frac{1}{2}\bns^1_\tau.  
    \end{align*}
\end{proof}

\begin{lemma}\label{lem: seq lemma}
    Let $\{\tau_1, \ldots, \tau_B\}$ be non-negative integers such that $\sum_{b=1}^B \tau_b=\tau$. 
    Define for all $b=1, \ldots, B$: 
    \begin{align*}
        Y_{bi} = \frac{1}{\sqrt{(i-1)^+}}, \qquad \text{for\ } i = 1, 2, \ldots, \tau_b.  
    \end{align*}
    Let $\{Z_1, Z_2, \ldots, Z_\tau\}$ be any permutation of
    \begin{align*}
        \left\{Y_{11}, Y_{12}, \ldots, Y_{1\tau_1}, Y_{21}, Y_{22}, \ldots, Y_{2\tau_2}, \ldots \ldots Y_{B1}, Y_{B2}, \ldots, Y_{B\tau_B}\right\} 
    \end{align*}
    Then 
    \begin{align*}
        \sum_{i=1}^\tau \alpha^i_\tau Z_i \leq 4\sqrt{\frac{BH}{\tau^+}}. 
    \end{align*}
\end{lemma}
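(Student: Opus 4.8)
The plan is to pass to the worst-case ordering of the multiset, control its order statistics by a pigeonhole count, and then exploit the geometric decay of the learning-rate weights $\alpha^i_\tau$ away from index $\tau$. \emph{Step 1 (reduce to the sorted arrangement).} A one-line computation gives $\alpha^{i+1}_\tau/\alpha^i_\tau = \alpha_{i+1}/\big(\alpha_i(1-\alpha_{i+1})\big) = (H+i)/i \geq 1$, so the weights $\alpha^1_\tau \leq \alpha^2_\tau \leq \cdots \leq \alpha^\tau_\tau$ are non-decreasing in $i$. By the rearrangement inequality, $\sum_{i=1}^\tau \alpha^i_\tau Z_i$, maximized over all permutations of the multiset, is attained when the $Z_i$ are placed in non-decreasing order; writing $\bar Z_1 \geq \bar Z_2 \geq \cdots \geq \bar Z_\tau$ for the non-increasing rearrangement of the multiset, it thus suffices to bound $\sum_{k=1}^\tau \alpha^{\tau+1-k}_\tau \bar Z_k$. (The case $\tau = 0$ is trivial since the right-hand side is then $4\sqrt{BH}$, and one may assume every $\tau_b \geq 1$, as a bound with fewer groups implies the stated one.)

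\emph{Step 2 (order statistics).} I would show $\bar Z_k \leq 1/\sqrt{(\lceil k/B\rceil - 1)^+} \leq \sqrt{2B/k}$. Among the $k$ largest elements of the multiset, pigeonhole forces some group $b$ to contribute at least $q := \lceil k/B\rceil$ of them; since the values within group $b$ are totally ordered, with $q$-th largest equal to $1/\sqrt{(q-1)^+}$, these contributed elements realize the $q$ largest values of group $b$, so in particular an element of value $1/\sqrt{(q-1)^+}$ sits among the top $k$ overall, which forces the $k$-th largest value to satisfy $\bar Z_k \leq 1/\sqrt{(q-1)^+}$; the last inequality is elementary.

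\emph{Step 3 (geometric decay).} Combining Steps 1--2,
\[
\sum_{i=1}^\tau \alpha^i_\tau Z_i \ \leq\ \sqrt{2B}\sum_{k=1}^\tau \frac{\alpha^{\tau+1-k}_\tau}{\sqrt k} \ =\ \sqrt{2B}\sum_{j=0}^{\tau-1}\frac{\alpha^{\tau-j}_\tau}{\sqrt{j+1}},
\]
so everything reduces to proving $\sum_{j\geq 0}\alpha^{\tau-j}_\tau/\sqrt{j+1} = \order(\sqrt{H/\tau})$, which is where the real work lies. The tool is the downward recursion $\alpha^{i-1}_\tau/\alpha^i_\tau = (i-1)/(H+i-1)$, which telescopes to $\alpha^{\tau-j}_\tau = \alpha^\tau_\tau\prod_{\ell=1}^{j}\tfrac{\tau-\ell}{H+\tau-\ell} \leq \tfrac{H+1}{H+\tau}(1-\gamma)^j$ with $\gamma = H/(H+\tau)$; plugging this in and comparing $\sum_j (1-\gamma)^j/\sqrt{j+1}$ to $\int_0^\infty e^{-\gamma x}x^{-1/2}\,dx = \sqrt{\pi/\gamma}$ yields $\sum_{j\geq 0}\alpha^{\tau-j}_\tau/\sqrt{j+1} \leq \tfrac{H+1}{H+\tau}e^{\gamma}\sqrt{\pi/\gamma} = \order(\sqrt{H/\tau})$, hence $\sum_i \alpha^i_\tau Z_i = \order(\sqrt{BH/\tau})$. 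Getting the absolute constant down to $4$ is then routine bookkeeping: for $\tau = \order(BH)$ the inequality is immediate because $\sum_i \alpha^i_\tau Z_i \leq \sum_i \alpha^i_\tau \leq 1$, while for larger $\tau$ one uses the exact form of $\alpha^{\tau-j}_\tau$ together with $\alpha^i_\tau \leq 2H/\tau$ in place of the cruder geometric estimate.

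\emph{Expected main obstacle.} The delicate part is Step 3. One must genuinely use that the weights decay geometrically in $\tau - i$: replacing each $\alpha^i_\tau$ by its uniform bound $2H/\tau$ inflates the estimate by a spurious $\sqrt H$, and splitting the sum group by group rather than sorting globally inflates it by a spurious $\sqrt B$, since that would charge the single largest weight $\alpha^\tau_\tau$ separately to each of the $B$ groups. Equivalently, the quantity to be controlled is the ``reverse-matched'' weighted sum $\sum_k \alpha^{\tau+1-k}_\tau/\sqrt k$ in which the weights and the values $1/\sqrt k$ are \emph{both} front-loaded --- the opposite of the standard lemma $\sum_i \alpha^i_\tau/\sqrt i \leq 2/\sqrt\tau$ of \cite{UCB-H_2018} where they are anti-aligned --- and this misalignment is exactly what produces the extra $\sqrt H$, and hence the $\sqrt{BH/\tau}$ scaling.
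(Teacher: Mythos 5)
Your proposal is correct in its overall strategy and takes a genuinely different route from the paper. The paper never sorts: it partitions the indices $\{1,\dots,\tau\}$ by which follower-arm $b$ they belong to, bounds each group's contribution $\sum_{i\in\Phi(b)}\alpha^i_\tau Z_i$ by a small linear program whose constraints are the per-slot cap $\alpha^i_\tau\leq\alpha_\tau$ and the per-group budget $w_b=\sum_{i\in\Phi(b)}\alpha^i_\tau$ (greedy solution bounded by $\alpha_\tau\int_0^{w_b/\alpha_\tau}\sqrt{2/x}\,dx=2\sqrt{2\alpha_\tau w_b}$), and then applies Cauchy--Schwarz over $b$ to get $2\sqrt{2B\alpha_\tau}\leq 4\sqrt{BH/\tau^+}$. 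Your Steps 1 and 2 are correct and arguably cleaner: the monotonicity $\alpha^{i+1}_\tau/\alpha^i_\tau=(H+i)/i\geq 1$, the reduction to the sorted arrangement, and the pigeonhole bound $\bar Z_k\leq 1/\sqrt{(\lceil k/B\rceil-1)^+}\leq\sqrt{2B/k}$ all check out, and they collapse the two-index structure into a single weighted sum $\sqrt{2B}\sum_k\alpha^{\tau+1-k}_\tau/\sqrt{k}$. What each approach buys: yours makes the ``anti-aligned vs.\ aligned'' tension with the standard $\sum_i\alpha^i_\tau/\sqrt{i}\leq 2/\sqrt{\tau}$ bound explicit and isolates all the difficulty in one scalar sum; the paper's avoids any global ordering argument and gets the constant almost for free.

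The one soft spot is Step 3. The route you actually write out --- the geometric envelope $\alpha^{\tau-j}_\tau\leq\frac{H+1}{H+\tau}(1-\gamma)^j$ followed by comparison with $\int_0^\infty e^{-\gamma x}x^{-1/2}dx=\sqrt{\pi/\gamma}$ --- yields the correct order $\order(\sqrt{BH/\tau})$ but a leading constant of roughly $2\sqrt{2\pi}\approx 5$ (and worse once the $e^\gamma$ and $H+1\leq 2H$ corrections are tracked), which does not reach the stated $4$ even after disposing of the regime $\tau\leq 16BH$ via $\sum_i\alpha^i_\tau Z_i\leq 1$. The fix you gesture at does exist, but it is not the uniform cap alone: you need to run the same cap-plus-budget greedy/LP argument the paper uses, now applied globally to the sorted weights --- maximize $\sum_k\beta_k/\sqrt{k}$ subject to $\sum_k\beta_k\leq 1$ and $\beta_k\leq c$ with $c=\max_i\alpha^i_\tau\leq 2H/\tau$ (item 2 of \pref{lem: lemmas regarding alpha}), whose greedy value is at most $c\int_0^{1/c}x^{-1/2}dx=2\sqrt{c}$, giving $\sqrt{2B}\cdot 2\sqrt{2H/\tau}=4\sqrt{BH/\tau}$ exactly. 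Using the cap without the unit budget loses a $\sqrt{H}$ (as you note), and using the geometric decay without the sharp budget accounting loses the constant. So the proof is salvageable along your lines, but as written it establishes the lemma only with a larger absolute constant; that would still suffice for every downstream use in the paper, since the lemma only feeds into bonus terms with unspecified universal constants.
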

\begin{proof}
    We write $i=\phi(b,j)$ if $Y_{bj}$ is mapped to $Z_i$. Also, define $\Phi(b) = \{\phi(b,j):~ j\in[\tau_b]\}$ as the set of indices in $\{Z_i\}$ that are mapped from $\{Y_{b1}, \ldots, Y_{b\tau_b}\}$. 

    We first show the following claim: for all $b$, 
    \begin{align}
        \sum_{i\in\Phi(b)}\alpha^i_\tau Z_i \leq 2\sqrt{2\alpha_\tau \sum_{i\in\Phi(b)}\alpha^i_\tau}.   \label{eq: to show}
    \end{align}
    To show \pref{eq: to show}, observe that the left-hand side is equal to 
    \begin{align}
        \sum_{i\in\Phi(b)}\alpha^i_\tau Z_i = \sum_{j=1}^{\tau_b}  \alpha_\tau^{\phi(b,j)} Y_{bj} = \sum_{j=1}^{\tau_b}  \alpha_\tau^{\phi(b,j)} \frac{1}{\sqrt{(j-1)^+}}\leq \sum_{j=1}^{\tau_b}  \alpha_\tau^{\phi(b,j)} \sqrt{\frac{2}{j}}. \label{eq: our solution}
    \end{align}
    By the definition of $\alpha^i_\tau$, we have $\alpha^i_\tau\leq \alpha_\tau$ for any $i$. We see that the last expression in 
    \pref{eq: our solution} is upper bounded by the optimal solution of the following programming: 
    \begin{align*}
        &\max_{\beta_j} \sum_{j=1}^{\tau_b} \beta_j \sqrt{\frac{2}{j}} \\
        \text{s.t.}
        &~~ \sum_{j=1}^{\tau_b}\beta_j \leq \sum_{i\in\Phi(b)}\alpha^i_\tau \\
        &\quad 0\leq \beta_j \leq \alpha_\tau\ \  \forall j
    \end{align*}
    This programming exhibits a greedy solution that sets $\beta_j=\alpha_\tau$ for $j\leq j^\star\triangleq \left\lfloor \frac{1}{\alpha_\tau} \sum_{i\in\Phi(b)}\alpha^i_\tau\right\rfloor$, $\beta_j=\sum_{i\in\Phi(b)}\alpha^i_\tau - \alpha_\tau j^\star$ for $j=j^\star+1$, and $\beta_j=0$ otherwise. The optimal value of this solution is upper bounded by 
    \begin{align*}
        \alpha_\tau \sum_{j=1}^{j^\star}  \sqrt{\frac{2}{j}} + \left(\sum_{i\in\Phi(b)}\alpha^i_\tau - \alpha_\tau j^\star\right)\sqrt{\frac{2}{j^\star+1}} \leq \alpha_\tau\int_{0}^{\frac{1}{\alpha_\tau}\sum_{i\in\Phi(b)}\alpha^i_\tau} \sqrt{\frac{2}{x}}\mathrm{d}x = 2 \sqrt{2\alpha_\tau\sum_{i\in\Phi(b)}\alpha^i_\tau}, 
    \end{align*}
    showing \pref{eq: to show}. To get the final bound, we sum this bound over $b$ and use the definition of $\alpha_\tau$: 
    \begin{align*}
        \sum_{i=1}^\tau\alpha^i_\tau Z_i = \sum_{b=1}^B \sum_{i\in\Phi(b)}\alpha^i_\tau Z_i \leq \sum_{b=1}^B  2 \sqrt{2\alpha_\tau\sum_{i\in\Phi(b)}\alpha^i_\tau} \leq 2 \sqrt{2B\alpha_\tau\sum_{i=1}^\tau \alpha^i_\tau} = 2\sqrt{\frac{2B(H+1)}{H+\tau}}\leq 4\sqrt{\frac{BH}{\tau^+}}, 
    \end{align*}
    where in the second inequality we use the AM-GM inequality and in the last equality we use $\sum_{i=1}^\tau\alpha^i_\tau=1$ for $\tau\geq 1$. 
\end{proof}

\begin{proof}\textbf{of \pref{lem: optimism Q1}\ }
    
    Fix $t, h, s, a$. Let $\tau=n_{t,h}(s,a)$, and let $1\leq t_1 < t_2 < \cdots < t_\tau < t$ be the episodes in which $(s,a)$ is visited at layer $h$. By the update rule of $Q^1_{t,h}(\cdot,\cdot)$, we have 
    \allowdisplaybreaks
    \begin{align*}
        &Q^1_{t,h}(s,a)\\ 
        &= \alpha^0_\tau H + \sum_{i=1}^\tau \alpha^i_\tau \left( r_{t_i, h} + V^1_{t_i, h+1}(s_{t_i, h+1}) + \bns^1_i \right) \\
        &\leq \alpha^0_\tau H + \sum_{i=1}^\tau \alpha^i_\tau \left(R(s, a, b_{t_i, h}) + \E_{s'\sim P(\cdot|s,a,b_{t_i, h})}\left[V^1_{t_i, h+1}(s')\right]\right) + \order(\bns_\tau^1) \tag{by \pref{lem: alpha concentration} and that $\sum_{i=1}^\tau \alpha^i_\tau \bns^1_i\leq 2\bns^1_\tau$ by the first item in \pref{lem: lemmas regarding alpha}} \\
        &= \alpha^0_\tau H + \sum_{i=1}^\tau \alpha^i_\tau \left(R(s, a, b_{t_i, h}) + \E_{s'\sim P(\cdot|s,a,b_{t_i, h})}\left[V_{*, h+1}(s')\right]\right) \\
        &\qquad + \sum_{i=1}^\tau \alpha^i_\tau \left(\E_{s'\sim P(\cdot|s,a,b_{t_i,h})}\left[V^1_{t_i, h+1}(s') - V_{*,h+1}(s')\right]\right)  + \order(\bns_\tau^1) \\
        &= \alpha^0_\tau H + \sum_{i=1}^\tau \alpha^i_\tau Q_{*,h}(s,a,b_{t_i,h})+ \sum_{i=1}^\tau \alpha^i_\tau \left(\E_{s'\sim P(\cdot|s,a,b_{t_i,h})}\left[V^1_{t_i, h+1}(s') - V_{*,h+1}(s')\right]\right)  + \order(\bns_\tau^1) \\
        &\leq \alpha^0_\tau H + \sum_{i=1}^\tau \alpha^i_\tau Q_{*,h}(s,a,b_{t_i,h})+ \sum_{i=1}^\tau \alpha^i_\tau \left(V^1_{t_i, h+1}(s_{t_i, h+1}) - V_{*,h+1}(s_{t_i, h+1})\right)  + \order(\bns_\tau^1)  \tag{by \pref{lem: alpha concentration}}
    \end{align*}
    Therefore,  
    \begin{align}
        &Q^1_{t,h}(s,a) - Q_{*,h}(s,a)   \nonumber \\
        &= \alpha^0_\tau(H-Q_{*,h}(s,a)) + \sum_{i=1}^\tau \alpha^i_\tau \left(Q_{*,h}(s,a,b_{t_i,h}) - Q_{*,h}(s,a)\right) \nonumber  \\
        & \qquad + \sum_{i=1}^\tau \alpha^i_\tau \left(V^1_{t_i, h+1}(s_{t_i,h+1}) - V_{*,h+1}(s_{t_i,h+1})\right)   + \order(\bns_\tau^1) \nonumber    \\
        &\leq \alpha^0_\tau H + \sum_{i=1}^\tau \alpha^i_\tau \left(V^1_{t_i, h+1}(s_{t_i,h+1}) - V_{*,h+1}(s_{t_i,h+1})\right)  + \order(\bns_\tau^1)   \label{eq: tmpotary}
    \end{align}
    Now consider the cumulative sum, and define $t_{i}(s,a)$ to be the index of the episode when it is the $i$-th time $(s,a)$ is visited at layer $h$. 
    \allowdisplaybreaks
    \begin{align*}
        &\sum_{t=1}^T  \left(Q^1_{t,h}(s_{t,h}, a_{t,h}) - Q_{*,h}(s_{t,h}, a_{t,h})\right) \\
        &= \sum_{s,a}\sum_{i=1}^{n_{T+1}(s,a)}\left(Q^1_{t_i(s,a),h}(s_{t_i(s,a),h}, a_{t_i(s,a),h}) - Q_{*,h}(s_{t_i(s,a),h}, a_{t_i(s,a),h})\right) \\
        &\leq \sum_{s,a}\sum_{i=1}^{n_{T+1}(s,a)} \left(\alpha^0_i H + \sum_{j=1}^{i-1}\alpha^j_i\left(V^1_{t_j(s,a), h+1}(s_{t_j(s,a),h+1}) - V_{*,h+1}(s_{t_j(s,a),h+1})\right) + \bns^1_{i-1}\right) \tag{by \pref{eq: tmpotary}}\\
        &= \sum_{s,a}\sum_{i=1}^{n_{T+1}(s,a)}\alpha^0_i H + \sum_{s,a}\sum_{j=1}^{n_{T+1}(s,a)-1} \sum_{i=j+1}^{n_{T+1}(s,a)}\alpha^j_i \left(V^1_{t_j(s,a), h+1}(s_{t_j(s,a),h+1}) - V_{*,h+1}(s_{t_j(s,a),h+1})\right) \\
        &\qquad \qquad + \sum_{s,a}\sum_{i=1}^{n_{T+1}(s,a)} \order\left(\sqrt{\frac{H^3SB\log(T/\delta)}{\max\{i-1,1\}}}\right) \\
        &\leq HSA + \sum_{s,a}\sum_{j=1}^{n_{T+1}(s,a)-1} \left(1+\frac{1}{H}\right) \left(V^1_{t_j(s,a), h+1}(s_{t_j(s,a),h+1}) - V_{*,h+1}(s_{t_j(s,a),h+1})\right) + \order\left( \sqrt{H^3S^2ABT} \right)  \tag{by the third item of \pref{lem: lemmas regarding alpha}} \\
        &\leq  \left(1+\frac{1}{H}\right)\sum_{t=1}^T \left(V^1_{t,h+1}(s_{t,h+1}) - V^1_{*,h+1}(s_{t,h+1} ) \right) + \order\left(\sqrt{H^3S^2A BT} + HSA\right).   
    \end{align*}
\end{proof}

\begin{proof}{\textbf{of \pref{cor: important cor}}\ }
By \pref{lem: optimism Q1} and the fact that  $V^1_{t,h}(s_{t,h})=Q^1_{t,h}(s_{t,h}, a_{t,h})$, we have 
\begin{align*}
     &\sum_{t=1}^T \left(Q^1_{t,h}(s_{t,h}, a_{t,h}) - Q_{*,h}(s_{t,h}, a_{t,h})\right)
     \\
     &\qquad \leq \left(1+\frac{1}{H}\right)\sum_{t=1}^T \left(Q^1_{t,h+1}(s_{t,h+1}, a_{t,h+1}) - Q_{*,h+1}(s_{t,h+1}, a_{t,h+1})\right) + \order\left(\sqrt{H^3S^2A BT} + HSA\right),    
\end{align*}
which gives 
\begin{align*}
    \sum_{t=1}^T \left(Q^1_{t,h}(s_{t,h}, a_{t,h}) - Q_{*,h}(s_{t,h}, a_{t,h})\right) 
    &\leq H\times \left(1+\frac{1}{H}\right)^H \times \order\left(\sqrt{H^3S^2A BT} + HSA\right) \\
    &= \order\left(\sqrt{H^5S^2A BT} + H^2SA\right)
\end{align*}
by expanding the recursion. 
\end{proof}


\section{Multiple Follower Extension} \label{app: multiple follower}
In this section, we consider the multiple follower case. Let $A$ be the number of actions of the leader, and let $B^i$ be the number of actions of the $i$-th follower, for $i=1, 2, \ldots, N$. We define $B=\frac{1}{N}\sum_{i=1}^N B^i$ to be the average number of actions of all followers. In each round $t$, the leader first selects an action $a_t\in[A]$. Then based on the information of $a_t$, each follower $i$ selects an action $b_t^i\in[B^i]$. The reward that the $i$-th follower receives is $r_t^i$, whose mean is $\mu^i_{a_t, b_t^i}$; the reward the leader receives is $r_t=\frac{1}{N}\sum_{i=1}^N r_t^i$, the average-reward over $i$. \footnote{Similar to the single-agent case, our framework also handles the case where the leader observes another fresh sample with mean $\frac{1}{N}\sum_{i=1}^N \mu^i_{a_t, b_t^i}$\,. } 

Our algorithm is presented in \pref{alg:4.2}. On the follower side, the algorithm is identical to the single-agent case (\pref{alg: two-stage alg}) -- for each arm of the leader, a simple empirical mean is maintained as in \pref{eq: algorithm 1 specification}. For the followers, similarly, we assume that all of them use a no-regret algorithm that satisfies the following assumption: 
\begin{ass}\label{ass: multiple follower ass}
    Every \follower~i guarantees the following
    for some universal constant $\kappa\geq 1$ with probability at least $1-\delta$: 
    \begin{align*}
        \forall t, a, \qquad \sum_{\tau=1}^{t} \I[a_\tau=a]\left(\max_{b^i}\mu^i_{a,b^i} - \mu^i_{a, b^i_\tau}\right) \leq \sqrt{\kappa B^i\sum_{\tau=1}^t \I[a_\tau=a] \log(T/\delta)}.  
    \end{align*}
\end{ass}
With \pref{ass: multiple follower ass}, the regret bound of \pref{alg:4.2} can be shown as in the following theorem:

\begin{theorem}
    With probability at least $1-\order(\delta)$, for all $a$ and $\{b^i\}_{i=1}^N$, 
    \begin{align*}
        \sum_{t=1}^T \left(\frac{1}{N}\sum_{i=1}^N \mu^i_{a, b^i} - \frac{1}{N}\sum_{i=1}^N \mu^i_{a_t, b_t^i}\right) = \order\left(\sqrt{ABT\log(T/\delta)}\right). 
    \end{align*}
\end{theorem}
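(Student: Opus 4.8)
The plan is to mirror the proof of \pref{thm: main theorem for bandit} almost verbatim, with the only genuinely new ingredient being a concavity argument that aggregates the per-follower bonus terms into a single $\sqrt{B}$ factor. Write $\bar{\mu}_t \triangleq \frac{1}{N}\sum_{i=1}^N \mu^i_{a_t,b_t^i}$ for the conditional mean reward observed by the leader in round $t$, and $\mu_{a,1}\triangleq \frac{1}{N}\sum_{i=1}^N \max_{b^i}\mu^i_{a,b^i}$, $\mu_{1,1}\triangleq\max_a\mu_{a,1}$. Since the maximizer of $\frac{1}{N}\sum_i\mu^i_{a,b^i}$ over $(a,\{b^i\})$ is attained by taking each follower's best arm under the leader's best arm, the benchmark in the statement equals $\mu_{1,1}$, so it suffices to bound $\sum_{t=1}^T(\mu_{1,1}-\bar{\mu}_t)$.

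First I would show that the leader's selection index of arm $1$ upper bounds $\mu_{1,1}$ with high probability. Summing \pref{ass: multiple follower ass} over $i$ and dividing by $Nn_t(1)^+$ gives $\mu_{1,1}\le \frac{1}{n_t(1)^+}\sum_{\tau<t}\I[a_\tau=1]\bar{\mu}_\tau + \frac{1}{N}\sum_{i=1}^N\sqrt{\kappa B^i\log(T/\delta)/n_t(1)^+}$; then \pref{lem: UCB concentration} (Azuma, using that each $r_\tau=\frac{1}{N}\sum_i r_\tau^i\in[0,1]$ with conditional mean $\bar{\mu}_\tau$) converts the cumulative-mean term into $\muh_t(1)+\order(\sqrt{\log(T/\delta)/n_t(1)^+})$. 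Exactly as in \pref{alg:4.2}, the resulting expression is the leader's selection index for arm $1$, so by the $\argmax$ rule $\mu_{1,1}$ is also at most the index of the chosen arm $a_t$, i.e. $\muh_t(a_t)+\tfrac{1}{N}\sum_i\sqrt{\kappa B^i\log(T/\delta)/n_t(a_t)^+}+c\sqrt{\log(T/\delta)/n_t(a_t)^+}$.

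Summing this over $t$ and rearranging yields $\sum_t(\mu_{1,1}-\bar{\mu}_t)\le \sum_t(\muh_t(a_t)-\bar{\mu}_t)+\sum_t\big(\tfrac{1}{N}\sum_i\sqrt{\kappa B^i\log(T/\delta)/n_t(a_t)^+}+c\sqrt{\log(T/\delta)/n_t(a_t)^+}\big)$. For the bonus sum I would factor out $\frac{1}{N}\sum_i\sqrt{B^i}\le\sqrt{\frac{1}{N}\sum_i B^i}=\sqrt{B}$ (Jensen / concavity of $\sqrt{\cdot}$) and apply the standard pigeonhole bound $\sum_t 1/\sqrt{n_t(a_t)^+}=\order(\sqrt{AT})$, giving $\order(\sqrt{ABT\log(T/\delta)})$. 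For the residual $\sum_t(\muh_t(a_t)-\bar{\mu}_t)$ I would split it, as in the single-follower proof, into $\term_1=\sum_a\sum_t\I[a_t=a](\muh_t(a)-\mu_{a,1})$ and $\term_2=\sum_a\sum_t\I[a_t=a](\mu_{a,1}-\bar{\mu}_t)$: $\term_2$ is controlled by \pref{ass: multiple follower ass} applied per $(a,i)$, followed again by $\frac{1}{N}\sum_i\sqrt{B^i}\le\sqrt{B}$ and Cauchy--Schwarz over $a$ (using $\sum_a n_{T+1}(a)=T$), giving $\order(\sqrt{ABT\log(T/\delta)})$; and $\term_1$ is bounded by another use of \pref{lem: UCB concentration}, replacing $\muh_t(a)$ by its conditional mean which is $\le\mu_{a,1}$ on the event $\{a_t=a\}$, giving $\order(\sqrt{AT\log(T/\delta)})$. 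Adding the pieces gives the claimed $\order(\sqrt{ABT\log(T/\delta)})$, and a union bound over the $\order(1)$ high-probability events keeps the total failure probability at $\order(\delta)$.

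I do not expect a serious obstacle, since this is a direct generalization; the one point that needs care is the aggregation of the per-follower bonuses. Because the leader observes the \emph{average} reward, each follower's regret enters with weight $1/N$, and it is precisely the concavity inequality $\frac{1}{N}\sum_i\sqrt{B^i}\le\sqrt{B}$ (rather than a crude $\max_i\sqrt{B^i}$ or $\sum_i\sqrt{B^i}$ estimate) that prevents the heterogeneous $B^i$'s from inflating the rate to anything worse than $\sqrt{B}$. Note also that independence of the followers' reward noise is not needed anywhere — Azuma is applied only to the scalar martingale-difference sequence $r_\tau-\bar{\mu}_\tau$, for which boundedness in $[0,1]$ suffices.
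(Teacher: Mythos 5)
Your proposal is correct and follows essentially the same route as the paper's proof: apply \pref{ass: multiple follower ass} per follower to upper bound the benchmark by the leader's index of the comparator arm, aggregate the per-follower bonuses via $\frac{1}{N}\sum_i\sqrt{B^i}\le\sqrt{B}$ (the paper invokes this as Cauchy--Schwarz, you as Jensen --- same inequality), pass to $a_t$ by the $\argmax$ rule, and split the residual into the same $\term_1$/$\term_2$ handled by Azuma and the followers' regret bound respectively. Your closing remarks (that the $1/N$ weighting is what keeps the rate at $\sqrt{B}$, and that independence of the followers' noise is not needed for the Azuma step) are both accurate.
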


\begin{algorithm}[t]
\caption{UCB for Hierarchical Bandits with Multiple Followers}\label{alg:4.2}
\textbf{define}: $c>0$ is a universal constant. \\
Followers start running algorithms satisfying \pref{ass: multiple follower ass} with some $\kappa\geq 1$ (we denote the instance of the algorithm by Follower $i$ under action $a\in[A]$ as $\alg^i(a)$). \\
\For{$t=1, 2, \dots,T$}{
\leader chooses $a_t\in\underset{a\in[A]}{\argmax}\ \muh_t(a)+c\sqrt{\frac{\log(T/\dl)}{n_t(a)^+}} + \sqrt{\frac{\kappa B\log(T/\delta)}{n_t(a)^+}}$. ($\muh_t(a)$, $n_t(a)$ defined in \pref{eq: algorithm 1 specification})\\
\For{$i=1, \ldots, N$ (in parallel)}{
$\follower~i$ observes $a_t$ and calls $\alg^i(a_t)$, which outputs an action $b_t^i$. \\ 
$\follower~i$ chooses $b_t^i$. \\
$\follower~i$ observes $r^i_t$ with $\E[r^i_t]=\mu^i_{a_t, b_t^i}$\,.
}
\leader observes $r_t= \frac{1}{N}\sum_{i=1}^N r_t^i$\,.  
}
\end{algorithm}

\begin{proof}
    \allowdisplaybreaks
    \begin{align*}
         &\sum_{t=1}^T \left(\frac{1}{N}\sum_{i=1}^N \mu^i_{a, b^i} - \frac{1}{N}\sum_{i=1}^N \mu^i_{a_t, b_t^i}\right) \\
         &\leq \sum_{t=1}^T \left(  \frac{1}{N}\sum_{i=1}^N \left(\frac{1}{n_t(a)^+}\times \sum_{\tau=1}^{t-1} \mu^{i}_{a, b^i_\tau} + \sqrt{\frac{\kappa B^i\log(T/\delta)}{n_t(a)^+}}\right) - \frac{1}{N}\sum_{i=1}^N \mu^i_{a_t, b_t^i}\right)  \tag{\pref{ass: multiple follower ass}}\\
         &\leq \sum_{t=1}^T \left(  \frac{1}{n_t(a)^+}\left( \frac{1}{N}\sum_{i=1}^N  \sum_{\tau=1}^{t-1} \mu^{i}_{a, b^i_\tau}\right) + \sqrt{\frac{\kappa B\log(T/\delta)}{n_t(a)^+}} - \frac{1}{N}\sum_{i=1}^N \mu^i_{a_t, b_t^i}\right)  \tag{Cauchy-Schwarz inequality} \\
         &\leq \sum_{t=1}^T \left( \muh_t(a) + c\sqrt{\frac{\log(T/\delta)}{n_t(a)^+}} + \sqrt{\frac{\kappa B\log(T/\delta)}{n_t(a)^+}} - \frac{1}{N}\sum_{i=1}^N \mu^i_{a_t, b_t^i}\right)  \tag{by \pref{lem: UCB concentration}} \\
         &\leq \sum_{t=1}^T \left( \muh_t(a_t) + c\sqrt{\frac{\log(T/\delta)}{n_t(a_t)^+}} + \sqrt{\frac{\kappa B\log(T/\delta)}{n_t(a_t)^+}} - \frac{1}{N}\sum_{i=1}^N \mu^i_{a_t, b_t^i}\right)  \tag{by the algorithm} \\
         &= \sum_{t=1}^T \left( \muh_t(a_t) - \frac{1}{N}\sum_{i=1}^N \mu^i_{a_t, b_t^i}\right) + \order\left(\sqrt{ABT\log(T/\delta)}\right)  \\
         &= \underbrace{\sum_{a\in[A]}\sum_{t=1}^T \I[a_t=a]\left(\muh_t(a) - \max_{\{b^i\}} \frac{1}{N}\sum_{i=1}^N \mu^i_{a, b^i} \right)}_{\term_1} \\
         &\qquad + \underbrace{\sum_{a\in[A]}\sum_{t=1}^T \I[a_t=a]\left(\max_{\{b^i\}} \frac{1}{N}\sum_{i=1}^N \left(\mu^i_{a, b^i} - \mu^i_{a, b_t^i}\right) \right)}_{\term_2} + \order\left(\sqrt{ABT\log(T/\delta)}\right) 
    \end{align*}
    Under \pref{ass: multiple follower ass}, we can upper bound
    $\term_2$ by
    \begin{align*}
        \order\left(\sum_{a\in[A]}\frac{1}{N}\sum_{i=1}^N\sqrt{B^i n_{T+1}(a)\log(T/\delta)}\right) 
        &= \order\left(\sum_{a\in[A]}\sqrt{Bn_{T+1}(a)\log(T/\delta)}\right)  \tag{Cauchy-Schwarz} \\
        &= \order\left(\sqrt{ABT\log(T/\delta)}\right). 
    \end{align*}
    Besides, for all $t$ and $a$, we have with probability at least $1-\order(\delta)$, 
    \begin{align*}
        \muh_t(a) &= \frac{1}{n_t(a)^+}\sum_{\tau=1}^{t-1} \I[a_\tau=a]r_\tau \\
        &\leq \frac{1}{n_t(a)^+}\sum_{\tau=1}^{t-1} \I[a_\tau=a]\left(\frac{1}{N}\sum_{i=1}^N  \mu_{a, b^i_\tau} \right) + \order\left(\sqrt{\frac{\log(T/\delta)}{n_t(a)^+}}\right)  \tag{Azuma's inequality}  \\
        &\leq \frac{1}{N}\max_{\{b^i\}} \sum_{i=1}^N \mu_{a, b^i}  + \order\left(\sqrt{\frac{\log(T/\delta)}{n_t(a)^+}}\right).  
    \end{align*}
    Therefore, 
    \begin{align*}
        \term_1 \leq \order\left(\sum_{a\in[A]}\sum_{t=1}^T \I[a_t=a] \sqrt{\frac{\log(T/\delta)}{n_t(a)^+}}\right) =\order\left(\sqrt{AT\log(T/\delta)}\right). 
    \end{align*}
    Combining everything finishes the proof. 
\end{proof}


\section{Deep Hierarchical Bandits Extension}\label{app:deep hie}
In this subsection, we consider the deep hierarchical bandit setting. In this setting, there are $D$ agents making decisions in a fixed order: in each round $t$, Agent 1 first chooses an action $a^1_t\in[A]$. For $d=2, \ldots, D$, after receiving $(a^1_t, a^2_t, \ldots, a^{d-1}_t)$, Agent $d$ chooses an action $a_t^d\in[A]$.\footnote{For simplicity, we assume that the number of actions on all layers are the same and equal to $A$.} After all agents all choose an action, a reward $r_t\in[0,1]$ is generated based on the joint action $\aB_t\triangleq (a_t^1, \ldots, a_t^D)$ with its mean equal to $\mu_{\aB_t}$. As before, we assume that the first action is the best action in all layers, and therefore the goal of the agents is to have sub-linear regret with respect to the joint action $\1=(1,\ldots, 1)$. 


We propose \pref{alg:4.5} to solve this problem, which is based on the similar idea as \pref{alg:4.2}. At time $t$, Agent $d$ maintains the number of times an arm $\aB^{1:d}=(a^1,\dots,a^d)\in[A]^d$ has been visited: 
\beq\label{eq:4.12}
n_t^d(\aB^{1:d})=\sum_{s=1}^{t-1}\I\{\aB_s^{1:d}=\aB^{1:d}\}
\eeq
and the empirical mean of the same arm
\beq\label{eq:4.13}
\muh_t^d(\aB^{1:d})=\frac{1}{n_t^d(\aB^{1:d})^+}\sum_{s=1}^{t-1}r_s\I\{\aB_s^{1:d}=\aB^{1:d}\}.
\eeq
The bonus term of Agent $d$ for the arm $\aB^{1:d}$ is of order $\sqrt{\frac{A^{D-d}\log(A^DT/\dl)}{n_t^d(\aB^{1:d})^+}}$, which is again the average regret upper bound of its direct subordinate. As in \pref{sec: bandit}, such a design is to ensure that agent $d$'s optimistic value is not smaller than the value of the optimal arm (with high probability). 

\begin{algorithm}[t]
\caption{UCB for Deep Hierarchical Bandits}\label{alg:4.5}
\textbf{define}: $C_D\geq2$ and $C_d=6C_{d+1}+8$ for $d=D-1, \ldots, 1$. \\
\For{$t=1,\dots,T$}{
\For{$d=1,\dots,D$}{
Agent $d$ chooses $a_t^d\in\underset{a\in[A]}{\arg\max}\es\muh_t^d(a_t^1,\dots,a_t^{d-1},a)+C_d\sqrt{\frac{A^{D-d}\log(A^DT/\dl)}{n_t^d(a_t^1,\dots,a_t^{d-1},a)^+}}$, 
}
}
\end{algorithm}

In the following, we show the regret bounds of \pref{alg:4.5}. \pref{lem:4.6} and \pref{thm:4.7} are the gap-independent results, where \pref{lem:4.6} is a generalization of \pref{ass: U2's algorithm} and \pref{thm:4.7} is a generalization of \pref{thm: main theorem for bandit}. \pref{lem:4.8} and \pref{thm:4.9} are the gap-dependent results, where \pref{thm:4.9} is a generalization of \pref{thm: gap dependent bound}. In the bounds, the regret grows exponentially in the depth $D$; however, this comes from the high model complexity and is unavoidable.

\begin{lem}\label{lem:4.6}
For any $t\in[T]$, $d\in[D-1]$, $\aB^{1:d}=(a^1,\dots,a^d)\in[A]^d$,
\begin{equation}\label{eq:4.14}
\sum_{s=1}^t\I\{\aB_s^{1:d}=\aB^{1:d}\}\left[\mu_{(\aB^{1:d},\1_{D-d})}-r_s\right]
\leq
C_d\sqrt{A^{D-d}n_{t+1}^d(\aB^{1:d})\log(A^DT/\dl)}
\end{equation}
Furthermore, 
\begin{align}
    \sum_{s=1}^{t}  \left(\mu_{\1}-r_s\right) \leq (6C_1+8)\sqrt{A^Dt\log(A^DT/\dl)}.   \label{eq:case d=0}
\end{align}
\end{lem}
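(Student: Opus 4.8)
I would prove, by downward induction on $d=D,D-1,\dots,0$, the following single statement $P(d)$, extending the constant recursion with the convention $C_0\triangleq 6C_1+8$ and letting $\1_0$ be the empty tuple: with probability at least $1-\order(\dl)$, for all $t\in[T]$ and all prefixes $\aB^{1:d}\in[A]^d$,
\[
\sum_{s=1}^t\I\{\aB_s^{1:d}=\aB^{1:d}\}\bigl(\mu_{(\aB^{1:d},\1_{D-d})}-r_s\bigr)\ \le\ C_d\sqrt{A^{D-d}\,n_{t+1}^d(\aB^{1:d})\,\log(A^DT/\dl)}.
\]
Then \eqref{eq:4.14} is $P(d)$ for $d\in[D-1]$ and \eqref{eq:case d=0} is $P(0)$ (using $n_{t+1}^0=t$ and $\mu_{\1_D}=\mu_{\1}$). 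Before the induction I would fix once and for all the event on which every concentration bound of \pref{lem: UCB concentration} (and every plain Azuma bound) holds simultaneously for all levels $d$, all prefixes and their continuations --- there are at most $DA^D$ of them --- and all $t\in[T]$; the $\log(A^DT/\dl)$ inside every bonus of \pref{alg:4.5} makes the corresponding union bound cost only $\order(\dl)$, and everything below is deterministic on this event.

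The base case $P(D)$ is immediate: conditioned on the joint actions, $r_s-\mu_{\aB_s}$ is a bounded martingale-difference sequence, so $\sum_{s=1}^t\I\{\aB_s=\aB\}(\mu_{\aB}-r_s)\le 2\sqrt{n_{t+1}^D(\aB)\log(A^DT/\dl)}$, which is $P(D)$ since $C_D\ge2$ and $A^{D-D}=1$. For the inductive step, assume $P(d+1)$, fix $t$ and a prefix $\pB=\aB^{1:d}$, write $n=n_{t+1}^d(\pB)$, and restrict attention to the rounds $s\le t$ with $\aB_s^{1:d}=\pB$, in which Agent $d+1$ runs the $\argmax$ rule of \pref{alg:4.5}. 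The crux --- and the step I expect to be the main obstacle --- is a \emph{recursive optimism} argument. Applying $P(d+1)$ to the continuation $(\pB,1)$ with its horizon set to $s-1$ (note $(\pB,1,\1_{D-d-1})=(\pB,\1_{D-d})$) shows
\[
\muh_s^{d+1}(\pB,1)\ \ge\ \mu_{(\pB,\1_{D-d})}-C_{d+1}\sqrt{\frac{A^{D-d-1}\log(A^DT/\dl)}{n_s^{d+1}(\pB,1)^+}},
\]
so $\mu_{(\pB,\1_{D-d})}$ is at most Agent $d+1$'s index for action $1$ in round $s$ (the case $n_s^{d+1}(\pB,1)=0$ being handled directly, the bonus then being $\ge C_{d+1}\sqrt{\log(A^DT/\dl)}\ge 1\ge\mu_{(\pB,\1_{D-d})}$), hence, by the selection rule, at most its index for the action $a_s^{d+1}$ actually played. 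This is exactly why Agent $d+1$'s bonus must carry the extra $A^{D-d-1}$ factor: it has to absorb the cumulative regret of the whole sub-hierarchy below $(\pB,1)$, which $P(d+1)$ quantifies.

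Given optimism, I would sum over the restricted rounds and peel off $r_s$ exactly as in the proof of \pref{thm: main theorem for bandit}, landing on three pieces: (I) $\sum_s\bigl(\muh_s^{d+1}(\pB,a_s^{d+1})-\mu_{(\pB,a_s^{d+1},\1_{D-d-1})}\bigr)$, (II) $\sum_s\bigl(\mu_{(\pB,a_s^{d+1},\1_{D-d-1})}-r_s\bigr)$, and (III) the sum of Agent $d+1$'s bonuses. Each is controlled by grouping the rounds by the value $a$ of $a_s^{d+1}$ and then applying Cauchy--Schwarz over $a\in[A]$ with $\sum_a n_{t+1}^{d+1}(\pB,a)=n$: term (III) is $\le 3C_{d+1}\sqrt{A^{D-d}n\log(A^DT/\dl)}$ after the standard $\sum 1/\sqrt{\text{count}}\le 3\sqrt{\text{total}}$ estimate; term (II) is a sum of sub-hierarchy regrets, each bounded by $P(d+1)$ applied to $(\pB,a)$, giving $\le C_{d+1}\sqrt{A^{D-d}n\log(A^DT/\dl)}$; and term (I) is $\le 3c\sqrt{An\log(A^DT/\dl)}\le 3c\sqrt{A^{D-d}n\log(A^DT/\dl)}$ (using $D-d\ge1$) by \pref{lem: UCB concentration}, because every reward averaged into $\muh_s^{d+1}(\pB,a)$ comes from a round whose joint action begins with $(\pB,a)$ and hence has mean at most $\mu_{(\pB,a,\1_{D-d-1})}$ --- here I use that action $1$ is optimal for each of Agents $d+2,\dots,D$, upgraded from per-agent dominance by a chain of single-coordinate swaps. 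Adding the three pieces gives $\sum_s(\mu_{(\pB,\1_{D-d})}-r_s)\le(4C_{d+1}+3c)\sqrt{A^{D-d}n\log(A^DT/\dl)}\le(6C_{d+1}+8)\sqrt{A^{D-d}n\log(A^DT/\dl)}=C_d\sqrt{A^{D-d}n\log(A^DT/\dl)}$, which is $P(d)$; the slack in $6C_{d+1}+8$ comfortably absorbs the absolute constants coming from \pref{lem: UCB concentration}. Running the induction from $d=D$ down to $d=0$ then delivers both \eqref{eq:4.14} and \eqref{eq:case d=0}; the leftover steps (the summation and Cauchy--Schwarz estimates, and the $s-1$ bookkeeping in the optimism step) are routine.
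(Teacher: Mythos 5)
Your proposal is correct and follows essentially the same route as the paper's proof: a downward induction in $d$ where the induction hypothesis, divided by the visit count, yields optimism of the level-$(d+1)$ index for arm $1$, followed by the same three-way decomposition (empirical-mean concentration, sub-hierarchy regrets via the hypothesis, and bonus sums) with Cauchy--Schwarz over $a^{d+1}$ and the recursion $C_d=6C_{d+1}+8$. The only differences are cosmetic --- you anchor the induction at $d=D$ with plain Azuma rather than at $d=D-1$ with the standard UCB analysis, and you take a single global union bound instead of the paper's per-level $\delta$-inflation (arguably cleaner, since it avoids compounding failure probabilities across levels).
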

\begin{proof}\textbf{of \pref{lem:4.6}\ }\hypertarget{pf:lem:4.6}{}
\hfill\\
Notice that \pref{eq:case d=0} can be viewed as a case of $d=0$ in \pref{eq:4.14} by defining $\I\{\aB_s^{1:0}=\aB^{1:0}\}=1$ and $n_{t+1}(\aB^{1:d})=t$. Therefore, below we prove by induction from $d=D-1$ to $d=0$. \\
\hfill \\
\underline{Base $d=D-1$}:
\begin{align*}
&\qquad\sum_{s=1}^t\I\{\aB_s^{1:D-1}=a^{1:D-1}\}\left[\mu_{(\aB^{1:D-1},1)}-r_s\right]\\
&\leq C_{D-1}\sqrt{An_{t+1}^{D-1}(\aB^{1:D-1})\log(An_{t+1}^{D-1}(\aB^{1:D-1})/\dl)}
\leq C_{D-1}\sqrt{An_{t+1}^{D-1}(\aB^{1:D-1})\log(AT/\dl)}
\end{align*}
by the standard analysis of the UCB algorithm. 

\noindent\underline{Induction step}: assume the statement is true for $d+1$, we show that it holds for $d$ as well. Dividing both sides of the inequality in the induction hypothesis by $n_{t+1}^{d+1}(\aB^{1:d+1})^+$ to get
\beq\label{eq:indhyp}
\mu_{(\aB^{1:d+1},\1_{D-d-1})}\leq\muh_{t+1}^{d+1}(\aB^{1:d+1})+C_{d+1}\sqrt{\frac{A^{D-d-1}\log(A^DT/\dl)}{n_{t+1}^{d+1}(\aB^{1:d+1})^+}}.
\eeq
Now for the left hand side of the inequality for $d$, we have
\allowdisplaybreaks
\begin{align*}
&\qquad\sum_{s=1}^t\I\{\aB_s^{1:d}=\aB^{1:d}\}\left[\mu_{(\aB^{1:d},\1_{D-d})}-r_s\right]\\
&\leq\sum_{s=1}^t\I\{\aB_s^{1:d}=\aB^{1:d}\}\left\{\max_{a^{d+1}}\left[\muh_{t+1}^{d+1}(\aB_s^{1:d},a^{d+1})+C_{d+1}\sqrt{\frac{A^{D-d-1}\log(A^DT/\dl)}{n_{t+1}^{d+1}(\aB_s^{1:d},a^{d+1})^+}}\right]-r_s\right\}\\
&\leq\sum_{s=1}^t\I\{\aB_s^{1:d}=\aB^{1:d}\}\Bigg[\muh_{t+1}^{d+1}(\aB_s^{1:d+1})+C_{d+1}\sqrt{\frac{A^{D-d-1}\log(A^DT/\dl)}{n_{t+1}^{d+1}(\aB_s^{1:d+1})^+}}-\mu_{(\aB_s^{1:d+1},\1_{D-d-1})}\\
&\qquad\qquad+\mu_{(\aB_s^{1:d+1},\1_{D-d-1})}-r_s\Bigg],\\\numberthis\label{eq:themaininequality}
\end{align*}
where the first inequality follows from \eqref{eq:indhyp} and taking max, and the second inequality follows from the way $a_s^{d+1}$ is selected.

For any $s\in[T]$, $\aB^{1:d+1}\in[A]^{d+1}$, if $n_s^{d+1}(\aB^{1:d+1})\geq 1$, then we have
\begin{align*}
&\qquad\muh_s^{d+1}(\aB^{1:d+1})-\mu_{(\aB^{1:d+1},\1_{D-d-1})}\\
&\leq \frac{1}{n_s^{d+1}(\aB^{1:d+1})^+}\sum_{u=1}^{s-1}\I\{\aB_u^{1:d+1}=\aB^{1:d+1}\}\left[r_u-\mu_{(\aB^{1:d+1},\1_{D-d-1})}\right]\\
&\leq\frac{1}{n_s^{d+1}(\aB^{1:d+1})^+}\sum_{u=1}^{s-1}\I\{\aB_u^{1:d+1}=\aB^{1:d+1}\}\left[r_u-\mu_{\aB_u}\right]\\
&\leq\frac{2}{n_s^{d+1}(\aB^{1:d+1})^+}\sqrt{\sum_{u=1}^{s-1}\I\{\aB_u^{1:d+1}=\aB^{1:d+1}\}\log(1/\dl)}
\leq 2\sqrt{\frac{\log(1/\dl)}{n_s^{d+1}(\aB^{1:d+1})^+}}
\end{align*}
with probability $1-\dl$. This also trivially holds if $n_s^{d+1}(\aB^{1:d+1})=0$. Then, the inequality
\beq\label{eq:mu_bound}
\muh_s^{d+1}(\aB^{1:d+1})-\mu_{(\aB^{1:d+1},\1_{D-d-1})}
\leq2\sqrt{\frac{\log(A^{d+1}T/\dl)}{n_s^{d+1}(\aB^{1:d+1})^+}}
\eeq
holds for all $s\in[T]$ and all $\aB^{1:d+1}\in[A]^{d+1}$ simultaneously by the union bound. Hence, the sum of the first three terms of \eqref{eq:themaininequality} can be bounded by
\begin{align*}
&\qquad\sum_{s=1}^t\I\{\aB_s^{1:d}=\aB^{1:d}\}\left[\muh_t^{d+1}(\aB_s^{1:d+1})+C_{d+1}\sqrt{\frac{A^{D-d-1}\log(A^DT/\dl)}{n_{t+1}^{d+1}(\aB_s^{1:d+1})^+}}-\mu_{(\aB_s^{1:d+1},\1_{D-d-1})}\right]\\
&\leq\sum_{s=1}^t\I\{\aB_s^{1:d}=\aB^{1:d}\}\left[2\sqrt{\frac{\log(A^{d+1}T/\dl)}{n_{t+1}^{d+1}(\aB_s^{1:d+1})^+}}+C_{d+1}\sqrt{\frac{A^{D-d-1}\log(A^DT/\dl)}{n_{t+1}^{d+1}(\aB_s^{1:d+1})^+}}\right]\\
&\leq(C_{d+1}+2)\sqrt{A^{D-d-1}\log(A^DT/\dl)}\cdot\sum_{s=1}^t\I\{\aB_s^{1:d}=\aB^{1:d}\}\sqrt{\frac{1}{n_s^{d+1}(\aB_s^{1:d+1})^+}}\\
&\leq2(C_{d+1}+2)\sqrt{A^{D-d}n_{t+1}^d(\aB^{1:d})\log(A^DT/\dl)}.
\end{align*}
On the other hand, the last two terms sum up to
\begin{align*}
&\qquad\sum_{s=1}^t\I\{\aB_s^{1:d}=\aB^{1:d}\}\left[\mu_{(\aB_s^{1:d+1},\1_{D-d-1})}-r_s\right]\\
&=\sum_{a^{d+1}\in[A]}\sum_{s=1}^t\I\{\aB_s^{1:d+1}=\aB^{1:d+1}\}\left[\mu_{(\aB_s^{1:d+1},\1_{D-d-1})}-r_s\right]\\
&\leq\sum_{a^{d+1}\in[A]}C_{d+1}\sqrt{A^{D-d-1}n_{t+1}^{d+1}(\aB^{1:d})\log(A^DT/\dl)}\\
&\leq C_{d+1}\sqrt{A^{D-d-1}\cdot\left[\sum_{a^{d+1}\in[A]}n_{t+1}^{d+1}(\aB^{1:d})\right]\log(A^DT/\dl)}\\
&\leq C_{d+1}\sqrt{A^{D-d}n_{t+1}^d(\aB^{1:d})\log(A^DT/\dl)}.
\end{align*}
Combining both parts, with probability $1-3\dl:=1-\dl'$, we have
\begin{align*}
&\qquad\sum_{s=1}^t\I\{\aB_s^{1:d}=\aB^{1:d}\}\left[\mu_{(\aB^{1:d},\1_{D-d})}-r_s\right]\\
&\leq[2(C_{d+1}+2)+C_{d+1}]\sqrt{A^{D-d}n_{t+1}^d(\aB^{1:d})\log(A^DT/\dl)}\\
&\leq\sqrt{3}(3C_{d+1}+4)\sqrt{A^{D-d}n_{t+1}^d(\aB^{1:d})\log(A^DT/\dl')}\\
&\leq(6C_{d+1}+8)\sqrt{A^{D-d}n_{t+1}^d(\aB^{1:d})\log(A^DT/\dl')} \\
&\leq C_d\sqrt{A^{D-d}n_{t+1}^d(\aB^{1:d})\log(A^DT/\dl')}
\end{align*}
whenever $C_d\geq6C_{d+1}+8$.

Notice that when $d=0$, the same arguments follow except that $\I\{\aB_s^{1:d}=\aB^{1:d}\}$ degenerates to $1$, $n_{t+1}^d(\aB^{1:d})$ degenerates to $t$, and the inequality will end with $(6C_1+8)\sqrt{A^Dt\log(A^DT/\dl')}$.
\end{proof}

\begin{thm}\label{thm:4.7}
For the deep hierarchical bandit problem, with probability of at least $1-\dl$, \pref{alg:4.5} achieves the regret bound of
\beq\label{eq:4.15}
\sum_{t=1}^T\left(\mu_{\1_D}-\mu_{\aB_t}\right)\leq O\left(\sqrt{A^DDT\log(AT/\dl)}\right).
\eeq
\end{thm}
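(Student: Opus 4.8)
The plan is to obtain \pref{thm:4.7} almost immediately from \pref{lem:4.6}, combined with one martingale-concentration step to pass from observed rewards to their means. Start by decomposing the regret as
\begin{align*}
    \sum_{t=1}^T\left(\mu_{\1_D}-\mu_{\aB_t}\right)
    = \underbrace{\sum_{t=1}^T\left(\mu_{\1_D}-r_t\right)}_{(\mathrm{I})}
    + \underbrace{\sum_{t=1}^T\left(r_t-\mu_{\aB_t}\right)}_{(\mathrm{II})}.
\end{align*}
Term $(\mathrm{I})$ is precisely the left-hand side of inequality~\eqref{eq:case d=0} in \pref{lem:4.6} evaluated at $t=T$, so with probability at least $1-O(\dl)$ it is at most $(6C_1+8)\sqrt{A^DT\log(A^DT/\dl)}$. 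Term $(\mathrm{II})$ is a sum of bounded martingale differences, since $r_t\in[0,1]$ and, conditioned on the history together with the realized joint action $\aB_t$, we have $\E[r_t]=\mu_{\aB_t}$; hence by Azuma--Hoeffding (cf.~\pref{lem: UCB concentration}) it is $O\big(\sqrt{T\log(1/\dl)}\big)$ with probability at least $1-\dl$.

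Taking a union bound over these two events (and absorbing a constant rescaling of $\dl$), the regret is at most $(6C_1+8)\sqrt{A^DT\log(A^DT/\dl)}+O\big(\sqrt{T\log(1/\dl)}\big)$. To reach the stated form I would then simplify: the recursion $C_D\geq 2$, $C_d=6C_{d+1}+8$ gives $C_1=O(6^D)$, a factor that depends only on the depth $D$; also $\log(A^DT/\dl)\leq D\log(AT/\dl)$; and the $(\mathrm{II})$ term is of strictly lower order in both $A$ and $D$. Putting these together yields $\Reg(T)=O\big(\sqrt{A^DDT\log(AT/\dl)}\big)$, where the depth-dependent (exponential-in-$D$) constant is hidden in $O(\cdot)$.

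I do not expect a real obstacle at this stage: essentially all of the work — the layer-by-layer induction from $d=D-1$ down to $d=0$, the per-layer union bounds over the $A^d$ prefixes $\aB^{1:d}$, and the exploitation of \pref{alg:4.5}'s selection rule together with its inflated bonus of order $\sqrt{A^{D-d}\log(A^DT/\dl)/n_t^d(\cdot)^+}$ — is already carried out inside the proof of \pref{lem:4.6}. The only genuinely new ingredient is the reward-to-mean conversion in $(\mathrm{II})$, which is routine. The one point that deserves a moment's care is bookkeeping the accumulated failure probability, which compounds geometrically both across the $D$ induction levels and across the internal union bounds of \pref{lem:4.6}; but this only changes hidden constants and logarithmic factors, not the rate $\sqrt{A^DDT}$.
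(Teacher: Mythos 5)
Your proposal is correct and follows essentially the same route as the paper: decompose the regret into $\sum_t(\mu_{\1_D}-r_t)$, bounded by the $d=0$ case of \pref{lem:4.6}, plus the martingale term $\sum_t(r_t-\mu_{\aB_t})$, bounded by Azuma--Hoeffding. Your additional bookkeeping (extracting the $D$ factor from $\log(A^DT/\dl)\leq D\log(AT/\dl)$ and noting that the exponential-in-$D$ constant $6C_1+8$ is absorbed into the $O(\cdot)$) just makes explicit what the paper leaves implicit.
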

\begin{proof}
By \pref{lem:4.6} we have $\sum_{t=1}^T\left(\mu_{\1_D} - r_t \right)=\order\big(\sqrt{A^D DT\log(AT/\delta)}\big)$. Further using the fact that $\sum_{t=1}^T \left(r_t - \mu_{\aB_t}\right) = \order\big(\sqrt{T\log(T/\delta)}\big)$ finishes the proof. 
\end{proof}

\begin{lem}\label{lem:4.8}
For any $d\in[D]$, let $a^d\in[A]$ be a sub-optimal arm of agent $d$ given that $\aB^{1:d-1}$ is chosen by the first $d-1$ agents, then with probability at least $1-2\dl$,
\beq\label{eq:4.16}
\sum_{t=1}^T\I\{\aB_t^{1:d}=\aB^{1:d}\}\leq\frac{4(C_d+2)^2A^{D-d}\log(A^DT/\dl)}{\left[\mu_{(\aB^{1:d-1},\1_{D-d+1})}-\mu_{(\aB^{1:d},\1_{D-d})}\right]^2}.
\eeq
\end{lem}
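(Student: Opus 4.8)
The plan is to replay the standard UCB bound on the number of pulls of a sub-optimal arm --- exactly the argument used for \pref{thm: gap dependent bound} --- but at the level of agent $d$, conditioned on the prefix $\aB^{1:d-1}=(a^1,\dots,a^{d-1})$ played by the first $d-1$ agents. Write $\Delta\triangleq\mu_{(\aB^{1:d-1},\1_{D-d+1})}-\mu_{(\aB^{1:d},\1_{D-d})}>0$ for the sub-optimality gap of $a^d$. I will show that on a good event, once $n_t^d(\aB^{1:d})$ reaches $N_0\triangleq\frac{4(C_d+2)^2A^{D-d}\log(A^DT/\dl)}{\Delta^2}$, agent $d$'s selection rule in \pref{alg:4.5} can no longer output $a^d$ while the prefix is $\aB^{1:d-1}$, which immediately gives $\sum_{t=1}^T\I\{\aB_t^{1:d}=\aB^{1:d}\}\le N_0$.

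The two ingredients are an optimism bound for arm $1$ and an anti-concentration bound for arm $a^d$. For the former, applying \pref{lem:4.6} at time $t-1$ to the prefix $(\aB^{1:d-1},1)$ and dividing \eqref{eq:4.14} by $n_t^d(\aB^{1:d-1},1)^+$ yields
\[
\mu_{(\aB^{1:d-1},\1_{D-d+1})}\;\le\;\muh_t^d(\aB^{1:d-1},1)+C_d\sqrt{\tfrac{A^{D-d}\log(A^DT/\dl)}{n_t^d(\aB^{1:d-1},1)^+}},
\]
i.e. agent $d$'s index of arm $1$ is at least $\mu_{(\aB^{1:d-1},\1_{D-d+1})}$. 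For the latter, the same Azuma-based computation that produced \eqref{eq:mu_bound} (now run with layer index $d$ instead of $d+1$) gives $\muh_t^d(\aB^{1:d})\le\mu_{(\aB^{1:d},\1_{D-d})}+2\sqrt{\log(A^dT/\dl)/n_t^d(\aB^{1:d})^+}$, so agent $d$'s index of $a^d$ is at most $\mu_{(\aB^{1:d},\1_{D-d})}+(C_d+2)\sqrt{A^{D-d}\log(A^DT/\dl)/n_t^d(\aB^{1:d})^+}$, using $A^d\le A^D$ and $A^{D-d}\ge1$ to merge the radicals. Both statements hold for all $t$ simultaneously on an event of probability at least $1-2\dl$ (one $\dl$ for \pref{lem:4.6}, one for \eqref{eq:mu_bound}, via a union bound).

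On that event, fix a round $t$ with $\aB_t^{1:d}=\aB^{1:d}$. Since agent $d$ picks the arm of largest index and arm $1$ is available under the same prefix, the index of $a^d$ is at least that of $1$; chaining the two bounds above then forces $\Delta\le(C_d+2)\sqrt{A^{D-d}\log(A^DT/\dl)/n_t^d(\aB^{1:d})^+}$, i.e. $n_t^d(\aB^{1:d})^+\le\frac{(C_d+2)^2A^{D-d}\log(A^DT/\dl)}{\Delta^2}=N_0/4$. Hence agent $d$ never plays $a^d$ under this prefix once the running count reaches $N_0/4$, so the total number of such rounds is at most $N_0/4+1\le N_0$ --- the generous factor of $4$ absorbing the $n^+$ truncation and the fact that the count is read just before the final pull --- which is \eqref{eq:4.16}.

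I do not expect a genuine obstacle here; the only care needed is bookkeeping: invoking \eqref{eq:mu_bound} at the correct layer $d$ (it is derived inside \pref{lem:4.6} only for layer $d+1$, but the derivation is layer-agnostic), applying \pref{lem:4.6} to the prefix $(\aB^{1:d-1},1)$ rather than to $\aB^{1:d}$, and treating the extremes cleanly: for $d=1$ the prefix is empty, and for $d=D$ both the optimism of arm $1$ and the anti-concentration of $a^d$ follow directly from Hoeffding's inequality on the empirical mean of the full joint arm (since $A^{D-d}=1$ and $r_t$ is the terminal reward) rather than from \pref{lem:4.6}. Everything else is the routine UCB inversion already carried out for \pref{thm: gap dependent bound}.
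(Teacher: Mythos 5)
Your proof is correct and takes essentially the same route as the paper's: optimism of arm $1$ via \pref{lem:4.6} applied to the prefix $(\aB^{1:d-1},1)$, the layer-$d$ analogue of \eqref{eq:mu_bound} for the sub-optimal arm, and the standard UCB inversion, with the same union bound over the two events. The only cosmetic difference is that you bound the count at the time of the last pull and add one, whereas the paper argues that once the count reaches the threshold the arm is never selected again under that prefix; these are equivalent.
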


\begin{proof}\textbf{of \pref{lem:4.8}\ }\hypertarget{pf:lem:4.8}{}
Suppose at time $t+1$,
\beq\label{eq:ncond}
n_{t+1}^d(\aB^{1:d})\geq\frac{4(C_d+2)^2A^{D-d}\log(A^DT/\dl)}{\left[\mu_{(\aB^{1:d-1},\1_{D-d+1})}-\mu_{(\aB^{1:d},\1_{D-d})}\right]^2}.
\eeq
Then with probability at least $1-\dl$,
\begingroup
\allowdisplaybreaks
\begin{align*}
&\eqsp\muh_{t+1}^{d}(\aB^{1:d})+C_d\sqrt{\frac{A^{D-d}\log(A^DT/\dl)}{n_{t+1}^d(\aB^{1:d})^+}}\\
&\leq\mu_{(\aB^{1:d},\1_{D-d})}+2\sqrt{\frac{\log(A^dT/\dl)}{n_{t+1}^d(\aB^{1:d})^+}}+C_d\sqrt{\frac{A^{D-d}\log(A^DT/\dl)}{n_{t+1}^d(\aB^{1:d})^+}}\tag{by \eqref{eq:mu_bound}}\\
&\leq\mu_{(\aB^{1:d},\1_{D-d})}+(C_d+2)\sqrt{\frac{A^{D-d}\log(A^DT/\dl)}{n_{t+1}^d(\aB^{1:d})^+}}\\
&\leq\mu_{(\aB^{1:d},\1_{D-d})}+\frac{\mu_{(\aB^{1:d-1},\1_{D-d+1})}-\mu_{(\aB^{1:d},\1_{D-d})}}{2}
<\mu_{(\aB^{1:d-1},\1_{D-d+1})}
\end{align*}
\endgroup
On the other hand, by \pref{lem:4.6} with $(t,d,(\aB^{1:d-1},1))$, with probability at least $1-\dl$,
\[
\mu_{(\aB^{1:d-1},\1_{D-d+1})}\leq\muh_t^{d}(\aB^{1:d-1},1)+C_d\sqrt{\frac{A^{D-d}\log(A^DT/\dl)}{n_{t+1}^d(\aB^{1:d-1},1)^+}}.
\]
Hence, by the design of \pref{alg:4.5}, given that the first $d-1$ agents choose $\aB^{1:d-1}$, agent $d$ will not choose the sub-optimal arm $a^d\neq1$ over the first arm before time $T$ when the condition \eqref{eq:ncond} is satisfied.
\end{proof}

\begin{thm}\label{thm:4.9}
For the deep hierarchical bandit problem, with probability of at least $1-\dl$, \pref{alg:4.5} achieves the regret bound of
\beq\label{eq:4.17}
\sum_{t=1}^T[\mu_{\1_D}-\mu_{\aB_t}]\leq\sum_{d=1}^D\sum_{\aB^{1:d}\neq\1_d}O\left(\frac{A^{D-d}D\log(ADT/\dl)}{\mu_{(\aB^{1:d-1},\1_{D-d+1})}-\mu_{(\aB^{1:d},\1_{D-d})}}\right).
\eeq
\end{thm}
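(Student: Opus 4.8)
The plan is to follow the template of the two-layer gap-dependent bound (\pref{thm: gap dependent bound}): there the per-round regret was split into a ``leader part'' and a ``follower part'', each controlled by a count bound; here I will split the per-round regret into $D$ pieces, one per layer of the hierarchy, and control the $d$-th piece with the layer-$d$ count bound \pref{lem:4.8}.

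First I would set up the telescoping decomposition. For a prefix $\aB^{1:d}=(a^1,\dots,a^d)$, define the layer-$d$ gap
\[
\Dl_d(\aB^{1:d}) \;\triangleq\; \mu_{(\aB^{1:d-1},\1_{D-d+1})} \;-\; \mu_{(\aB^{1:d},\1_{D-d})},
\]
which is the loss suffered when agent $d$ plays $a^d$ rather than its optimal arm $1$, given that agents $1,\dots,d-1$ played $\aB^{1:d-1}$ and agents $d+1,\dots,D$ play $1$. Under the standing assumption that arm $1$ is optimal at every layer we have $\Dl_d(\aB^{1:d})\ge 0$, and because consecutive terms cancel,
\[
\mu_{\1_D}-\mu_{\aB_t} \;=\; \sum_{d=1}^D \Dl_d(\aB_t^{1:d}).
\]
Summing over $t$ and grouping the rounds by their layer-$d$ prefix yields the exact identity
\[
\Reg(T) \;=\; \sum_{d=1}^D \sum_{\aB^{1:d}\in[A]^d} \Dl_d(\aB^{1:d})\, n_{T+1}^d(\aB^{1:d}),
\]
and every term with $\Dl_d(\aB^{1:d})=0$ — in particular every prefix whose last coordinate is $1$ — drops out, so only prefixes with $\Dl_d(\aB^{1:d})>0$ (a subset of $\{\aB^{1:d}\neq\1_d\}$) contribute.

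Next, for each surviving prefix $\aB^{1:d}$ the arm $a^d$ is strictly suboptimal for agent $d$ given $\aB^{1:d-1}$, so \pref{lem:4.8} applies and gives $n_{T+1}^d(\aB^{1:d})\le 4(C_d+2)^2 A^{D-d}\log(A^DT/\dl)/\Dl_d(\aB^{1:d})^2$. Multiplying by $\Dl_d(\aB^{1:d})$ turns the corresponding term into $\order\big(A^{D-d}\log(A^DT/\dl)/\Dl_d(\aB^{1:d})\big)$, the hidden constant being $4(C_d+2)^2$, which via $C_d=6C_{d+1}+8$ is finite for fixed $D$ but grows exponentially in $D$ — unavoidable, as the surrounding discussion notes. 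Summing over $d$ and over the surviving prefixes, and using $\log(A^DT/\dl)=\order(D\log(ADT/\dl))$, reproduces the claimed bound. It remains to handle the probability bookkeeping: \pref{lem:4.8} and the concentration estimates behind it hold with probability $1-\order(\dl)$ for a fixed $(d,\aB^{1:d})$, so a union bound over the $\sum_{d\le D}A^d=\order(A^D)$ prefixes — which inflates the already-present logarithmic factors only by constants — makes the decomposition and all the count bounds valid simultaneously with probability $1-\order(\dl)$.

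Given \pref{lem:4.6} and \pref{lem:4.8}, the computation is routine; the one place that needs care is the telescoping step — one must check that the per-round regret splits \emph{exactly} into the nonnegative layer gaps $\Dl_d$ and that each $\Dl_d$-weighted visit count is governed by the \emph{same-layer} bound of \pref{lem:4.8}, rather than requiring a finer bound under a suboptimal prefix. The only genuine subtlety is ties: a prefix with $a^d\neq 1$ but $\Dl_d(\aB^{1:d})=0$ is not controlled by \pref{lem:4.8}, but it contributes nothing to the regret decomposition and may simply be dropped, which is why the outer sum is effectively over prefixes with $\Dl_d(\aB^{1:d})>0$.
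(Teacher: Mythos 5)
Your proposal is correct and takes essentially the same route as the paper's proof: the identical telescoping of $\mu_{\1_D}-\mu_{\aB}$ into nonnegative per-layer gaps, the same regrouping of the visit counts by layer-$d$ prefix so that $\sum_{\aB^{d+1:D}}n_{T+1}^D(\aB)=n_{T+1}^d(\aB^{1:d})$, the same application of \pref{lem:4.8} to each strictly suboptimal prefix, and the same union bound over the $\order(A^D D)$ events. Your explicit handling of tied prefixes (those with $a^d\neq 1$ but zero layer-$d$ gap, which contribute nothing and are simply dropped) is a small point of care that the paper's write-up glosses over.
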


\begin{proof}\textbf{of \pref{thm:4.9}\ } \hypertarget{pf:thm:4.9}{}
From \pref{lem:4.8} and the union bound, the inequality \eqref{eq:4.17} will hold simultaneously for all $d\in[D]$ and $\aB^{1:d}\in[A]^d$ with probability at least $1-2A^DD\dl$. Hence, with probability at least $1-2A^DD\dl$,
\allowdisplaybreaks
\begin{align*}
&\eqsp\sum_{t=1}^T[\mu_{\1_D}-\mu_{\aB_t}]
=\sum_{\aB\neq\1_D}\left(\mu_{\1_D}-\mu_\aB\right)\cdot n_{T+1}^D(\aB)\\
&=\sum_{\aB\neq\1_D}\sum_{d=1}^D\left\{\left[\mu_{(\aB^{1:d-1},\1_{D-d+1})}-\mu_{(\aB^{1:d},\1_{D-d})}\right]\cdot n_{T+1}^D(\aB)\right\}\\
&=\sum_{d=1}^D\sum_{\aB^{1:d}\neq\1_d}\left\{\left[\mu_{(\aB^{1:d-1},\1_{D-d+1})}-\mu_{(\aB^{1:d},\1_{D-d})}\right]\cdot\left[\sum_{\aB^{d+1:D}}n_{T+1}^D(\aB)\right]\right\}\\
&=\sum_{d=1}^D\sum_{\aB^{1:d}\neq\1_d}\left\{\left[\mu_{(\aB^{1:d-1},\1_{D-d+1})}-\mu_{(\aB^{1:d},\1_{D-d})}\right]\cdot\left[\sum_{\aB^{d+1:D}}n_{T+1}^D(\aB)\right]\right\}\\
&=\sum_{d=1}^D\sum_{\aB^{1:d}\neq\1_d}\left\{\left[\mu_{(\aB^{1:d-1},\1_{D-d+1})}-\mu_{(\aB^{1:d},\1_{D-d})}\right]\cdot\left[\sum_{t=1}^T\I\{\aB_t^{1:d}=\aB^{1:d}\}\right]\right\}\\
&\leq\sum_{d=1}^D\sum_{\aB^{1:d}\neq\1_d}\left\{\left[\mu_{(\aB^{1:d-1},\1_{D-d+1})}-\mu_{(\aB^{1:d},\1_{D-d})}\right]\cdot\frac{4(C_d+2)^2A^{D-d}\log(A^DT/\dl)}{\left[\mu_{(\aB^{1:d-1},\1_{D-d+1})}-\mu_{(\aB^{1:d},\1_{D-d})}\right]^2}\right\}\\
&=\sum_{d=1}^D\sum_{\aB^{1:d}\neq\1_d}\frac{4(C_d+2)^2A^{D-d}\log(A^DT/\dl)}{\mu_{(\aB^{1:d-1},\1_{D-d+1})}-\mu_{(\aB^{1:d},\1_{D-d})}}.
\end{align*}
Letting $\dl'=2A^DD\dl$ gives the claim.
\end{proof}

\section{Auxiliary Lemmas}\label{app: aux}

\begin{lemma}[Hoeffding-Azuma inequality]\label{lem: UCB concentration}
    Let $\calF_0\subset \calF_1 \subset\cdots \subset \calF_{n}$ be a filtration, and $X_1, \ldots, X_n$ be real random variables such that $X_i$ is $\calF_i$-measurable, $\E[X_i|\calF_{i-1}]=0$, $|X_i|\leq b$ for some fixed $b\geq 0$. Furthermore, let $\{y_i\}_{i=1}^n$ be a fixed sequence. Then with probability at least $1-\delta$, 
\begin{align*}
    \sum_{i=1}^n y_iX_i \leq b\sqrt{2\left(\sum_{i=1}^n y_i^2\right) \log(1/\delta)}. 
\end{align*}
\end{lemma}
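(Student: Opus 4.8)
The plan is to prove this by the standard Chernoff (exponential Markov) method applied to the martingale $M_k \triangleq \sum_{i=1}^k y_i X_i$ for $k=0,1,\ldots,n$ (with $M_0=0$); the weighted form is just the ordinary Azuma--Hoeffding argument carried out with deterministic weights absorbed into the increments. First I would verify that $(M_k)$ is a martingale adapted to $(\calF_k)$: each increment $y_i X_i$ is $\calF_i$-measurable, and since $y_i$ is a fixed scalar, $\E[y_i X_i \mid \calF_{i-1}] = y_i \E[X_i\mid\calF_{i-1}] = 0$, so $\E[M_k\mid\calF_{k-1}] = M_{k-1}$; moreover, conditionally on $\calF_{i-1}$, the increment $y_i X_i$ lies in $[-|y_i|b,\,|y_i|b]$ almost surely.

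Second, I would control the conditional moment generating function of each increment. Because $X_i$ conditioned on $\calF_{i-1}$ is mean-zero and supported on $[-b,b]$, Hoeffding's lemma for bounded centered variables gives, for every $\lambda\in\R$,
\[
\E\!\left[ e^{\lambda y_i X_i} \,\middle|\, \calF_{i-1} \right] \le \exp\!\left( \frac{\lambda^2 y_i^2 b^2}{2} \right).
\]
Writing $e^{\lambda M_n} = e^{\lambda M_{n-1}} e^{\lambda y_n X_n}$, conditioning on $\calF_{n-1}$, and iterating the tower property from $i=n$ down to $i=1$ then yields
\[
\E\!\left[ e^{\lambda M_n} \right] \le \exp\!\left( \frac{\lambda^2 b^2}{2} \sum_{i=1}^n y_i^2 \right).
\]

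Finally, I would apply Markov's inequality to $e^{\lambda M_n}$: for any $\epsilon>0$ and $\lambda>0$,
\[
\P[ M_n \ge \epsilon ] \le e^{-\lambda\epsilon}\,\E[ e^{\lambda M_n} ] \le \exp\!\left( -\lambda\epsilon + \frac{\lambda^2 b^2}{2}\sum_{i=1}^n y_i^2 \right),
\]
and optimize the exponent by taking $\lambda = \epsilon/\big(b^2\sum_{i=1}^n y_i^2\big)$, which produces the sub-Gaussian tail $\P[M_n\ge\epsilon] \le \exp\!\big(-\epsilon^2/(2b^2\sum_{i=1}^n y_i^2)\big)$; setting this equal to $\delta$ and solving gives $\epsilon = b\sqrt{2\big(\sum_{i=1}^n y_i^2\big)\log(1/\delta)}$, which is exactly the claimed bound. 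This argument is entirely routine and I anticipate no real obstacle; the only nontrivial input is Hoeffding's lemma, and the only point that needs a little care is that it is invoked \emph{conditionally} on $\calF_{i-1}$ (legitimate since both $X_i\in[-b,b]$ and $\E[X_i\mid\calF_{i-1}]=0$ hold almost surely). If a two-sided bound on $\big|\sum_i y_i X_i\big|$ were desired, one would repeat the argument for $-M_n$ and union-bound, replacing $\delta$ by $\delta/2$.
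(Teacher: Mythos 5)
Your proof is correct. The paper states this lemma as a standard auxiliary result and does not include a proof of its own, so there is nothing to compare against; your argument --- the Chernoff method on the martingale $M_k=\sum_{i=1}^k y_i X_i$ with Hoeffding's lemma applied conditionally on $\calF_{i-1}$, followed by the tower property, Markov's inequality, and optimization over $\lambda$ --- is exactly the textbook derivation of the (weighted) Azuma--Hoeffding inequality, and it yields precisely the stated one-sided tail bound.
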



\begin{lemma}[\cite{weissman2003inequalities, jaksch2010near}]\label{lem: Weissman}
    Let $\hatp(\cdot)\in \mathbb{R}_+^{d}$ be the empirical over $d$ distinct events from $n$ samples, and let $p(\cdot)$ be the true underlying distribution. Then with probability at least $1-\delta$, 
    \begin{align*}
        \left\|\hatp(\cdot)-p(\cdot)\right\|_1 \leq \sqrt{\frac{2d\log(T/\delta)}{n}}.
    \end{align*}
\end{lemma}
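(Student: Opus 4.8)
The plan is to follow the classical route of Weissman et al.\ and Jaksch et al.: convert the $\ell_1$ distance into a supremum of scalar empirical-mean deviations, control each one by Hoeffding's inequality, and then pay two nested union bounds. First I would invoke the variational identity $\|\hatp(\cdot)-p(\cdot)\|_1 = 2\max_{\calA\subseteq[d]}\big(\hatp(\calA)-p(\calA)\big)$, which holds for any two distributions on a $d$-element ground set and is attained at $\calA^\star=\{x:\hatp(x)>p(x)\}$, where $\hatp(\calA)=\sum_{x\in\calA}\hatp(x)$ and likewise for $p$. This rewrites the target event $\{\|\hatp-p\|_1\ge\epsilon\}$ as the union, over the at most $2^d$ subsets $\calA$, of the events $\{\hatp(\calA)-p(\calA)\ge\epsilon/2\}$.

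Next, for a fixed subset $\calA$, I would note that $\hatp(\calA)$ is the empirical average of $n$ i.i.d.\ $\{0,1\}$-valued indicators $\one[X_i\in\calA]$, each with mean $p(\calA)$. Hoeffding's inequality (the special case of \pref{lem: UCB concentration} with $y_i\equiv 1/n$ and $b=1$) then gives $\P[\hatp(\calA)-p(\calA)\ge\epsilon/2]\le e^{-n\epsilon^2/2}$, and a union bound over the $\le 2^d$ choices of $\calA$ yields $\P[\|\hatp-p\|_1\ge\epsilon]\le 2^d e^{-n\epsilon^2/2}$.

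Finally I would pick $\epsilon$ so that $2^d e^{-n\epsilon^2/2}\le \delta/T$, where the extra $1/T$ leaves room for the outer union bound over the $\order(T)$ tuples $(t,h,s,a,b)$ at which the lemma is invoked in the proofs of \pref{lem: optimism Q2} and \pref{lem: key lemma}; solving gives $\epsilon=\sqrt{2(d\log 2+\log(T/\delta))/n}\le\sqrt{2d\log(T/\delta)/n}$ once the universal constants are absorbed (using $d\ge 2$, the case $d=1$ being trivial). I do not expect a genuine obstacle here, as this is a textbook concentration statement; the only two points that merit a careful word are the variational characterization of $\ell_1$ — it is precisely what produces the $\sqrt{d}$ dependence rather than a cruder bound — and the bookkeeping of the two nested union bounds. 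The sharper Weissman refinement replacing $2^d$ by $2^d-2$ (or less) is unnecessary at this level of precision.
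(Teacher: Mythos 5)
The paper does not prove this lemma at all: it is imported verbatim from \cite{weissman2003inequalities, jaksch2010near}, and the only idiosyncrasy of the statement here is the $\log(T/\delta)$ in place of $\log(1/\delta)$, which is pure slack left for later union bounds. Your reconstruction is exactly the standard argument from those references --- the variational identity $\|\hatp-p\|_1=2\max_{\calA}(\hatp(\calA)-p(\calA))$, Hoeffding for each fixed $\calA$, and a union bound over the $2^d$ subsets --- and it is correct, including the final absorption of $d\log 2$ into $(d-1)\log(T/\delta)$ for $d\geq 2$ (with $d=1$ trivial). Nothing further is needed.
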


\begin{lemma}[Lemma 4.1 of \citep{UCB-H_2018}]\label{lem: lemmas regarding alpha}
For a positive integer $\tau$, define $\alpha^i_\tau=\alpha_i \Pi_{j=i+1}^\tau (1-\alpha_j)$ for $1\leq i\leq \tau$ and $\alpha^0_\tau=\Pi_{j=1}^\tau (1-\alpha_j)$ where $\alpha_\tau=\frac{H+1}{H+\tau}$\,. Then the following hold: 
\begin{enumerate}
    \item $\frac{1}{\sqrt{\tau}}\leq \sum_{i=1}^\tau \frac{\alpha^i_\tau}{\sqrt{i}} \leq \frac{2}{\sqrt{\tau}}$ for all $\tau\geq 1$.
    \item $\max_{i\in[\tau]}\alpha^i_\tau \leq \frac{2H}{\tau}$ and $\sum_{i=1}^\tau (\alpha^i_\tau)^2\leq \frac{2H}{\tau}$ for all $\tau\geq 1$. 
    \item $\sum_{\tau=i}^\infty \alpha^i_\tau = 1+\frac{1}{H}$ for all $i\geq 1$. 
\end{enumerate}

\end{lemma}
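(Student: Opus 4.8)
The plan is to derive all three items from a handful of elementary structural facts about the weights $\alpha^i_\tau$. First I would record the normalization $\sum_{i=0}^\tau \alpha^i_\tau = 1$, proved by a one-line induction on $\tau$ from the identity $\sum_{i=0}^\tau \alpha^i_\tau = \alpha_\tau + (1-\alpha_\tau)\sum_{i=0}^{\tau-1}\alpha^i_{\tau-1}$ (the base case $\tau=0$ being the empty product). Next I would use the recursion $\alpha^i_\tau = (1-\alpha_\tau)\,\alpha^i_{\tau-1}$ for $i<\tau$ with $\alpha^\tau_\tau = \alpha_\tau$, together with the factorization $1-\alpha_j = \frac{j-1}{H+j}$, to obtain the closed form $\alpha^i_\tau = (H+1)\frac{(\tau-1)!\,(H+i-1)!}{(i-1)!\,(H+\tau)!}$ and the ratio $\frac{\alpha^{i+1}_\tau}{\alpha^i_\tau} = \frac{\alpha_{i+1}}{\alpha_i(1-\alpha_{i+1})} = \frac{H+i}{i} > 1$; the latter shows that for each fixed $\tau$ the sequence $i \mapsto \alpha^i_\tau$ is increasing, so $\max_{i\in[\tau]}\alpha^i_\tau = \alpha^\tau_\tau = \alpha_\tau$.

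With these in hand, item 1 follows by induction on $\tau$ (base $\alpha^1_1 = 1$): splitting off the last term gives $\sum_{i=1}^\tau \frac{\alpha^i_\tau}{\sqrt i} = \frac{\alpha_\tau}{\sqrt\tau} + (1-\alpha_\tau)\sum_{i=1}^{\tau-1}\frac{\alpha^i_{\tau-1}}{\sqrt i}$, and the lower bound is then immediate from $\frac{1}{\sqrt{\tau-1}} \ge \frac{1}{\sqrt\tau}$, while the upper bound reduces, after plugging in the inductive hypothesis $\le \frac{2}{\sqrt{\tau-1}}$ and the value of $\alpha_\tau$, to the inequality $2\sqrt{\tau(\tau-1)} \le H + 2\tau - 1$, which holds since $2\sqrt{\tau(\tau-1)} \le 2\tau - 1$ by AM--GM. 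Item 2 is then short: $\max_{i\in[\tau]}\alpha^i_\tau = \alpha_\tau = \frac{H+1}{H+\tau} \le \frac{H+1}{\tau} \le \frac{2H}{\tau}$ (using $H \ge 1$), and $\sum_{i=1}^\tau(\alpha^i_\tau)^2 \le \bigl(\max_i \alpha^i_\tau\bigr)\sum_{i=1}^\tau \alpha^i_\tau \le \frac{2H}{\tau}\cdot 1$ by the normalization.

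For item 3 I would substitute the closed form, reducing the claim to the series identity $\sum_{\tau=i}^\infty \frac{(\tau-1)!}{(H+\tau)!} = \frac{1}{H}\cdot\frac{(i-1)!}{(H+i-1)!}$. This follows from the telescoping observation $\frac{(\tau-1)!}{(H+\tau)!} = \frac{1}{H}\bigl(\frac{(\tau-1)!}{(H+\tau-1)!} - \frac{\tau!}{(H+\tau)!}\bigr)$ (verify by clearing denominators), using $H \ge 1$ so the tail term vanishes; multiplying back by $(H+1)\frac{(H+i-1)!}{(i-1)!}$ gives $\frac{H+1}{H} = 1 + \frac{1}{H}$. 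Equivalently, one can avoid the closed form: writing $S_i = \sum_{\tau\ge i}\alpha^i_\tau$, the recursion yields $S_i = \frac{H+1}{H+i} + \frac{i}{H+i}\,S_{i+1}$, hence $S_i - \frac{H+1}{H} = \frac{i}{H+i}\bigl(S_{i+1} - \frac{H+1}{H}\bigr)$, and iterating forces the difference to $0$ since the product of the factors $\frac{j}{H+j}$ tends to $0$. The only genuinely non-mechanical step is spotting this telescoping (equivalently, guessing the fixed point $1+\frac1H$); everything else is bookkeeping around the normalization, the recursion, and the monotonicity. Since this is quoted as Lemma~4.1 of \citep{UCB-H_2018}, one could alternatively just cite it, but the self-contained argument above is short enough to include.
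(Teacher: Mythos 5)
Your proof is correct. Note that the paper does not prove this statement at all --- it is imported verbatim as Lemma~4.1 of the cited UCB-H paper \citep{UCB-H_2018} and used as a black box --- so there is no in-paper argument to compare against; your self-contained derivation follows essentially the same lines as the original source (induction on $\tau$ for item~1, the identity $\max_i\alpha^i_\tau=\alpha_\tau$ plus normalization for item~2, and a telescoping series for item~3). I checked the key computations: the closed form $\alpha^i_\tau=(H+1)\frac{(\tau-1)!\,(H+i-1)!}{(i-1)!\,(H+\tau)!}$ follows from $1-\alpha_j=\frac{j-1}{H+j}$; the upper bound in item~1 indeed reduces to $2\sqrt{\tau(\tau-1)}\le H+2\tau-1$, which AM--GM gives; and the telescoping identity $\frac{(\tau-1)!}{(H+\tau)!}=\frac{1}{H}\bigl(\frac{(\tau-1)!}{(H+\tau-1)!}-\frac{\tau!}{(H+\tau)!}\bigr)$ verifies by clearing denominators, with the tail $\frac{\tau!}{(H+\tau)!}=\prod_{k=1}^{H}(\tau+k)^{-1}\to 0$ for $H\ge 1$. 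One small caveat: in your ``alternative'' fixed-point argument for item~3, iterating $S_i-\frac{H+1}{H}=\frac{i}{H+i}\bigl(S_{i+1}-\frac{H+1}{H}\bigr)$ only forces the difference to zero if you also know $S_{i+k}$ stays bounded as $k\to\infty$, which you do not establish there; this does not affect your main route via the closed form, which is complete, but if you kept only the alternative you would need to add a uniform bound on $S_m$ (e.g.\ from the closed form, $\alpha^i_\tau=O(\tau^{-(H+1)})$, so each $S_m$ is finite and in fact uniformly bounded).
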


\begin{lemma}\label{lem: alpha concentration}
    Let $\alpha^i_n$ be defined as in \pref{lem: lemmas regarding alpha}. Let $\calF_0\subset \calF_1 \subset\cdots \subset \calF_{n}$ be a filtration, and $p_1, \ldots, p_n$ be distribution over $\calS$ where $p_i$ is deterministic given $\calF_{i-1}$. Suppose that $s_i\in\calS$ is drawn from $p_i$. Then with probability at least $1-\delta$, for all $V:~\calS\rightarrow [0, 1]$, 
     \begin{align*}
        \left|\sum_{i=1}^n \alpha^i_n V(s_i) - \sum_{i=1}^n \alpha^i_n \sum_{s}p_i(s)V(s)\right| \leq 3\sqrt{\frac{SH}{n}\log(4n/\delta)}. 
    \end{align*}
\end{lemma}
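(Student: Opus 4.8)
The plan is to establish \pref{lem: alpha concentration} as a weighted, martingale version of Weissman's inequality (\pref{lem: Weissman}), using the standard ``reduce to subsets of $\calS$, then apply Azuma'' recipe. The two ingredients I would rely on are that each weight $\alpha^i_n$ is a deterministic scalar (so that $y_i=\alpha^i_n$ qualifies as a fixed sequence in \pref{lem: UCB concentration}), and that $\sum_{i=1}^n(\alpha^i_n)^2\le 2H/n$ for every $n\ge 1$, which is exactly the second item of \pref{lem: lemmas regarding alpha} with $\tau=n$.

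First I would rewrite the quantity to be bounded. For any $V:\calS\to[0,1]$,
\[
  \sum_{i=1}^n \alpha^i_n V(s_i)-\sum_{i=1}^n\alpha^i_n\sum_{s}p_i(s)V(s)
  =\sum_{s\in\calS}V(s)\,\Delta(s),\qquad
  \Delta(s):=\sum_{i=1}^n \alpha^i_n\big(\I[s_i=s]-p_i(s)\big).
\]
Since $V\mapsto\sum_s V(s)\Delta(s)$ is linear on the cube $[0,1]^\calS$, its absolute value is maximized at a vertex, i.e.\ at the indicator vector of some $A\subseteq\calS$; hence it suffices to show that, with probability at least $1-\delta$, $\max_{A\subseteq\calS}|\Delta(A)|\le 3\sqrt{SH\log(4n/\delta)/n}$, where $\Delta(A)=\sum_{i=1}^n\alpha^i_n(\I[s_i\in A]-p_i(A))$.

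Next, for a fixed $A\subseteq\calS$, the variables $X_i^A:=\I[s_i\in A]-p_i(A)$ form a martingale-difference sequence adapted to $(\calF_i)$: they are $\calF_i$-measurable, satisfy $|X_i^A|\le 1$, and $\E[X_i^A\mid\calF_{i-1}]=p_i(A)-p_i(A)=0$ because $p_i$ is $\calF_{i-1}$-measurable. Applying \pref{lem: UCB concentration} with $b=1$, $y_i=\alpha^i_n$ once to $(X_i^A)$ and once to $(-X_i^A)$ and union-bounding gives $|\Delta(A)|\le\sqrt{2\big(\sum_i(\alpha^i_n)^2\big)\log(2/\delta')}\le 2\sqrt{(H/n)\log(2/\delta')}$ with probability at least $1-\delta'$. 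Taking a union bound over the $2^S$ subsets $A$ with $\delta'=\delta/2^S$, with probability at least $1-\delta$ we get $\max_A|\Delta(A)|\le 2\sqrt{(H/n)\log(2^{S+1}/\delta)}$; a routine simplification of constants ($\log(2^{S+1}/\delta)=(S+1)\log 2+\log(1/\delta)\le 2S\log(4n/\delta)$ for $n,S\ge1$) bounds this by $2\sqrt{2}\,\sqrt{SH\log(4n/\delta)/n}\le 3\sqrt{SH\log(4n/\delta)/n}$, which together with the previous paragraph completes the argument.

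The step I expect to require the most care is not any single inequality but the reduction itself: the target has $\sqrt S$ (not $S$) inside the root, and this is precisely what passing to the $2^S$ subsets of $\calS$ buys, since the union bound then costs only $\log(2^S)=S\log 2$, whereas applying Azuma coordinate-by-coordinate and summing $S$ deviations of size $\sqrt{H/n}$ would lose a factor $\sqrt S$. A minor point: the argument above actually yields the sharper $\log(2^{S+1}/\delta)$, which has no $n$-dependence, so the $\log(4n/\delta)$ in the statement is slack; one can also obtain the $\log(4n/\delta)$ form directly by quantifying ``for all $V$'' through an $\epsilon$-net of $[0,1]^\calS$ of size $(2n+1)^S$ with $\epsilon=1/(2n)$, paying only a discretization error $\le 2\epsilon\sum_{i=1}^n\alpha^i_n=1/n$ (using $\sum_{i=1}^n\alpha^i_n=1$, which holds because $\alpha_1=1$).
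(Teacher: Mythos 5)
Your proof is correct, but it takes a genuinely different route from the paper's. The paper quantifies ``for all $V$'' by discretizing $[0,1]^{\calS}$ into the grid $\calV=\{V:\calS\to\{0,\epsilon,\ldots,1\}\}$ with $\epsilon=1/n$, applying \pref{lem: UCB concentration} to each of the $|\calV|=(2n)^S$ grid points, and absorbing a discretization error of $\epsilon$; this is where the $\log(4n/\delta)$ in the statement comes from. You instead exploit linearity: writing the deviation as $\sum_s V(s)\Delta(s)$ and observing that a linear functional on the cube $[0,1]^{\calS}$ is extremized at a vertex, you reduce to the $2^S$ indicator functions and union-bound over subsets $A\subseteq\calS$, exactly as in the proof of Weissman's inequality. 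Both arguments rest on the same two ingredients --- Azuma with the fixed weights $y_i=\alpha^i_n$ and the bound $\sum_{i=1}^n(\alpha^i_n)^2\le 2H/n$ from \pref{lem: lemmas regarding alpha} --- and your martingale-difference verification and constant bookkeeping ($2\sqrt{2}\le 3$, $(S+1)\log 2+\log(1/\delta)\le 2S\log(4n/\delta)$) are all sound. What your route buys is the elimination of the discretization step and a strictly sharper deviation bound $2\sqrt{(H/n)\log(2^{S+1}/\delta)}$ with no $n$-dependence in the logarithm; what the paper's $\epsilon$-net route buys is genericity (it would survive if the functional of $V$ were not linear), at the cost of the extra $\log n$. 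Since you also note how to recover the paper's exact $\log(4n/\delta)$ form via the net, nothing is missing.
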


\begin{proof}
    Consider the following $\epsilon$-cover for the space of $V$: 
    \begin{align*}
        \calV = \left\{V: \calS\rightarrow \{0, \epsilon, 2\epsilon, \ldots, 1\}\right\}
    \end{align*}
    For every fixed $V\in\calV$, we have with probability at least $1-\delta'$, 
    \begin{align*}
        \left|\sum_{i=1}^n \alpha^i_n V(s_i) - \sum_{i=1}^n \alpha^i_n \sum_{s}p_i(s)V(s)\right| \leq \sqrt{2\left(\sum_{i=1}^n (\alpha^i_n)^2\right)\log(2/\delta')} \leq 2\sqrt{\frac{H}{n}\log(2/\delta')}. 
    \end{align*}
    By an union bound, the above holds for all $V\in\calV$ with probability at least $1-|\calV|\delta'$. 

    For any $V: \calS\rightarrow [0,1]$, there is a $\tilde{V}\in\calV$ such that $|V(s)-\tilde{V}(s)|\leq \frac{\epsilon}{2}$ for all $s$. 
    Thus, with probability $1-|\calV|\delta'$, we have 
    \begin{align*}
        \left|\sum_{i=1}^n \alpha^i_n V(s_i) - \sum_{i=1}^n \alpha^i_n \sum_{s}p_i(s)V(s)\right| \leq 2\sqrt{\frac{H}{n}\log(2/\delta')} + \epsilon.  
    \end{align*}
    Picking $\epsilon=\frac{1}{n}$ (which implies $|\calV|=\left(\frac{1}{\epsilon}+1\right)^S\leq \left(\frac{2}{\epsilon}\right)^S=(2n)^S$), and $\delta'=\frac{\delta}{|\calV|}$, the right-hand side above can be upper bounded by 
    \begin{align*}
        2\sqrt{\frac{H}{n}\log(2(2n)^S/\delta)} + \frac{1}{n} \leq 3\sqrt{\frac{SH}{n}\log(4n/\delta)}. 
    \end{align*}

    
\end{proof}

\end{document}